\pdfoutput=1
\documentclass{article}
\usepackage{preprint,times}

\usepackage{amsmath,amsfonts,bm}
\newcommand{\R}{\mathbb{R}}
\def\eqref#1{equation~\ref{#1}}
\usepackage{hyperref}
\usepackage{amssymb}
\usepackage{amsthm}
\theoremstyle{plain}
\newtheorem{theorem}{Theorem}[section]
\newtheorem{proposition}[theorem]{Proposition}
\theoremstyle{remark}
\newtheorem{remark}[theorem]{Remark}
\usepackage{dsfont} % blackboard-bold digits for the indicator \mathds{1}
\usepackage{algorithm}
\usepackage{algorithmic}
\usepackage{wrapfig}
\usepackage{booktabs}
\usepackage{graphicx}
\usepackage{array}
\usepackage{colortbl}
\usepackage{xcolor}
\definecolor{sparkbest}{rgb}{0.851,0.898,0.808}
\definecolor{rowshade}{gray}{0.94}
\definecolor{designsym}{rgb}{0.964,0.897,0.767}
\definecolor{thmshade}{gray}{0.95}
\usepackage[framemethod=none]{mdframed}
\mdfdefinestyle{thmbox}{backgroundcolor=thmshade,linewidth=0pt,innertopmargin=5pt,innerbottommargin=5pt,innerleftmargin=6pt,innerrightmargin=6pt,skipabove=8pt,skipbelow=8pt}
\surroundwithmdframed[style=thmbox]{theorem}
\surroundwithmdframed[style=thmbox]{proposition}
\surroundwithmdframed[style=thmbox]{remark}

\newcommand{\se}{\mathfrak{se}(3)}
\newcommand{\xih}{\hat{\bm{\xi}}}
\newcommand{\prd}{\mathrm{pred}}
\newcommand{\grt}{\mathrm{gt}}
\newcommand{\Jp}{J_{\prd}}
\newcommand{\Jg}{J_{\grt}}

\title{ArticulateArena: \\ A Metric for Articulated Kinematics}

\author{Yumeng He$^{1}$, Yongfei She$^{1,2}$, Huanyu Chen$^{2}$, Chun Yuan$^{3}$, Peihao Li$^{4,5}$, \\
\bfseries Joseph Masterjohn$^{6}$, Yin Yang$^{3}$, Ying Jiang$^{1}$, Chenfanfu Jiang$^{1}$ \\[4pt]
$^{1}$UCLA \quad
$^{2}$USC \quad
$^{3}$University of Utah \quad
$^{4}$Envora \quad
$^{5}$UCB \quad
$^{6}$Toyota Research Institute
}

\hypersetup{
  pdftitle={ArticulateArena: A Metric for Articulated Kinematics},
  pdfauthor={Yumeng He, Yongfei She, Huanyu Chen, Chun Yuan, Peihao Li, Joseph Masterjohn, Yin Yang, Ying Jiang, Chenfanfu Jiang},
  pdfsubject={Articulated object reconstruction, kinematic metric, dataset},
  pdfkeywords={articulated objects, kinematics, metric, screw theory, dataset},
}

\begin{document}

\maketitle
\vspace{-10pt}
\begin{abstract}
\vspace{-5pt}
Modern methods reconstruct or generate simulation-ready articulated objects, predicting not only their geometry but also how their parts are connected and allowed to move. Evaluating the geometry is straightforward, but evaluating the predicted articulation is not, because articulation specifies a motion rather than a shape, and there is no agreed distance between two motions. More specifically, existing protocols score joint type, axis direction, origin, and motion limits separately, although these parameters jointly describe a single physical motion, and the same motion can be written as different parameter values. As a result, a joint can score maximally wrong against an equivalent encoding of itself, and several component errors are ill-conditioned or undefined exactly where predictions become accurate. We propose \textbf{ArticulateArena}, a representation-invariant counterpart of Chamfer distance for articulation that compares the motions one-DOF joints induce rather than the parameters that encode them. It represents each joint by the unordered pair of its Lie-algebra endpoint twists, and we prove that the resulting quotient distance is a metric. It unifies fixed, revolute, prismatic, and helical joints, brings continuous joints into the same score through a compactification, and reads as the RMS motion of the moving part in meters when weighted by its mass distribution. A motion-aware tree edit distance lifts the metric to full kinematic trees, pricing structural errors such as spurious or missing joints in the same motion units as joint errors, and for a fixed inner product it remains a metric on trees up to relabeling. Alongside the metric we release \textbf{ArticulateArena-20K}, a new suite of 19{,}977 articulated objects with verified kinematics, and we re-evaluate published reconstruction methods on it under the new metric. Project page: \url{https://heyumeng.com/ArticulateArena-web/}
\end{abstract}

\vspace{-10pt}

\section{Introduction}
\label{sec:intro}
\vspace{-5pt}

Robots are trained and evaluated in simulation, and simulation benefits from running fast, ideally at many times real time. One way to speed it up is to simplify the world into rigid links with compact collision geometry, connected by joints that the simulator executes as constraints rather than by resolving collisions between objects. The constraint also stands in for the physical mechanism, so the hardware that realizes a joint never has to be modeled or held together by contact forces. Articulated objects are the assets that make this simplification possible, and a rapidly growing body of work therefore reconstructs them automatically, recovering part geometry and joint parameters, up to a full URDF with its kinematic tree of links, from several observed articulation states \citep{jiang2022ditto,liu2023paris,weng2024digitaltwin,guo2025articulatedgs,lin2025splart,wu2026reartgs} or from one or a few images \citep{heppert2023carto,chen2024urdformer,mandi2025real2code,le2025articulate,yuan2025larm,li2026art,wang2026artico,wu2026urdfanything}, while implicit shape representations with an articulation code recover joint angles without a kinematic model \citep{mu2021asdf}. Complementary systems infer articulation from a static 3D mesh, including ArtLLM, Particulate, and Articulate AnyMesh \citep{wang2026artllm,li2026particulate,qiu2025articulate}, or generate complete articulated assets, including Articraft and SPARK \citep{zhou2026articraft,he2025spark}. Most of this progress is reported through one evaluation template. For each joint it compares the predicted axis direction, the joint origin, the joint type, and, in the most complete protocols, the motion limits, four errors in four different units whose only aggregate is a thresholded success rate. The recovered geometry, by contrast, is judged by the agreed-upon Chamfer distance and F-score \citep{fan2017pointset,knapitsch2017tanks}. The recovered motion has no comparable standard. As a consequence, an asset that moves exactly like the real object can be reported as a failure, one that misses by a hair is indistinguishable from one that is wildly wrong, and the numbers of different papers cannot be compared, because each protocol scores its own subset of the components (Table~\ref{tab:protocols}).

We argue that this template, rather than the reconstruction models, is now the bottleneck of the field. Its difficulty is structural. The parameters of a joint describe one motion only together, so scoring them in isolation misreads the joint. Worse, the same motion can be written down as different parameter values, so a parameter difference need not be a motion difference at all. Reversing the axis direction and negating the limits, for instance, describes the same hinge, yet the two encodings disagree in every component. A score built on the parameters therefore grades the code instead of the motion. It also judges too coarsely, since a joint whose single component slightly misses its tolerance is discarded as completely wrong rather than scored by how far the recovered motion actually is from the truth. Table~\ref{tab:failures} and Figure~\ref{fig:perturbation} make these failures concrete on simple joints, where a score in use charges a nonzero error between a joint and itself, loses conditioning exactly where a prediction becomes accurate, or jumps across a joint-type boundary that is physically continuous (\S\ref{sec:advantages}). What one wants to measure is how far the recovered motion is from the true one, and no score in use is a distance on motions.

Our solution is to compare motions, not parameters. Classical screw theory represents the motion of a one-DOF joint by a single vector, a \emph{twist}, together with the interval the joint can travel \citep{murray1994mathematical}. We encode a joint by the unordered pair of twists at the two ends of that travel. This pair removes the representational redundancy entirely, because every way of writing down the same joint yields the same pair, up to swapping its two elements. The distance between two joints, $E$, is then simply the distance between their endpoint pairs, which we prove is a metric (\S\ref{sec:joint}). Because it compares motions directly, it treats a nearly welded revolute joint as nearly fixed instead of categorically wrong, and it stays well-behaved where the per-component scores blow up. The construction then extends along three independent axes. The norm behind $E$ is a choice, $E_\alpha$ under a dimensionless norm that treats every joint alike, or $E_B$ under a physical one that weights errors by how far they actually move the part, in meters. A compactification $E^{\phi}$ brings continuous joints into the same score instead of leaving them unscorable. A lift $E^{\mathrm{tree}}$ to kinematic trees prices structural errors, such as a spurious or missing joint, in the same motion units as joint errors. Together these pieces give articulation what Chamfer distance gives geometry, a single standard measure of error.

In detail, we make the following contributions:

\noindent (i) We define $E$, the quotient distance between the unordered pairs of Lie-algebra endpoint twists of one-DOF joints, with its compactified extension $E^{\phi}$ for continuous joints and its lift $E^{\mathrm{tree}}$ to kinematic trees, and prove that both are metrics for a fixed inner product (\S\ref{sec:joint}).

\noindent (ii) We show concretely which metric axiom each per-component score in current use violates and where each degenerates, then re-evaluate published methods under the quotient construction, which induces different orderings on their outputs (\S\ref{sec:baselines}).

\noindent (iii) We introduce \textbf{ArticulateArena-20K}, a standard suite spanning 15 supercategories, 628 categories, and 19{,}977 articulated objects with ground-truth kinematics. The library covers all five one-DOF joint types, including the continuous joints that prior protocols cannot score (\S\ref{sec:dataset}).

\vspace{-10pt}
\section{Related work}
\label{sec:related}
\vspace{-10pt}
\textbf{Articulated objects.} Datasets and part-level perception supply the representations on which the reconstruction and generation systems cited in \S\ref{sec:intro} build. PartNet and PartNet-Mobility provide part annotations and simulator-ready joints \citep{mo2019partnet,xiang2020sapien}, and perception methods predict motion parts, openable parts, actionable segments, or category-level part poses and articulation states \citep{wang2019shape2motion,jiang2022opd,geng2023gapartnet,liu2023partslip,li2020ancsh,weng2021captra}, and factor-graph tracking recovers joint parameters from observed motion without category priors \citep{heppert2022category}. Whatever their input, the reconstruction systems score recovered kinematics with the per-component template that \S\ref{sec:prev-eval} examines. We do not propose another reconstructor, but a representation-invariant way to compare the articulated models these systems produce.

\textbf{Rigid-motion metrics and structured matching.} Coordinate-independent norms and metrics for rigid displacements and rotations \citep{kazerounian1992object,park1995distance,huynh2009metrics} compare poses or individual displacements, not joints with bounded or unbounded motion ranges, and the componentwise scores above make no use of them. We instead compare the full admissible motion encoded by each joint. At the object level, tree-edit distances \citep{zhang1989simple,pawlik2016tree} price topology errors and assignment handles part relabeling \citep{kuhn1955hungarian}; our lift connects them to the joints through a single motion-priced edit cost, rather than treating joint parameters as unrelated fields.

\vspace{-5pt}
\section{Preliminaries}
\label{sec:problem}
\vspace{-5pt}
\begin{figure}[t]
\centering
\includegraphics[width=\textwidth]{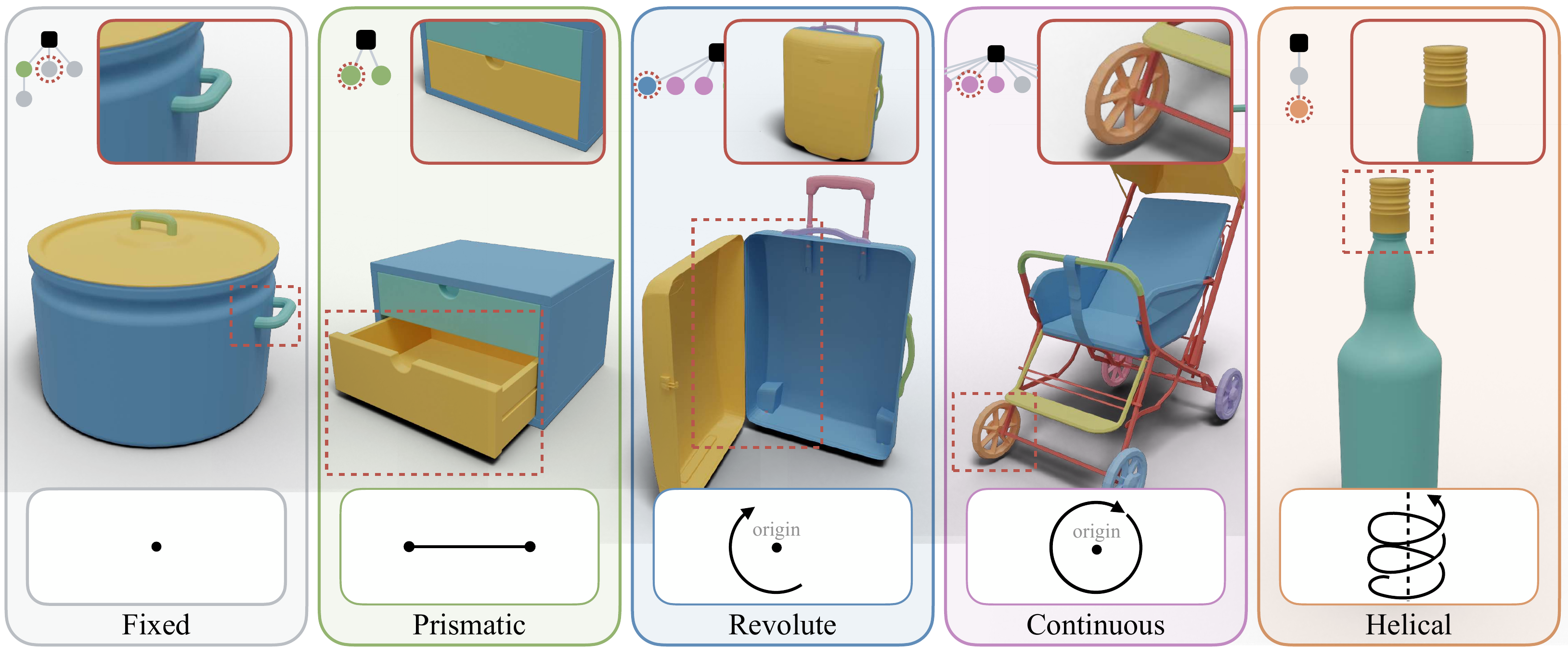}
\vspace{-20pt}
\caption{Categories of joints in articulated objects: fixed, prismatic, revolute, continuous, and helical. Each panel shows an object with one part displaced along its joint (dashed box), an inset of the same part at rest, the kinematic tree with that joint highlighted, and an icon of the motion the joint allows. These five one-DOF categories are exactly the joints our metric covers.}
\label{fig:type}
\vspace{-10pt}
\end{figure}

\subsection{Articulated objects}
\label{sec:articulated}
\vspace{-5pt}

An articulated object is a set of rigid links connected by joints into a kinematic tree, usually encoded as a URDF. Each link carries a mesh or a set of collision shapes, and each non-root link is attached to its parent by exactly one joint. A one-DOF joint (Appendix~\ref{app:onedof}) is described by four pieces of information: a \emph{type} $t$ (Figure~\ref{fig:type}); an \emph{axis} $\bm{a}$, the unit direction of rotation or translation; an \emph{origin} $\bm{o}$, a point the axis passes through; and a \emph{limit} $l = [l^-, l^+]$, the range of the joint coordinate. A revolute joint with $l^+ - l^- = \infty$ is \emph{continuous}. The axis and origin fix the \emph{line} along which the part moves, the limits fix the \emph{extent} of the motion, and the type says which kind of motion it is. Together they form the tuple
\begin{equation}
\label{eq:tuple}
J \;=\; (t,\; \bm{a},\; \bm{o},\; l),
\end{equation}
which a reconstruction method predicts for every joint and an evaluation protocol must score against the ground truth, $\Jp$ against $\Jg$.

\begin{wraptable}[16]{l}{0.41\textwidth}
\vspace{-5pt}
\centering
\small
\setlength{\abovecaptionskip}{0pt}
\setlength{\belowcaptionskip}{4pt}
\caption{Notation used throughout the paper.}
\label{tab:notation}
\begin{tabular}{ll}
\toprule
Symbol & Meaning \\
\midrule
$J$ & a one-DOF joint \\
$t$ & joint type \\
$\bm{a}$ & joint axis (unit) \\
$\bm{o}$ & joint origin \\
$l = [l^-, l^+]$ & limit interval \\
$q$ & joint coordinate (rad or m) \\
$\bm{\xi} = (\bm{\omega}, \bm{v})$ & screw twist \\
$\bm{\omega}$ & rotational part of $\bm{\xi}$ \\
$\bm{v}$ & translational part of $\bm{\xi}$ \\
$\xih$ & matrix form of $\bm{\xi}$ \\
$\se$ & Lie algebra of $SE(3)$ \\
\bottomrule
\end{tabular}
\end{wraptable}

\subsection{Previous evaluation protocols}
\label{sec:prev-eval}

Existing reconstruction and generation methods evaluate predicted kinematics through one componentwise template, comparing the tuple $J$ of \eqref{eq:tuple} entry by entry. The most common form, of which the other protocols in use score subsets (Table~\ref{tab:protocols}, Appendix~\ref{app:protocols}), scores the following components \citep{le2025articulate}; URDFormer, one of the image-to-URDF pipelines above, reports only object- and part-level category, parent, and discretized position accuracy instead. They carry different units and cannot be summed, so the protocol thresholds each one and reports a \emph{success rate}, the fraction of joints whose every component falls within its own tolerance. Appendix~\ref{app:succ} states this aggregation precisely and discusses what it discards, and \S\ref{sec:advantages} works through the failures of each component and shows how our metric resolves them.

\noindent (i) \emph{Joint type error}, a binary measure of whether the predicted joint type matches the ground truth, $e_{\mathrm{type}} = \mathbf{1}[t_{\prd} \neq t_{\grt}]$.

\noindent (ii) \emph{Joint axis error}, the angle between the predicted and ground-truth joint axes up to sign, $e_{\mathrm{axis}} = \arccos\big(\lvert \bm{a}_{\prd} \cdot \bm{a}_{\grt}\rvert / (\lVert \bm{a}_{\prd}\rVert_2\,\lVert \bm{a}_{\grt}\rVert_2)\big) \in [0,\pi/2]$, where $\bm{a}$ is the axis of rotation for revolute joints or of translation for prismatic joints.

\noindent (iii) \emph{Joint origin error}, the shortest distance between the joint axes for revolute joints and the Euclidean distance between origins for prismatic joints, with $\bm{p} = \bm{o}_{\prd} - \bm{o}_{\grt}$ given by $e_{\mathrm{orig}}^{\mathrm{rev}} = \lvert\, \bm{p} \cdot (\bm{a}_{\prd} \times \bm{a}_{\grt})\,\rvert / \lVert \bm{a}_{\prd} \times \bm{a}_{\grt}\rVert$ and $e_{\mathrm{orig}}^{\mathrm{pris}} = \lVert \bm{p} \rVert_2$.

\noindent (iv) \emph{Joint limit error}, comprising two sub-components. With the motion vectors $\bm{m}_j = \bm{a}_j\,(l_j^+ - l_j^-)$ for $j \in \{\prd, \grt\}$, where $l_j^+$ and $l_j^-$ are the upper and lower joint limits, the motion-range and motion-direction differences are $e_{\mathrm{lim}}^{\mathrm{range}} = \lVert \bm{m}_{\prd} - \bm{m}_{\grt}\rVert_2$ and $e_{\mathrm{lim}}^{\mathrm{dir}} = 1 - \bm{m}_{\prd} \cdot \bm{m}_{\grt} / (\lVert \bm{m}_{\prd}\rVert_2\,\lVert \bm{m}_{\grt}\rVert_2)$, the latter ranging from $0$ (identical direction) through $1$ (perpendicular) to $2$ (opposite).

\vspace{-5pt}
\section{The quotient metric}
\label{sec:joint}
\vspace{-5pt}
\begin{figure}[t]
\centering
\includegraphics[width=\textwidth]{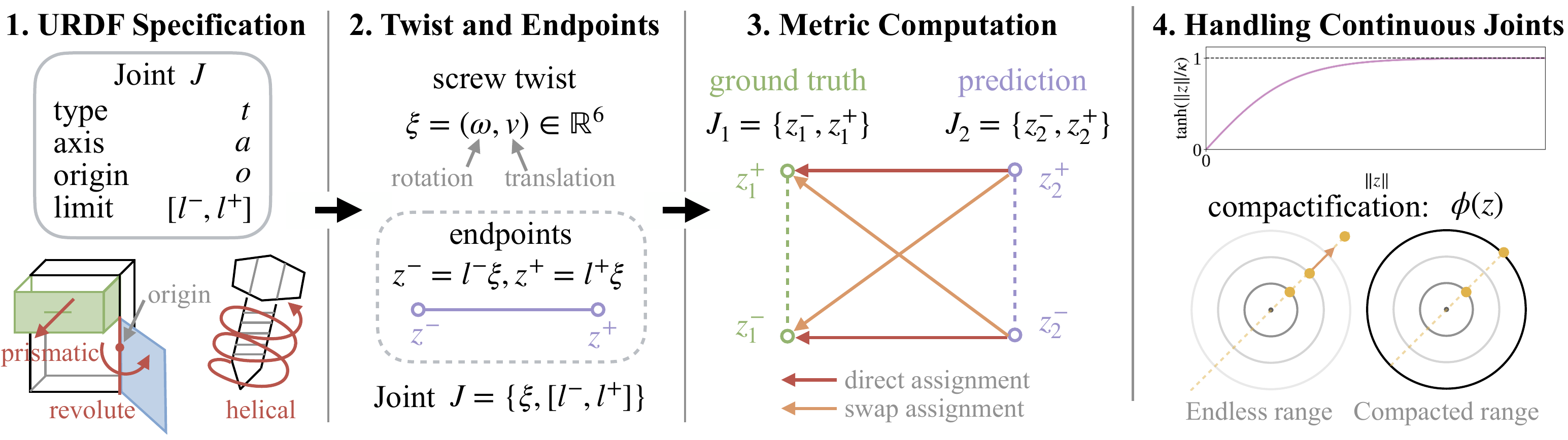}
\vspace{-20pt}
\caption{From a URDF joint to our metric. (1) A joint is specified by type, axis, origin, and limits. (2) The twist $\bm{\xi} = (\bm{\omega}, \bm{v})$ absorbs type, axis, and origin, and the limits give the endpoint pair $\bm{z}^\pm = l^\pm\bm{\xi}$. (3) $E$ compares two endpoint pairs under the better of the direct and the swapped assignment. (4) Radial compactification maps unbounded ranges into the unit ball, so continuous joints become boundary points.}
\label{fig:pipeline}
\vspace{-5pt}
\end{figure}
We define a joint by a single \emph{twist} $\bm{\xi} = (\bm{\omega}, \bm{v}) \in \R^6$ together with its limit interval. The twist will absorb the type, the axis, and the origin at once. Here $\bm{\omega}$ is the rotational part and $\bm{v}$ the translational part of an infinitesimal rigid motion. The hat map writes $\bm{\xi}$ as the matrix $\xih = \big(\begin{smallmatrix} \hat{\bm{\omega}} & \bm{v} \\ 0 & 0 \end{smallmatrix}\big) \in \se$, with $\hat{\bm{\omega}}$ the skew-symmetric matrix of $\bm{\omega}$, and the matrix exponential $\exp(q\,\xih) \in SE(3)$ is the rigid motion reached by flowing along $\bm{\xi}$ for $q$ units \citep{murray1994mathematical}. A drawer slides along a prismatic twist, a door swings about a revolute twist, and a helical joint couples the two motions along one axis. Figure~\ref{fig:pipeline} summarizes the construction of this section. We normalize $\bm{\xi}$ to a \emph{unit} screw, with $\lVert\bm{\omega}\rVert = 1$ when $\bm{\omega} \neq 0$ and $\lVert\bm{v}\rVert = 1$ otherwise, and call $q$ the \emph{joint coordinate}, the signed amount of motion away from the rest state. This normalization makes $q$ a rotation angle in radians when $\bm{\omega} \neq 0$ (revolute, helical) and a translation length in meters when $\bm{\omega} = 0$ (prismatic), and the limits $l^\pm$ carry the same unit. The twist packs rotation and translation into one object, and the tuple parameters of \S\ref{sec:articulated} embed as special cases:
\begin{equation}
\label{eq:embed}
\bm{\xi} \;=\; (\bm{\omega},\, \bm{v}), \qquad
\bm{\xi}_{\mathrm{rev}} \;=\; (\bm{a},\; \bm{o} \times \bm{a}), \qquad
\bm{\xi}_{\mathrm{pris}} \;=\; (0,\; \bm{a}), \qquad
\bm{\xi}_{\mathrm{hel}} \;=\; (\bm{a},\; \bm{o} \times \bm{a} + h\,\bm{a}),
\end{equation}
with $\bm{a}$ the axis, $\bm{o}$ the origin, and $h$ the pitch. The moment $\bm{o} \times \bm{a}$ encodes the axis position without naming a point on it (Appendix~\ref{app:twists} derives these embeddings).

We define a one-DOF joint by its motion, the curve of child-link placements $g(q) = \exp(q\,\xih)\,g_0$ in $SE(3)$, $q \in [l^-, l^+]$, with $\bm{\xi} \in \se$ a unit screw twist and the anchor $g_0$ the placement of the link at $q = 0$, the state in which the URDF stores its meshes. Prediction and ground truth describe the same object in the same stored state, so the anchor is shared data whose accuracy the shape metrics judge, and the kinematic content of the joint is the pair
\begin{equation}
\label{eq:jointdef}
J \;=\; \big(\bm{\xi},\; [l^-, l^+]\big) \;\in\; \se \times \overline{\R}^2,
\end{equation}
replacing the componentwise tuple $(t, \bm{a}, \bm{o}, l)$ of \S\ref{sec:problem}. The interval $[l^-, l^+]$ is extended-real, so that continuous joints have $l^\pm = \pm\infty$. Equivalently we represent the joint by its unordered pair of \emph{endpoint twists}
\begin{equation}
\label{eq:endpoints}
\{\bm{z}^-, \bm{z}^+\} = \{l^-\,\bm{\xi},\; l^+\,\bm{\xi}\} \subset \se,
\end{equation}
which keeps limits as first-class data and has no log/winding ambiguity. Re-zeroing the coordinate is not a free re-encoding, because $q \mapsto q - \delta$ moves the anchor to $\exp(\delta\,\xih)\,g_0$, which the stored state pins down. One representational symmetry remains (Proposition~\ref{prop:same_motion}). Reversing the screw orientation swaps the endpoints, and the unordered pair absorbs it. The revolute exponential is $2\pi$-periodic, so limits shifted by a full turn generate the same placements, but we keep the joint coordinate as part of the specification and do not quotient this shift, which keeps $E$ continuous in the limits (Remark~\ref{rem:anchor}). Two anchored encodings then have the same endpoint pair iff they agree up to orientation reversal, and the proposed metric is the quotient distance of flat $\se\times\se$ under the swap:
\begin{equation}
\label{eq:E}
\boxed{\;E(J_1,J_2) \;=\; \min\Big\{ \lVert \bm{z}^-_{1}{-}\bm{z}^-_{2}\rVert^2 + \lVert \bm{z}^+_{1}{-}\bm{z}^+_{2}\rVert^2,\;\; \lVert \bm{z}^-_{1}{-}\bm{z}^+_{2}\rVert^2 + \lVert \bm{z}^+_{1}{-}\bm{z}^-_{2}\rVert^2 \Big\}^{1/2}\;}
\end{equation}
\begin{wraptable}[11]{l}{0.50\textwidth}
\centering
\small
\setlength{\abovecaptionskip}{0pt}
\setlength{\belowcaptionskip}{4pt}
\caption{The design space. An inner product (rows) combined with a limit treatment or the tree lift (columns). Shaded entries mark the reported variant.}
\label{tab:design}
\setlength{\tabcolsep}{4pt}
\begin{tabular}{@{}lccc@{}}
\toprule
 & Finite & Unbounded & Trees \\
\midrule
Split norm $\lVert\cdot\rVert_\alpha$ & $E_\alpha$ & \colorbox{designsym}{$E_\alpha^{\phi}$} & \colorbox{designsym}{$E_\alpha^{\phi,\mathrm{tree}}$} \\[3pt]
Kinetic norm $\lVert\cdot\rVert_B$ & \colorbox{designsym}{$E_B$} & \colorbox{designsym}{$E_B^{\phi}$} & $E_B^{\phi,\mathrm{tree}}$ \\
\bottomrule
\end{tabular}
\end{wraptable}
a true metric, being a quotient by a finite isometry group (Theorem~\ref{thm:endpoint_metric}). The norm $\lVert\cdot\rVert$ comes from an inner product on $\se$ chosen in \S\ref{sec:design}, and a subscript names that choice (Table~\ref{tab:design}): $E_\alpha$ under the split norm and $E_B$ under the kinetic-energy norm. Superscripts mark extensions that apply under either subscript, $E^{\phi}$ for the compactified metric of \S\ref{sec:infinite} and $E^{\mathrm{tree}}$ for the tree lift of \S\ref{sec:lift}. It is zero iff the two encodings agree up to orientation reversal (Proposition~\ref{prop:same_motion}), smooth except on a measure-zero set of ties, invariant under screw-orientation reversal and the choice of prismatic origin (Propositions~\ref{prop:axis_flip} and~\ref{prop:prismatic_gauge}), and it identifies the joint from its endpoint pair up to orientation (Proposition~\ref{prop:identifiability}). Theorem~\ref{thm:endpoint_path} further bounds $E$ on both sides by the $L^2$ discrepancy of the full linearized motion segment, so comparing endpoints already controls the whole path. Appendix~\ref{app:theory} states and proves these properties. $E$ is symmetric, so we write its arguments as $J_1$ and $J_2$ rather than $\Jp$ and $\Jg$. The hard min makes $E$ itself nondifferentiable at ties. When a differentiable score is wanted, mainly as a training loss, Appendix~\ref{app:agg} gives smooth swap-invariant surrogates that replace the min. Appendix~\ref{app:algorithm} states the full computation, at the joint and the tree level, in pseudocode.

\subsection{Inner product on \texorpdfstring{$\se$}{se(3)}}
\label{sec:design}
\vspace{-5pt}

The norm in \eqref{eq:E} acts on endpoint-twist differences $\Delta\bm{z} = (\Delta\bm{\omega}, \Delta\bm{v})$, and Theorem~\ref{thm:endpoint_metric} asks only that it come from an inner product on $\se$. The most intuitive choice is the \emph{split norm}, which weighs rotation against translation through an inverse-length $\alpha$:
\begin{equation}
\label{eq:splitnorm}
\lVert\Delta\bm{z}\rVert_\alpha^2 \;=\; \lVert\Delta\bm{\omega}\rVert^2 + \alpha^2\lVert\Delta\bm{v}\rVert^2.
\end{equation}
The resulting $E_\alpha$ is dimensionless and satisfies the metric axioms (Theorem~\ref{thm:endpoint_metric}), and it compares every joint of every object by one fixed rule. That uniformity is also its flaw, since a $5^\circ$ axis error scores the same on a $1$\,m door and on a $2$\,cm knob, although the door moves fifty times more material.

The \emph{kinetic-energy form} lets the moving link set the scale instead. It is induced by the child link's body $B$ through its mass $m$, center of mass $\bm{c}$, and inertia tensor $I$:
\begin{equation}
\label{eq:kineticnorm}
\lVert\Delta\bm{z}\rVert_B^2 \;=\; \frac{1}{m}\,\Delta\bm{\omega}^{\!\top}\! I\,\Delta\bm{\omega} + \lVert \Delta\bm{v} + \Delta\bm{\omega}\times \bm{c}\rVert^2.
\end{equation}
The resulting $E_B$ is measured in meters and equals the RMS linearized material motion of the child link (Theorem~\ref{thm:material_motion}), so the door now scores fifty times worse than the knob. The two forms are close relatives. For a centered link with isotropic inertia $I = m r^2 I_3$, where $r$ is the radius of gyration, \eqref{eq:kineticnorm} is exactly $r^2$ times \eqref{eq:splitnorm} at $\alpha = 1/r$. A global $\alpha$ therefore assumes that every link has the same size $1/\alpha$, and the kinetic-energy form replaces that assumption with the moments of the link that actually moves. Its price is that it needs those moments. The joint must come with a link geometry, the score inherits the object's scale, and the norm varies from joint to joint, which keeps the per-link variant of the tree lift outside the metric proof (\S\ref{sec:lift}). $E_B$ itself remains a true metric, because the body $B$ is fixed by the ground truth before any prediction is seen, so each joint carries one fixed inner-product norm and Theorem~\ref{thm:endpoint_metric} applies verbatim. We keep both norms, $E_\alpha$ for geometry-free uniform comparison and $E_B$ as the physically calibrated default of our evaluation.

\subsection{Unbounded joints via radial compactification}
\label{sec:infinite}
\vspace{-5pt}
For unbounded joints we map $\se$ into its unit ball by radial compactification and measure the quotient distance there:
\begin{equation}
\label{eq:Ephi}
\phi(\bm{z}) \;=\; \tanh\!\big(\lVert \bm{z}\rVert/\kappa\big)\,\frac{\bm{z}}{\lVert \bm{z}\rVert},
\quad
E^{\phi}(J_1, J_2) \;=\; E\big(\{\phi(\bm{z}_1^-),\, \phi(\bm{z}_1^+)\},\; \{\phi(\bm{z}_2^-),\, \phi(\bm{z}_2^+)\}\big)
\end{equation}
A continuous joint becomes the antipodal boundary pair $\{\pm\bm{\xi}\}$, i.e.\ the axis direction up to sign. The completion of the space of limited joints contains the continuous joints as boundary points, and $E^{\phi}$ is continuous in this limit (Proposition~\ref{prop:continuous_limit}). The construction applies under either inner product of \S\ref{sec:design}, with $\lVert\cdot\rVert$ in $\phi$ the chosen norm, giving $E_\alpha^{\phi}$ and $E_B^{\phi}$. Both are dimensionless, including $E_B^{\phi}$. Dividing by $\lVert\bm{z}\rVert$ inside $\phi$ gives each compactified endpoint the inverse units of the norm, and the outer norm in $E$ cancels them again, so $\lVert\phi(\bm{z})\rVert = \tanh(\lVert\bm{z}\rVert/\kappa)$ and every distance between compactified pairs is a pure number (Remark~\ref{rem:scale}). $E_B^{\phi}$ thus trades the meter reading of $E_B$ for boundedness while keeping the body-aware weighting of error directions, and $\kappa$ carries the unit of the norm. Although motivated by unbounded joints, $\phi$ is defined on all of $\se$, so every joint type can be compactified. Evaluating all pairs under $E^{\phi}$ therefore unifies finite and continuous joints in a single dimensionless score, which \S\ref{sec:baselines} uses as the common score over all joint types.

\begin{figure}[t]
\centering
\includegraphics[width=\textwidth]{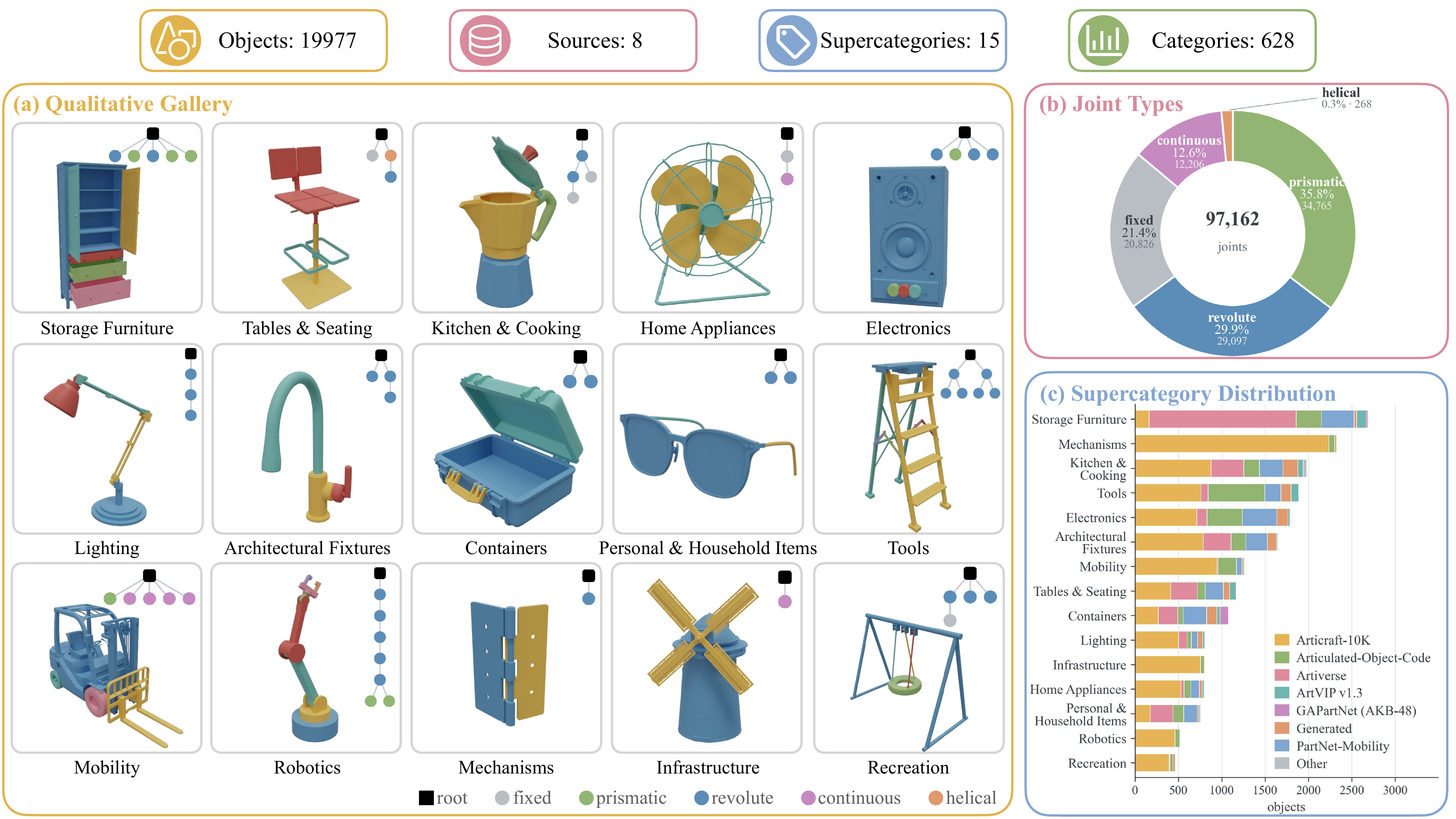}
\vspace{-20pt}
\caption{Overview of \textbf{ArticulateArena-20K}. (a) One object from each of the 15 supercategories, rendered with its kinematic tree (top right). (b) The distribution of the library's joints over the five joint types. (c) Objects and categories per supercategory, stacked by source library.}
\label{fig:dataset-overview}
\vspace{-10pt}
\end{figure}

\subsection{Lifting to skeleton and geometry}
\label{sec:lift}
\vspace{-5pt}
The metric so far covers a single joint, but an object has several joints arranged in a kinematic tree, with links as nodes and joints on the edges. The same object admits many such trees, since parts can be relabeled, the base link re-chosen, and a handle either made its own link welded to the drawer or left part of the drawer mesh. The object-level distance must therefore live on trees modulo these choices. The fixed joint $J_0 = \{0, 0\}$ is a canonical origin, and $E(J, J_0)$ measures how much motion a joint carries. We lift $E$ to trees, under either inner product of \S\ref{sec:design}, as an edit distance whose every cost comes from $E$ itself. An edit \emph{substitutes} the joint on an edge at cost $E(J, J')$, \emph{contracts} an edge by merging its two links at cost $E(J, J_0)$, or \emph{expands} a link into two joined by a new joint $J$ at the same cost. The tree distance is the cheapest edit sequence from one tree to the other,
\begin{equation}
\label{eq:dtree}
E^{\mathrm{tree}}(T_1, T_2) \;=\; \inf_{\pi\,:\,T_1 \rightsquigarrow T_2}\; \sum_{o \in \pi} c(o),
\end{equation}
over finite edit sequences $\pi$ on unrooted trees with unlabeled nodes. No hand-set topology penalty appears, because a spurious or missing joint costs exactly the motion it carries. A nearly welded extra handle is charged $\epsilon\lVert\bm{\xi}\rVert$ rather than a constant, so topology and joint discrepancies are measured in the same motion space.

On raw URDF trees $E^{\mathrm{tree}}$ is only a pseudometric, since a tree with an exact fixed edge and the tree with that edge contracted are at distance zero. Call two trees \emph{representation equivalent} when they differ only by node relabeling, base-link choice, and insertion or deletion of exact fixed edges. On these classes $E^{\mathrm{tree}}$ is a metric (Theorem~\ref{thm:tree_metric}). We compute $E^{\mathrm{tree}}$ exactly, certified through its assignment relaxation \citep{kuhn1955hungarian} (Appendix~\ref{app:tree}). Geometry enters through the inner product. The configuration discrepancy $d(g_1,g_2)$ of a link between two placements, the mass-weighted $L^2$ displacement of its material points, depends on the mass measure only through its moments $(m, \bm{c}, I)$ \citep{kazerounian1992object,park1995distance} (Appendix~\ref{app:material}), which gives $E_B$ its geometry-aware reading (\S\ref{sec:design}). Instantiating the edit costs per joint with the ground-truth child link's kinetic-energy form gives the physically weighted variant of our evaluation, a metric for a fixed skeleton and correspondence (Theorem~\ref{thm:fixed_tree}), with only the topology edits under these edge-dependent norms outside the theorems (\S\ref{sec:design}).

\subsection{The ArticulateArena-20K dataset}
\label{sec:dataset}
\vspace{-5pt}
A metric is most useful when it is computed on a standard suite of objects, so that numbers are comparable across papers, and such a suite needs ground-truth kinematics that are trustworthy across all joint types. We therefore assemble \textbf{ArticulateArena-20K}, a suite of 19{,}977 articulated objects in 628 categories and 15 supercategories that merges public articulated-object libraries with objects we generated ourselves, all normalized into a common URDF convention with verified joint annotations so that every object exposes the tuple of \S\ref{sec:articulated} and, equivalently, the endpoint-twist representation of this section. It covers the five joint types of Figure~\ref{fig:type}, continuous joints included, and ranges from single-joint items to assemblies with over a hundred movable joints. Figure~\ref{fig:dataset-overview} summarizes the composition, and Appendix~\ref{app:library} details the sources, the construction pipeline, and the category taxonomy. The empirical study of \S\ref{sec:baselines} evaluates on 200 objects chosen from the library to cover every supercategory and as many categories as possible.

\section{Experiments}
\label{sec:baselines}
\vspace{-5pt}

\subsection{Where per-component scores fail and \texorpdfstring{$E$}{E} succeeds}
\label{sec:advantages}
\vspace{-5pt}

Table~\ref{tab:failures} collects the canonical cases in which the scores of \S\ref{sec:prev-eval} report a nonzero, ill-conditioned, or undefined error while $E$ returns the correct value under either inner product, taking Articulate-Anything as the representative protocol. Appendix~\ref{app:failures} works through each case.

\begin{table}[!htb]
\centering
\small
\setlength{\abovecaptionskip}{0pt}
\setlength{\belowcaptionskip}{4pt}
\caption{Failure cases of per-component scores. Rows 1 and 4 compare two encodings of the same motion, row 2 two different motions that a component score cannot separate, and rows 3 and 5 two motions that converge to each other.}
\label{tab:failures}
\setlength{\tabcolsep}{4pt}
\begin{tabular}{>{\raggedleft\arraybackslash}p{1.0em}>{\raggedright\arraybackslash}p{0.27\textwidth}>{\raggedright\arraybackslash}p{0.34\textwidth}>{\raggedright\arraybackslash}p{0.28\textwidth}}
\toprule
 & Case & Per-component score & Ours \\
\midrule
\rowcolor{rowshade}1 & Axis reversed, limits negated & $e_{\mathrm{lim}}^{\mathrm{dir}} = 2$, maximal against itself & $E = 0$, endpoints swap \\[2pt]
2 & Interval shifted, same width & $e_{\mathrm{lim}} = 0$ although the configurations differ & $E > 0$ \\[2pt]
\rowcolor{rowshade}3 & Near-parallel axes & $e_{\mathrm{orig}}^{\mathrm{rev}}$ divides by $\lVert\bm{a}_{\prd}\times\bm{a}_{\grt}\rVert \to 0$, undefined at parallel & no denominator, shrinks continuously \\[2pt]
4 & Prismatic origin moved along the part & $e_{\mathrm{orig}}^{\mathrm{pris}} > 0$ & $E = 0$, origin absent from the twist \\[2pt]
\rowcolor{rowshade}5 & Revolute range $[0,\epsilon]$ against fixed & $e_{\mathrm{type}} = 1$ for every $\epsilon > 0$ & $E = \epsilon\lVert\bm{\xi}\rVert \to 0$ \\[2pt]
6 & Continuous joint & $e_{\mathrm{lim}}$ undefined, $\bm{m}$ infinite & boundary point under $\phi$, finite ranges approach it \\[2pt]
\rowcolor{rowshade}7 & Aggregation & thresholds discard magnitude, no distance & a metric, with a differentiable surrogate \\
\bottomrule
\end{tabular}
\vspace{-10pt}
\end{table}

\subsection{Ablation}
\label{sec:ablation}

We ablate five design choices of the metric: the inner product, $E_\alpha$ against $E_B$ (Figure~\ref{fig:calibration}; Appendices~\ref{app:norm} and~\ref{app:scale}), the split-norm weight $\alpha$ (Appendix~\ref{app:alpha}), the compactification scale $\kappa$ (Appendix~\ref{app:kappa}), the two-endpoint representation against denser sampling (Appendix~\ref{app:sampling}), and the motion-aware tree lift against a constant topology penalty (Figure~\ref{fig:tree}). Appendix~\ref{app:perturbation} sweeps every single-parameter perturbation of a ground-truth joint against the prior component scores.

\textbf{Inner product.} Figure~\ref{fig:calibration} calibrates both inner products against an external ground truth, the material-motion discrepancy $d$ of \S\ref{sec:lift}. The question is not which score is larger, since their units differ, but which one tracks actual material motion. $E_B$ lies on the identity to within a few percent, and its only visible departures are the largest axis errors at large swept angles, exactly the linearization gap that Theorem~\ref{thm:material_motion} delimits. $E_\alpha$ preserves the trend on average but scatters by more than an order of magnitude at a fixed $d$, so the same physical displacement can score ten times differently depending on the link. That scatter is the empirical cost of assuming every link has size $1/\alpha$ (\S\ref{sec:design}).

\begin{wrapfigure}[17]{l}{0.66\textwidth}
\centering
\vspace{-13pt}
\includegraphics[width=\linewidth]{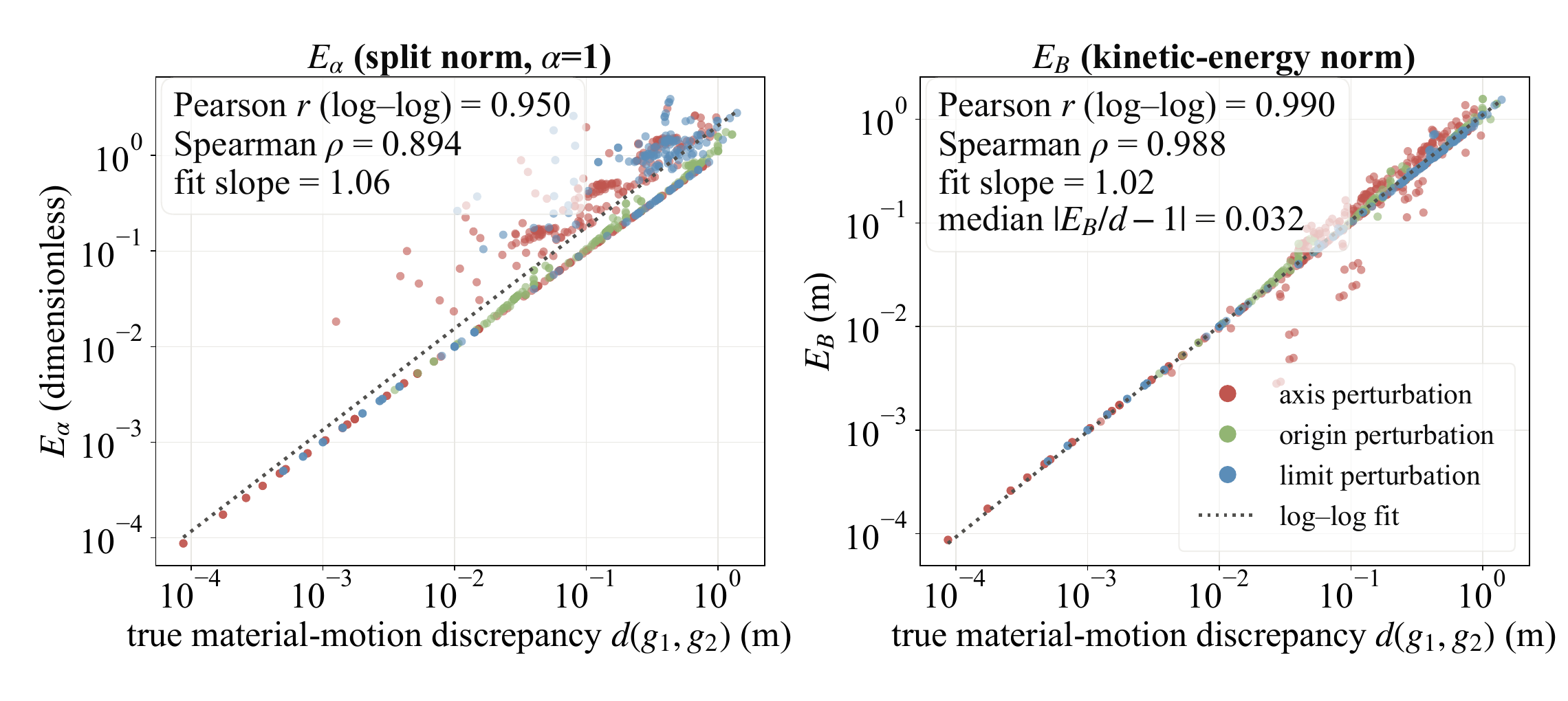}
\vspace{-25pt}
\caption{Calibration of $E_\alpha$ (left) and $E_B$ (right) against the true material-motion discrepancy $d(g_1,g_2)$ of the child link, computed on the actual $SE(3)$ displacements rather than on their linearization. Axis, origin, and limit perturbations (colors) of 690 joints from 60 objects whose child links span a wide range of sizes. The dotted line is the least-squares fit in log space.}
\label{fig:calibration}
\end{wrapfigure}

\textbf{Compactification scale.} The scale $\kappa$ in \eqref{eq:Ephi} sets where the transition between finite and continuous ranges happens, not whether it happens. Sweeping $\kappa$ around the default $\pi$ changes no qualitative property of $E^{\phi}$, only how resolution is allocated along the range axis, and $\kappa=\pi$ keeps ranges up to a full turn in the responsive part of the curve (Appendix~\ref{app:kappa}, Figure~\ref{fig:kappa}).

\textbf{Endpoint sampling.} Comparing two joints through two endpoint twists might seem to discard the motion in between, yet sampling $n$ points along each linearized segment adds nothing. The squared discrepancy is quadratic along the segment, so every $n\ge3$ coincides with the $n\to\infty$ limit, and denser sampling only reparameterizes the endpoint metric at $n/2$ times the cost (Appendix~\ref{app:sampling}, Figure~\ref{fig:sampling}).

\begin{figure}[t]
\centering
\includegraphics[width=\textwidth]{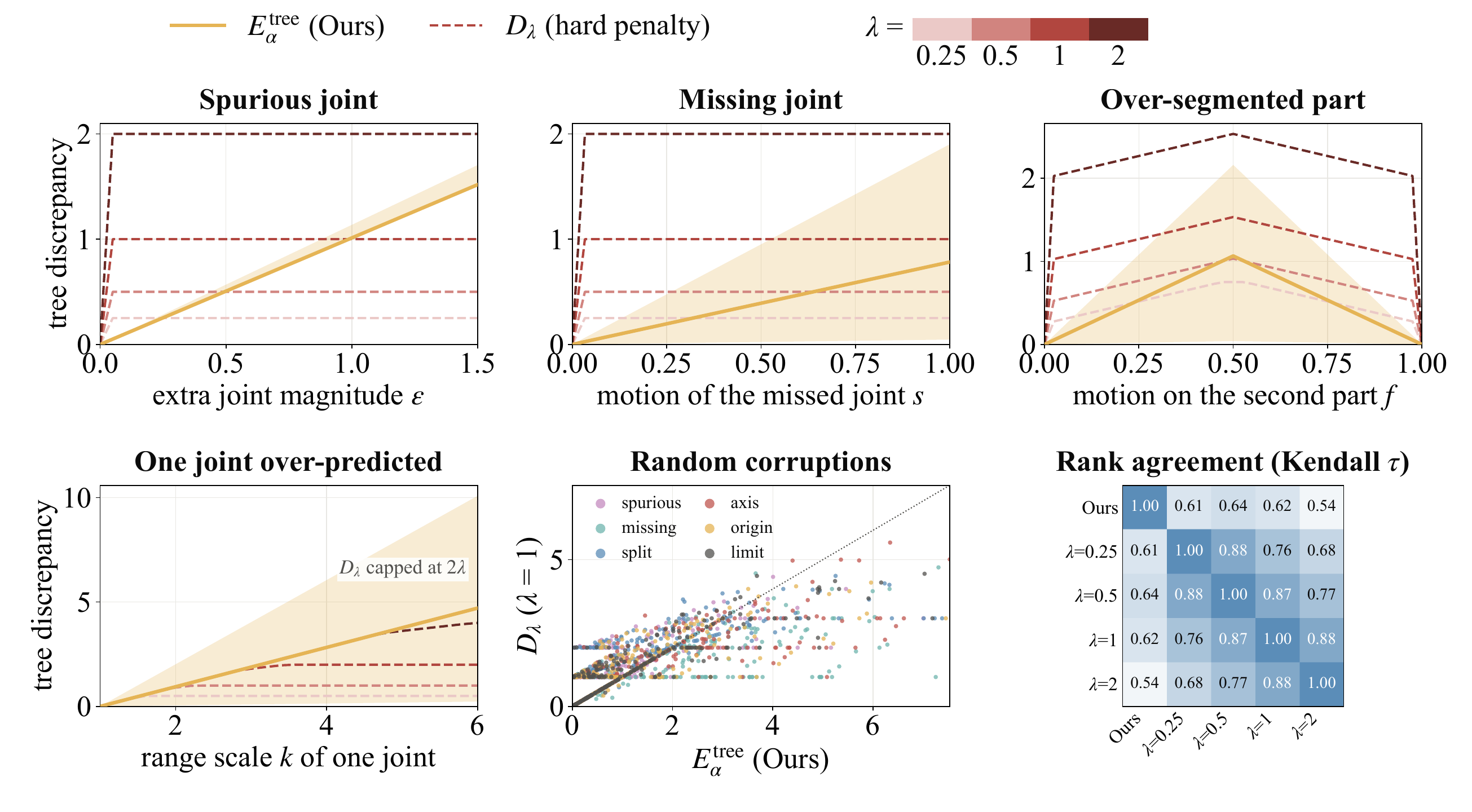}
\vspace{-30pt}
\caption{Motion-aware edit costs ($E_\alpha^{\mathrm{tree}}$ computed exactly, amber, median and interquartile range over 100 multi-joint trees from ArticulateArena-20K) against a constant topology penalty $\lambda$ per unmatched joint ($D_\lambda$, dashed, $\lambda\in\{0.25,0.5,1,2\}$). Both schemes score joints with the same metric $E_\alpha$ and differ only in what an unmatched joint costs. Top: a spurious joint of growing magnitude, a missing joint whose own motion shrinks to zero, and one part split into two along the same screw. Bottom left: the range of one joint scaled by $k$. Bottom middle: 2{,}000 randomly corrupted predictions, $E_\alpha^{\mathrm{tree}}$ against $D_1$, colored by the first edit. Bottom right: Kendall $\tau$ rank agreement among the schemes over those predictions.}
\label{fig:tree}
\end{figure}
\textbf{Tree lift.} Figure~\ref{fig:tree} compares the motion-aware edit costs of \S\ref{sec:lift} with the common alternative, a constant penalty $\lambda$ for every unmatched joint. The constant penalty is discontinuous at every structural boundary. A spurious joint that barely moves, a missing joint that barely moved, and a part split into two along its own screw all cost a full $\lambda$ the moment they appear, whereas $E_\alpha^{\mathrm{tree}}$ starts from zero and grows with the motion involved. The knob then faces a dilemma. Once a joint's own error exceeds $2\lambda$ the assignment prefers to leave it unmatched, so the score plateaus while $E_\alpha^{\mathrm{tree}}$ keeps growing, and forcing the match instead escapes the cap only by breaking the triangle inequality. A constant penalty is either capped or not a metric. On random corruptions it also piles distinct predictions onto identical scores in bands at multiples of $\lambda$, and the ranking it induces changes with the knob itself. Pricing structural edits by the motion they carry removes all of this, and it has no parameter to tune. The figure reports the exact edit distance, and its assignment relaxation returns the same value on every sweep and on nearly all random corruptions (Appendix~\ref{app:tree}).

\subsection{Comparison}
\vspace{-5pt}
\textbf{Setup.} We re-evaluate eight published methods on 200 objects from \textbf{ArticulateArena-20K} (Appendix~\ref{app:sample}). Table~\ref{tab:main} reports the prior component protocol alongside $E$ under three parameter choices of \S\ref{sec:joint}, the object-level $E_\alpha^{\phi,\mathrm{tree}}$, finite-range $E_B$, and all-joint $E_\alpha^{\phi}$ ($\alpha = 1$, $\kappa = \pi$). The tree score is the exact edit distance of \S\ref{sec:lift} (Appendix~\ref{app:tree}). Gen (\%) is computed over all objects; the remaining entries are conditional on a valid articulated output. Methods retain their native input modalities, so this compares kinematic outputs rather than common-input end-to-end performance. The success indicator, coverage, frame normalization, and geometry handling are in Appendices~\ref{app:succ} and~\ref{app:coverage}.

\begin{table}[t]
\centering
\footnotesize
\setlength{\abovecaptionskip}{0pt}
\setlength{\belowcaptionskip}{6pt}
\caption{Articulated reconstruction on 200 objects from \textbf{ArticulateArena-20K}. Arrows indicate the better direction; shading marks the best defined value; -- denotes unavailable entries. The bootstrap intervals of Appendix~\ref{app:results} show which of these gaps the evaluation set resolves. $^{\dagger}$Articulate AnyMesh predicts no motion limits, so its $E$ columns impute each joint's limits from the ground truth and are excluded from the shading.}
\label{tab:main}
\setlength{\tabcolsep}{1pt}
\begin{tabular*}{\textwidth}{@{\extracolsep{\fill}}lccccccccc@{}}
\toprule
& & \multicolumn{5}{c}{Prior per-component scores} & \multicolumn{3}{c}{Ours} \\
\cmidrule(lr){3-7}\cmidrule(l){8-10}
Method & Gen$\uparrow$ & TypeErr$\downarrow$ & OriginErr$\downarrow$ & AxisErr$\downarrow$ & LimitErr$\downarrow$ & SuccRate$\uparrow$ & $E_B\downarrow$ & $E_\alpha^{\phi}\downarrow$ & $E_\alpha^{\phi,\mathrm{tree}}\downarrow$ \\
 & (\%) & (\%) & (m) & (rad) & & (\%) & (m) & & \\
\midrule
Articraft            & 99.5 & 12.1 & 0.466 & 0.870 & 1.90\,/\,1.00 & \cellcolor{sparkbest}17.1 & 1.172 & 0.597 & 0.476 \\
Articulate AnyMesh$^{\dagger}$ & 48.0 & 16.7 & \cellcolor{sparkbest}0.264 & 0.881 & -- & 9.4 & 1.032 & 0.581 & 0.528 \\
Articulate-Anything  & 98.5 & 18.3 & 0.604 & 0.629 & 2.19\,/\,0.91 & 15.7 & 1.702 & 0.550 & \cellcolor{sparkbest}0.392 \\
ArtLLM               & 93.0 & 30.6 & 0.398 & 0.918 & 7.13\,/\,1.03 & 3.8 & 2.868 & 1.216 & 0.585 \\
Ditto                & 100.0 & 31.5 & 0.645 & 0.819 & 1.75\,/\,1.05 & 5.5 & 1.365 & 0.608 & 0.552 \\
Particulate          & 91.0 & \cellcolor{sparkbest}7.1 & 0.414 & \cellcolor{sparkbest}0.172 & \cellcolor{sparkbest}0.98\,/\,0.34 & 15.4 & \cellcolor{sparkbest}1.066 & \cellcolor{sparkbest}0.465 & 0.419 \\
SPARK                & 99.5 & 18.1 & 0.530 & 0.800 & 2.36\,/\,1.00 & 9.5 & 1.543 & 0.687 & 0.520 \\
URDFormer            & 77.0 & 37.7 & 0.735 & 1.019 & 2.07\,/\,0.98 & 0.0 & 1.964 & 0.726 & 0.566 \\
\bottomrule
\end{tabular*}
\vspace{-15pt}
\end{table}

\textbf{Results.}
On these outputs the quotient scores induce different orderings from the thresholded component protocol, especially when axis, origin, and limit errors trade off. This is expected because $E_B$ scores the encoded motion jointly, absorbs representation symmetries, and weights errors by their effect on the ground-truth body. Detailed comparisons, qualitative cases (Figure~\ref{fig:qualitative}), and common-set uncertainty are reported in Appendices~\ref{app:detailed-comparison} and~\ref{app:intersection}.

\vspace{-5pt}
\section{Conclusion}
\vspace{-10pt}
We introduced a quotient metric on one-DOF joints that compares the motions the joints generate rather than the parameters encoding them, places all five joint types, continuous included, on one scale, and reads as the RMS motion of the child link in meters under the kinetic-energy inner product. Lifted to kinematic trees, it replaces the per-component scores in current use with a representation-invariant distance that satisfies the metric axioms for a fixed inner product and stays well-conditioned where those scores degenerate. We hope that it, together with ArticulateArena-20K, becomes the default way to report articulated-reconstruction quality. Multi-DOF joints, closed kinematic chains, and sharper swept-motion bounds under the kinetic-energy form are left to future work.

\section*{AI use statement}
We used generative AI tools in three places. First, in the generation of the dataset, where part of ArticulateArena-20K was produced with generative models (Appendix~\ref{app:construction}). Second, in the organization of the dataset, where an LLM proposed a category label for every object from its name, description, and renders. Third, to refine text written by the authors, to re-check the mathematical proofs written by the authors, and to help with \LaTeX{} layout. We have reviewed all AI-assisted work. Every generated asset was loaded and checked for validity. Every proposed category label was reviewed and corrected by the authors, so the final categorization and supercategory taxonomy (Appendix~\ref{app:supercats}) reflect human judgment. All refined text was checked by the authors for accuracy, and every proof was verified by the authors independently of the AI re-check. We take responsibility for the final content of this work, including text, claims, and artifacts produced with the aid of generative AI.

\section*{Ethics statement}
The one ethical aspect of this work is dataset release. ArticulateArena-20K redistributes assets from publicly released research datasets under their respective licenses, with attribution in \S\ref{sec:dataset} and Appendix~\ref{app:library}, alongside synthetic assets we generated ourselves. The library describes household objects and contains no personal data or human-subject material.

\section*{Reproducibility statement}
The metric is fully specified by \eqref{eq:E}, \eqref{eq:splitnorm}, \eqref{eq:kineticnorm}, \eqref{eq:Ephi}, and \eqref{eq:dtree}, with the screw-twist embeddings of a URDF joint in Appendix~\ref{app:twists} and the differentiable training surrogates in Appendix~\ref{app:agg}. Every theoretical claim is stated with its assumptions and proved in Appendix~\ref{app:theory}. The per-component scores and the success indicator used for comparison are written out in \S\ref{sec:prev-eval} and Appendix~\ref{app:protocol_details}, evaluation counts and the common-set analysis in Appendix~\ref{app:results}, and the ablation setups in \S\ref{sec:ablation} and Appendix~\ref{app:ablations}. The construction, normalization, and quality control of ArticulateArena-20K are described in Appendix~\ref{app:library}. We will release the evaluation code, the ablation scripts, and the library with its URDF assets and verified joint annotations.

\bibliography{references}
\bibliographystyle{preprint}

\clearpage
\appendix
\raggedbottom

\begingroup
\small
\newcommand{\tocS}[2]{\noindent\textbf{\makebox[1.6em][l]{\ref{#1}}#2}\dotfill\textbf{\pageref{#1}}\par}
\newcommand{\tocs}[2]{\noindent\hspace{1.6em}\makebox[2.2em][l]{\ref{#1}}#2\dotfill\pageref{#1}\par}
\noindent\textbf{Appendix contents}\par\vspace{4pt}
\tocS{app:details}{Details of the metric and of the prior protocol}
\tocs{app:onedof}{Scope of one-DOF joints}
\tocs{app:twists}{Screw twists of the joint tuple}
\tocs{app:algorithm}{Computation of the metric}
\tocs{app:protocols}{The per-component protocol}
\tocs{app:failures}{Failure cases in detail}
\tocS{app:theory}{Theoretical properties of our metric}
\tocs{app:metric}{Metric axioms}
\tocs{app:invariances}{Representation invariance and identifiability}
\tocs{app:compact}{Continuity and compactification}
\tocs{app:material}{Interpretation of the distance}
\tocs{app:tree}{The tree edit metric}
\tocS{app:ablations}{Additional ablation studies}
\tocs{app:perturbation}{Single-parameter perturbations against the component scores}
\tocs{app:alpha}{Split-norm weight}
\tocs{app:norm}{Sweep shapes under both inner products}
\tocs{app:scale}{Object scale}
\tocs{app:kappa}{Compactification scale}
\tocs{app:sampling}{Endpoint sampling}
\tocS{app:results}{Additional evaluation results}
\tocs{app:sample}{The evaluation set}
\tocs{app:detailed-comparison}{Detailed comparison}
\tocs{app:coverage}{Evaluation coverage}
\tocs{app:intersection}{Common-set sensitivity}
\tocS{app:library}{The ArticulateArena-20K library}
\tocs{app:construction}{Construction}
\tocs{app:supercats}{Supercategory taxonomy}
\endgroup
\vspace{4pt}

\section{Details of the metric and of the prior protocol}
\label{app:details}

Appendix~\ref{app:onedof} records why the metric is defined on one-DOF joints, Appendix~\ref{app:twists} derives the twist embedding of the joint tuple, Appendix~\ref{app:algorithm} states the evaluation in pseudocode together with its differentiable surrogates, Appendix~\ref{app:protocols} details the per-component protocol of \S\ref{sec:prev-eval}, and Appendix~\ref{app:failures} expands the failure cases of Table~\ref{tab:failures}.

\subsection{Scope of one-DOF joints}
\label{app:onedof}
The metric is defined on one-DOF joints, and the library and evaluation follow the same scope. This choice is deliberate, and we record the reasoning here, following the joint taxonomy used by simulation engines such as Chrono \citep{tasora2015chrono}.

First, one DOF is where the theory is clean. A one-DOF joint is a single curve $q \mapsto \exp(q\,\xih)$, so its admissible motion is captured by one twist and one interval, the endpoint pair lives in $\se \times \se$, and the whole quotient construction of \S\ref{sec:joint} goes through with a single $\mathbb{Z}_2$ swap. A $k$-DOF joint sweeps a $k$-dimensional region of $\se$, and the analogue of the endpoint pair would be a set-valued object, for instance an ellipsoid-like region covering the admissible twists. Such an extension is possible in principle, and the quotient viewpoint would carry over, but the resulting distance loses the closed form and the compact proofs that make the one-DOF case practical.

Second, one DOF is where the data is. The asset formats in current use, URDF and the MJCF-style XML variants, expose revolute, prismatic, and their fixed and continuous special cases, and reconstruction and generation methods predict exactly these types. Simulators likewise execute higher-DOF connections mostly as compositions of one-DOF primitives. A metric supporting joint classes that no representation, generator, or evaluation target produces would exercise no data.

Third, higher-DOF joints decompose. A cylindrical joint is a revolute and a prismatic joint on the same axis, a universal joint is two revolute joints with intersecting axes, and a planar or spherical joint factors into two or three one-DOF joints through intermediate massless links. The tree lift of \S\ref{sec:lift} then compares such compositions joint by joint. The decomposition is less clean than a native multi-DOF treatment, since it introduces intermediate links and is not unique, but it keeps every comparison inside the one-DOF theory. Extending the metric natively to multi-DOF joints is a matter of engineering the set-valued endpoint representation, and we leave it to future work.

\subsection{Screw twists of the joint tuple}
\label{app:twists}
A body moving along a twist $\bm{\xi} = (\bm{\omega}, \bm{v})$ carries the material point instantaneously at position $\bm{x}$ with velocity $\bm{\omega} \times \bm{x} + \bm{v}$. In matrix form,
\begin{equation}
\label{eq:hatmap}
\xih \;=\;
\begin{pmatrix}
\hat{\bm{\omega}} & \bm{v} \\
0 & 0
\end{pmatrix}
\in \se,
\qquad
\hat{\bm{\omega}} \;=\;
\begin{pmatrix}
0 & -\bm{\omega}_3 & \bm{\omega}_2 \\
\bm{\omega}_3 & 0 & -\bm{\omega}_1 \\
-\bm{\omega}_2 & \bm{\omega}_1 & 0
\end{pmatrix},
\end{equation}
so that $\hat{\bm{\omega}} \bm{x} = \bm{\omega} \times \bm{x}$. The embeddings of \eqref{eq:embed} follow directly. A rotation about the axis through $\bm{o}$ with unit direction $\bm{a}$ moves the point at $\bm{x}$ with velocity $\bm{a} \times (\bm{x} - \bm{o}) = \bm{a} \times \bm{x} + \bm{o} \times \bm{a}$, which is the twist $(\bm{a},\, \bm{o} \times \bm{a})$. Adding a translation of pitch $h$ along the axis appends $h\,\bm{a}$ to the translational part, and a pure translation is $(0,\, \bm{a})$, in which no origin appears.

\subsection{Computation of the metric}
\label{app:algorithm}
Algorithms~\ref{alg:joint} and~\ref{alg:tree} state the evaluation in pseudocode. Algorithm~\ref{alg:joint} computes the joint distance of \eqref{eq:E}, optionally compactified by \eqref{eq:Ephi}, under either norm of \S\ref{sec:design}. Algorithm~\ref{alg:tree} lifts it to two kinematic trees, using the certificate of Appendix~\ref{app:tree} so that the returned value is the exact edit distance.

\begin{algorithm}[!htb]
\caption{Joint distance $E(J_1, J_2)$, with the compactified variant $E^{\phi}$.}
\label{alg:joint}
\begin{algorithmic}[1]
\REQUIRE joints $J_i = (t_i, \bm{a}_i, \bm{o}_i, l_i)$ for $i \in \{1,2\}$, a norm $\lVert\cdot\rVert$ from \eqref{eq:splitnorm} or \eqref{eq:kineticnorm}, a scale $\kappa$ if compactifying
\STATE $\bm{\xi}_i \gets$ unit screw twist of $(t_i, \bm{a}_i, \bm{o}_i)$ by \eqref{eq:embed}, for $i \in \{1,2\}$
\STATE $\bm{z}_i^\pm \gets l_i^\pm\,\bm{\xi}_i$ \COMMENT{endpoint pairs, \eqref{eq:endpoints}}
\IF{computing $E^{\phi}$, which is required as soon as any $l_i^\pm$ is infinite and optional otherwise}
\FORALL{endpoints $\bm{z} \in \{\bm{z}_1^-, \bm{z}_1^+, \bm{z}_2^-, \bm{z}_2^+\}$}
\IF{$\bm{z} = \pm\infty\,\bm{\xi}$}
\STATE $\bm{z} \gets \pm\,\bm{\xi}/\lVert\bm{\xi}\rVert$ \COMMENT{continuous joint, boundary point of the unit ball}
\ELSE
\STATE $\bm{z} \gets \tanh\!\big(\lVert\bm{z}\rVert/\kappa\big)\,\bm{z}/\lVert\bm{z}\rVert$ \COMMENT{\eqref{eq:Ephi}}
\ENDIF
\ENDFOR
\ENDIF
\STATE $d_{\mathrm{id}} \gets \big(\lVert \bm{z}_1^- - \bm{z}_2^-\rVert^2 + \lVert \bm{z}_1^+ - \bm{z}_2^+\rVert^2\big)^{1/2}$
\STATE $d_{\mathrm{sw}} \gets \big(\lVert \bm{z}_1^- - \bm{z}_2^+\rVert^2 + \lVert \bm{z}_1^+ - \bm{z}_2^-\rVert^2\big)^{1/2}$
\RETURN $\min(d_{\mathrm{id}}, d_{\mathrm{sw}})$ \COMMENT{quotient by the endpoint swap, \eqref{eq:E}}
\end{algorithmic}
\end{algorithm}

\begin{algorithm}[!htb]
\caption{Tree distance $E^{\mathrm{tree}}(T_1, T_2)$, exact via the certificate.}
\label{alg:tree}
\begin{algorithmic}[1]
\REQUIRE kinematic trees $T_1, T_2$ with joint twists in a common canonical frame
\STATE contract every exact fixed edge of $T_1$ and of $T_2$
\STATE $C_{jk} \gets E(J_j, J_k)$ for all moving joints $J_j \in T_1$, $J_k \in T_2$ \COMMENT{Algorithm~\ref{alg:joint}}
\STATE $u_j \gets E(J_j, J_0)$ for every moving joint \COMMENT{cost of leaving $J_j$ unmatched}
\STATE $M \gets$ optimal assignment under $C$ with unassigned costs $u$ \COMMENT{Hungarian}
\IF{$M$ extends to an isomorphism of the contracted trees}
\RETURN $\mathrm{cost}(M)$ \COMMENT{certified exact, Appendix~\ref{app:tree}}
\ENDIF
\RETURN $\min\big\{\mathrm{cost}(M') : M'\ \text{realizable}\big\}$ \COMMENT{search over realizable matchings}
\end{algorithmic}
\end{algorithm}

\label{app:agg}
\textbf{Differentiable surrogates for training and optimization.} The metric $E$ of \eqref{eq:E} combines the two endpoint discrepancies by a hard min over the swap. This is the exact metric and is what every evaluation in this paper uses, but it is not differentiable at ties, where the two assignments cost the same. When a differentiable score is wanted, mainly to use $E$ as a training loss, two smooth alternatives replace the min. The first is a loss on the symmetric invariants $\bm{s} = \bm{z}^- + \bm{z}^+$ and $D = (\bm{z}^- - \bm{z}^+)(\bm{z}^- - \bm{z}^+)^{\!\top}$, which are swap-invariant by construction. The second is a soft-min over the two assignments with a temperature that anneals to the exact metric.

\subsection{The per-component protocol}
\label{app:protocol_details}
\label{app:protocols}
This subsection collects the details of the per-component template of \S\ref{sec:prev-eval}, which components each published protocol scores and the thresholded success indicator used in Table~\ref{tab:main}.

\textbf{Scores reported by previous methods.} Table~\ref{tab:protocols} lists which kinematic components each published protocol scores, and is the basis of the template written out in \S\ref{sec:prev-eval}. Every entry records a quantity the cited method reports in its own evaluation of the reconstructed kinematics, with the source named below. Ditto \citep{jiang2022ditto} reports the axis-angle error for both joint types and the pivot-position error for revolute joints in its Table~1, and joint type accuracy in Appendix~D on the Synthetic and Shape2Motion sets, where nearly all methods reach $100\%$. PARIS \citep{liu2023paris} reports the axis-angle and revolute pivot-position errors of its Table~2. URDFormer \citep{chen2024urdformer} reports no joint-level component, only the object- and part-category, parent, and discretized part-position accuracies of its Table~II, together with the whole-pipeline real-robot task success rate of its Table~I. Real2Code \citep{mandi2025real2code} reports the joint type, axis, and revolute position errors of its Table~2. Articulate-Anything \citep{le2025articulate} scores all four components and reports the thresholded joint success rate that aggregates them, stating the position and angle tolerances as $50$\,mm and $0.25$\,rad and leaving the limit tolerances unspecified. SPARK \citep{he2025spark} reports the AxisErr, PivotErr, and TypeErr triple of its Table~2.

\begin{table}[htbp]
\centering
\footnotesize
\setlength{\abovecaptionskip}{0pt}
\setlength{\belowcaptionskip}{6pt}
\caption{Kinematic evaluation metrics reported by existing articulated-object reconstruction methods. \checkmark\ marks a component the method reports in its own evaluation of the reconstructed kinematics, whether in its main results or in an appendix, and $^{\dagger}$ marks a component reported for revolute joints only. A dash indicates that the method reports no such quantity. $^{\S}$Joint type accuracy is reported only in Appendix~D of \citet{jiang2022ditto}, on the Synthetic and Shape2Motion sets, where nearly all methods reach $100\%$. $^{\ddagger}$URDFormer infers the joint type and the link axis from the predicted mesh category and scores neither, reporting only object- and part-category accuracy, parent accuracy, a discretized part-position error, and a whole-pipeline real-robot task success rate. Geometry-only metrics such as Chamfer distance and F-score are outside the scope of this table.}
\label{tab:protocols}
\begin{tabular*}{\textwidth}{@{\extracolsep{\fill}}lccccc@{}}
\toprule
Method & Type & Axis & Origin & Limit & Success rate \\
\midrule
Ditto \citep{jiang2022ditto}                 & \checkmark$^{\S}$ & \checkmark & \checkmark$^{\dagger}$ & --         & --         \\
PARIS \citep{liu2023paris}                   & --         & \checkmark & \checkmark$^{\dagger}$ & --         & --         \\
URDFormer \citep{chen2024urdformer}$^{\ddagger}$ & --     & --         & --                     & --         & --         \\
Real2Code \citep{mandi2025real2code}         & \checkmark & \checkmark & \checkmark$^{\dagger}$ & --         & --         \\
Articulate-Anything \citep{le2025articulate} & \checkmark & \checkmark & \checkmark             & \checkmark & \checkmark \\
SPARK \citep{he2025spark}                    & \checkmark & \checkmark & \checkmark             & --         & --         \\
\bottomrule
\end{tabular*}
\end{table}

\label{app:succ}
\textbf{Aggregation and the success indicator.} The components of \S\ref{sec:prev-eval} carry a count, an angle in radians, a length in meters, a range that is a length or an angle depending on type, and a dimensionless direction term in $[0,2]$. No weighted sum of such units is meaningful, so the most complete protocol aggregates by thresholding. This discards the magnitude of every error, so a prediction just past a tolerance and one that is wildly wrong are the same failure, and in no case is the result a distance on joints. Most protocols drop the limits altogether (Table~\ref{tab:protocols}), and the one that scores them folds the interval back onto the axis through $\bm{m}$, which reintroduces the sign ambiguity that $e_{\mathrm{axis}}$ was patched to remove and is undefined for continuous joints. The failures share one root. The four parameters encode a single motion only together, and every score reads them in isolation, so the template grades how the motion is written down rather than the motion itself.

The thresholded aggregation is stated only in prose by \citet{le2025articulate}. For the success-rate column of Table~\ref{tab:main} we compute it as the indicator
\begin{equation}
\label{eq:succ}
\mathrm{succ}(\Jp,\Jg) \;=\; \mathds{1}\big[\, e_{\mathrm{type}} = 0 \;\wedge\; e_{\mathrm{axis}} < \tau_{\bm{a}} \;\wedge\; e_{\mathrm{orig}} < \tau_{p} \,\big],
\end{equation}
with the component scores $e_{\mathrm{type}}$, $e_{\mathrm{axis}}$, and $e_{\mathrm{orig}}$ of \S\ref{sec:prev-eval}, the published thresholds $\tau_a = 0.25$\,rad and $\tau_p = 50$\,mm, and the reported score is the fraction of joints with $\mathrm{succ} = 1$. The original protocol leaves the limit tolerances unspecified, so we threshold only the three well-specified components and drop the limit terms.

\subsection{Failure cases in detail}
\label{app:failures}
We expand the rows of Table~\ref{tab:failures} in order; Figure~\ref{fig:perturbation} shows each failure as a continuous sweep against $E_\alpha$.

\noindent (i) \emph{A joint against itself under axis reversal.} The joint $(\bm{\xi},[l^-,l^+])$ and $(-\bm{\xi},[-l^+,-l^-])$ are the same motion (for instance $\bm{a}=(1,0,0),\,[0,1]$ versus $\bm{a}=(-1,0,0),\,[-1,0]$). The axis error is zero, but the motion vector $\bm{m}=\bm{a}\,(l^+-l^-)$ flips sign, so $e_{\mathrm{lim}}^{\mathrm{dir}}=2$ is maximal, scoring the joint maximally against itself. Our endpoints $\{\bm{z}^-,\bm{z}^+\}=\{l^-\bm{\xi},\,l^+\bm{\xi}\}$ only swap, so $E=0$.

\noindent (ii) \emph{Where the interval sits, not only its width.} Since $\bm{m}=\bm{a}\,(l^+-l^-)$ keeps only the length, $[0,1]$ and $[1,2]$ both give $\bm{m}=\bm{a}$ and zero limit error although they permit different configurations. Our endpoints $\{0,\bm{\xi}\}$ against $\{\bm{\xi},2\bm{\xi}\}$ give $E>0$. This reads the joint coordinate against the shared anchor $g_0$ of \S\ref{sec:joint}, so a genuine interval shift is charged, while an encoding that merely stores the object in another state is re-anchored first (Remark~\ref{rem:anchor}) and is not.

\noindent (iii) \emph{Near-parallel axes.} The revolute origin score $e_{\mathrm{orig}}^{\mathrm{rev}}$ of \S\ref{sec:prev-eval} divides by $\lVert \bm{a}_{\prd}\times \bm{a}_{\grt}\rVert\to0$ as $\bm{a}_{\prd}\to \bm{a}_{\grt}$. It is worst-conditioned exactly when the prediction is most accurate, and undefined for parallel axes. Comparing full twists through $\lVert \bm{z}_{\prd}-\bm{z}_{\grt}\rVert$ has no such denominator and shrinks continuously as the axes align.

\noindent (iv) \emph{Prismatic origin.} A prismatic twist $\bm{\xi}=(0,\bm{a})$ carries no axis location, so moving the drawn origin leaves the motion unchanged, yet $e_{\mathrm{orig}}^{\mathrm{pris}}=\lVert \bm{o}_{\prd}-\bm{o}_{\grt}\rVert$ penalizes it. The origin is absent from the twist, and $E=0$. A rest-pose offset that a URDF stores in the joint origin, when itself under test, is compared separately as a link pose $g_0$, not through the joint twist.

\noindent (v) \emph{Type as a continuum.} A revolute joint with limits $[0,\epsilon]$ becomes physically fixed as $\epsilon\to0$, yet the categorical $e_{\mathrm{type}}$ stays $1$. Our endpoints $\{0,\epsilon\bm{\xi}\}$ against $\{0,0\}$ give $E=\epsilon\lVert\bm{\xi}\rVert\to0$. Because the twist carries pitch continuously, $E$ likewise places revolute, prismatic, and helical joints on one scale, where the categorical type has no metric and no slot for helical joints.

\noindent (vi) \emph{Continuous joints.} For $l^\pm=\pm\infty$ the vector $\bm{m}=\bm{a}\,(l^+-l^-)$ is undefined, so the limit score cannot handle continuous joints at all. The compactification $\phi$ of \eqref{eq:Ephi} sends them to the boundary pair $\{\pm\bm{\xi}/\lVert\bm{\xi}\rVert\}$, which finite ranges approach continuously, so bounded, large-range, and continuous joints are all measured.

\noindent (vii) \emph{One metric, and a loss.} $E$ is a single distance obeying the metric axioms, so it supports ranking, mean and median error, retrieval, clustering, and significance tests, and its smooth $(\bm{s},D)$ surrogate (Appendix~\ref{app:agg}) can serve as a differentiable training loss where binary type error and hard thresholds cannot.

\section{Theoretical properties of our metric}
\label{app:theory}

We provide formal statements and proofs for the main properties of the endpoint-twist representation and of our metric. Throughout this section, $V = \se \cong \R^{6}$ denotes the twist-coordinate space of \S\ref{sec:joint}, equipped with a fixed positive-definite inner product $\langle \cdot,\cdot\rangle$ and induced norm $\|\bm{z}\| = \sqrt{\langle \bm{z},\bm{z}\rangle}$, and $E$ denotes the endpoint metric under this generic inner product. Both norms of \S\ref{sec:design} are instances, the split norm $\lVert\cdot\rVert_\alpha$ and, for a fixed body $B$, the kinetic-energy norm $\lVert\cdot\rVert_B$, so every statement below holds verbatim for $E_\alpha$ and $E_B$. Only the results that carry the subscript $B$ are specific to the kinetic form. A finite-range one-DOF joint is represented by a normalized screw twist $\bm{\xi} \in V$ and an interval $[l^-,l^+]$. Its endpoint twists are $\bm{z}^- = l^- \bm{\xi}$ and $\bm{z}^+ = l^+ \bm{\xi}$, and we identify the joint with the unordered endpoint pair $J \equiv \{\bm{z}^-,\bm{z}^+\}$. For two joints $J_1 = \{\bm{z}_1^-,\bm{z}_1^+\}$ and $J_2 = \{\bm{z}_2^-,\bm{z}_2^+\}$, the distance $E(J_1,J_2)$ is that of \eqref{eq:E}, read with the generic norm above.

Appendix~\ref{app:metric} proves the metric axioms, Appendix~\ref{app:invariances} shows that $E$ depends on the motion rather than on its encoding, Appendix~\ref{app:compact} treats the fixed and continuous limits, Appendix~\ref{app:material} interprets the value of $E$, and Appendix~\ref{app:tree} lifts these results to kinematic trees.

\subsection{Metric axioms}
\label{app:metric}

\begin{theorem}[Endpoint quotient metric]
\label{thm:endpoint_metric}
The function $E$ of \eqref{eq:E} is a metric on the space of unordered endpoint pairs in $V$.
\end{theorem}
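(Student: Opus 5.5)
We view $E$ as the quotient distance of the product space $W = V \times V$ under the swap involution $\sigma(\bm{x},\bm{y}) = (\bm{y},\bm{x})$. On $W$ we use the product norm $\lVert(\bm{x},\bm{y})\rVert_W^2 = \lVert\bm{x}\rVert^2 + \lVert\bm{y}\rVert^2$. This norm comes from an inner product, and $\sigma$ is a linear isometry of it. An unordered pair $\{\bm{z}^-,\bm{z}^+\}$ is exactly an orbit $\{w, \sigma w\}$ of the group $G = \{\mathrm{id},\sigma\} \cong \mathbb{Z}_2$. With $w_i = (\bm{z}_i^-,\bm{z}_i^+)$, \eqref{eq:E} reads
\[
E(J_1,J_2) = \min_{g \in G} \lVert w_1 - g w_2\rVert_W .
\]
The first step is to check that this is well defined on orbits. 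Replacing $w_2$ by $\sigma w_2$ only permutes the two candidates. Replacing $w_1$ by $\sigma w_1$ gives $\lVert \sigma w_1 - g w_2\rVert_W = \lVert w_1 - \sigma g w_2\rVert_W$ by the isometry, which again permutes the candidates. So $E$ depends only on the unordered pairs.

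Nonnegativity and finiteness are immediate. Symmetry follows from $\lVert w_1 - g w_2\rVert_W = \lVert g^{-1}w_1 - w_2\rVert_W = \lVert w_2 - g^{-1} w_1\rVert_W$, with $g^{-1}$ ranging over $G$ as $g$ does. For definiteness, suppose $E(J_1,J_2)=0$. The minimum over the finite set $G$ is attained, so $w_1 = g w_2$ for some $g \in G$, which means the two unordered pairs coincide. Finiteness of the group is what makes this step work: the infimum is a minimum, so no closure argument is needed.

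The triangle inequality is the only step needing an argument, and it goes through the isometry. Choose $g$ attaining $E(J_1,J_2) = \lVert w_1 - g w_2\rVert_W$ and $h$ attaining $E(J_2,J_3) = \lVert w_2 - h w_3\rVert_W$. Applying the isometry $g$ to the second gives $\lVert g w_2 - g h w_3\rVert_W$. The triangle inequality in $W$ then yields
\[
\lVert w_1 - g h w_3\rVert_W \le E(J_1,J_2) + E(J_2,J_3),
\]
and the left side is at least $E(J_1,J_3)$ because $gh \in G$. This completes the metric axioms.

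The main care point is that the swap must be an isometry of the product norm. This holds because both coordinates use the same inner product on $V$. The proof therefore applies verbatim to $\lVert\cdot\rVert_\alpha$ and to $\lVert\cdot\rVert_B$ for a fixed body $B$. It would fail only if the norm were allowed to depend on the joint being compared.
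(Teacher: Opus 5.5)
Your proposal is correct and follows the paper's proof essentially step for step. Both treat $E$ as the quotient of the product metric on $V\times V$ by the isometric swap group $\{I,S\}$, get symmetry from $g^{-1}\in G$, and obtain the triangle inequality through the composite $gh$. Your explicit check that $E$ is well defined on orbits is a small addition; the paper leaves that step implicit.
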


\begin{proof}
Let $X = V \times V$ carry the product metric $d_X\big((a,b),(c,d)\big)=\big(\|a-c\|^2+\|b-d\|^2\big)^{1/2}$, a metric because $\|\cdot\|$ is induced by an inner product. The endpoint swap $S(a,b)=(b,a)$ satisfies $d_X(Sx,Sy)=d_X(x,y)$, so it is an isometry of $X$, and an unordered pair is the equivalence class $[(a,b)] = \{(a,b),(b,a)\}$ under the two-element group $G=\{I,S\}$. The proposed distance is $E([x],[y])=\min_{g\in G} d_X(x,gy)$. We verify the axioms.

\textbf{Non-negativity} is immediate from $d_X\ge 0$.

\textbf{Identity of indiscernibles.} If $E([x],[y])=0$, then $d_X(x,gy)=0$ for some $g\in G$, so $x=gy$ and $[x]=[y]$. Conversely, $[x]=[y]$ means $x=gy$ for some $g\in G$, so $E([x],[y])=0$.

\textbf{Symmetry.} Every $g\in G$ is an isometry with $g^{-1}\in G$, so $E([x],[y])=\min_{g\in G} d_X(x,gy)=\min_{g\in G} d_X(g^{-1}x,y)=E([y],[x])$.

\textbf{Triangle inequality.} Let $g,h\in G$ attain $E([x],[y]) = d_X(x,gy)$ and $E([y],[z]) = d_X(y,hz)$. Since $gh\in G$ and $g$ is an isometry,
\begin{equation}
\label{eq:quotient-triangle}
E([x],[z]) \le d_X(x,ghz) \le d_X(x,gy)+d_X(gy,ghz) = E([x],[y])+E([y],[z]).
\qedhere
\end{equation}
\end{proof}

\subsection{Representation invariance and identifiability}
\label{app:invariances}

\begin{proposition}[Screw-orientation invariance]
\label{prop:axis_flip}
The endpoint representation and $E$ are invariant under simultaneous reversal of the screw orientation and limits, $(\bm{\xi},[l^-,l^+]) \mapsto (-\bm{\xi},[-l^+,-l^-])$.
\end{proposition}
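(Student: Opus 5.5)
The plan is a direct computation of the endpoint pair under the reversal, followed by an appeal to the fact that $E$ depends on a joint only through its unordered endpoint pair.

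\textbf{Step 1 (endpoints).} Write the reversed joint as $(\bm{\xi}',[l'^-,l'^+]) = (-\bm{\xi},[-l^+,-l^-])$. Its endpoint twists by \eqref{eq:endpoints} are
\[
\bm{z}'^- = l'^-\bm{\xi}' = (-l^+)(-\bm{\xi}) = l^+\bm{\xi} = \bm{z}^+,
\qquad
\bm{z}'^+ = l'^+\bm{\xi}' = (-l^-)(-\bm{\xi}) = l^-\bm{\xi} = \bm{z}^-.
\]
The reversal therefore exchanges the two endpoints, and the unordered pair is unchanged: $\{\bm{z}'^-,\bm{z}'^+\}=\{\bm{z}^-,\bm{z}^+\}$.

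\textbf{Step 2 (invariance of $E$).} In the language of the proof of Theorem~\ref{thm:endpoint_metric}, the ordered pair $(\bm{z}^-,\bm{z}^+)$ is mapped to $S(\bm{z}^-,\bm{z}^+)$, which lies in the same $G$-orbit. Since $E$ is defined on orbits, $E(J',J_2)=E(J,J_2)$ for every $J_2$. Concretely, the two terms inside the min of \eqref{eq:E} trade places, so the min is unchanged. By symmetry of $E$ the same holds for reversal of the second argument. In particular, taking $J_2 = J$ and using identity of indiscernibles gives $E(J,J')=0$.

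\textbf{Step 3 (remaining cases).} Two side cases need checking.
\begin{itemize}
\item \emph{Unit-screw normalization.} This is preserved by the reversal, since $\lVert-\bm{\omega}\rVert=\lVert\bm{\omega}\rVert$ and $\lVert-\bm{v}\rVert=\lVert\bm{v}\rVert$. The reversed encoding is therefore an admissible joint.
\item \emph{Infinite limits and $E^{\phi}$.} For continuous joints I would use the convention $(\pm\infty)\bm{\xi}\mapsto\pm\bm{\xi}/\lVert\bm{\xi}\rVert$ of Algorithm~\ref{alg:joint}. Reversal sends the boundary pair $\{\pm\bm{\xi}/\lVert\bm{\xi}\rVert\}$ to itself. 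For finite endpoints, $\phi$ commutes with the swap because it acts endpointwise. Hence $E^{\phi}$ is invariant as well.
\end{itemize}

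There is no real obstacle here. The only care needed is bookkeeping the signs in the extended-real limits, so that $(-l^+)(-\bm{\xi})$ is interpreted consistently when $l^\pm=\pm\infty$.
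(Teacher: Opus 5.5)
Your proof is correct and follows the paper's argument. The paper likewise computes that the reversal sends $\{l^-\bm{\xi},l^+\bm{\xi}\}$ to $\{(-l^+)(-\bm{\xi}),(-l^-)(-\bm{\xi})\}=\{l^+\bm{\xi},l^-\bm{\xi}\}$, so the unordered pair is unchanged and $E$ vanishes; your checks on unit normalization and on the compactified continuous case go beyond the paper's one-line proof and are consistent with it.
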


\begin{proof}
The reversal sends $J=\{l^-\bm{\xi},l^+\bm{\xi}\}$ to $J'=\{(-l^+)(-\bm{\xi}),(-l^-)(-\bm{\xi})\}=\{l^+\bm{\xi},l^-\bm{\xi}\}$. Since the pair is unordered, $J'=J$ and $E(J,J')=0$.
\end{proof}

\begin{proposition}[Prismatic-origin invariance]
\label{prop:prismatic_gauge}
Two prismatic joints with the same translation direction and limits but different points chosen as their joint origins have identical endpoint-twist representations, hence distance zero under $E$, holding the rest-pose transform fixed.
\end{proposition}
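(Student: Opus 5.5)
The plan is to compute the endpoint representation directly from the prismatic embedding of \eqref{eq:embed} and observe that the origin never enters it. Take two prismatic joints $J_1 = (t,\bm{a},\bm{o}_1,[l^-,l^+])$ and $J_2 = (t,\bm{a},\bm{o}_2,[l^-,l^+])$ with the same unit direction $\bm{a}$, the same limits, and the same stored rest-pose anchor $g_0$. By \eqref{eq:embed}, with the derivation in Appendix~\ref{app:twists}, a pure translation has twist $\bm{\xi}_{\mathrm{pris}} = (0,\bm{a})$. This vector is already a unit screw under the normalization of \S\ref{sec:joint}, because $\bm{\omega}=0$ and $\lVert\bm{v}\rVert = \lVert\bm{a}\rVert = 1$. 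Hence $\bm{\xi}_1 = \bm{\xi}_2 = (0,\bm{a})$, and no function of $\bm{o}_i$ appears.

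Next I would form the endpoint pairs of \eqref{eq:endpoints}: $\{\bm{z}_i^-,\bm{z}_i^+\} = \{l^-(0,\bm{a}),\, l^+(0,\bm{a})\}$ for $i=1,2$. These coincide as ordered pairs, so the direct-assignment term in \eqref{eq:E} is zero and $E(J_1,J_2)=0$. Equivalently, the two joints give the same point of the quotient space on which Theorem~\ref{thm:endpoint_metric} shows $E$ is a metric. The same argument applies under any fixed inner product, in particular $\lVert\cdot\rVert_\alpha$ and $\lVert\cdot\rVert_B$, and it also applies after compactification by $\phi$, since $\phi$ acts pointwise on identical endpoints.

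The one point that needs care is the hypothesis "holding the rest-pose transform fixed." I would justify it at the level of the motion curve $g(q) = \exp(q\,\xih)\,g_0$. With $\xih = \big(\begin{smallmatrix} 0 & \bm{a}\\ 0 & 0\end{smallmatrix}\big)$, we get $\exp(q\,\xih)$ equal to a translation by $q\bm{a}$, independent of any point on the line. So, given a shared $g_0$, the two encodings produce the same placements for every $q\in[l^-,l^+]$. This shows the invariance is a genuine equality of motions, not an artifact of the encoding.

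I would close with a remark that if a URDF stores a rest-pose offset inside the joint origin, changing $\bm{o}$ changes $g_0$. That situation falls outside the hypothesis and is handled as a link-pose comparison, as in item (iv) of Appendix~\ref{app:failures}.
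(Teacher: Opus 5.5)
Your proof is correct and follows the paper's own argument: the prismatic twist $(0,\bm{a})$ contains no origin, so the two endpoint pairs $\{l^-\bm{\xi},l^+\bm{\xi}\}$ are identical and $E=0$. Your additional points go beyond what the paper writes but are sound: that $\exp(q\,\xih)$ is a pure translation by $q\bm{a}$ for a shared anchor, and that the conclusion carries over to either inner product and to $E^{\phi}$.
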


\begin{proof}
A prismatic joint with unit translation direction $\bm{a}$ has screw twist $\bm{\xi}=(0,\bm{a})$, which does not depend on a point chosen along the translation axis. Changing the reported origin therefore changes neither $\bm{\xi}$ nor $\{l^-\bm{\xi},l^+\bm{\xi}\}$.
\end{proof}

\begin{remark}
The proposition concerns the freedom to place the origin anywhere on the prismatic motion axis. Moving the rest-pose transform along the motion while shifting the limits to compensate is instead a change of anchor, which the re-anchoring of Remark~\ref{rem:anchor} removes before evaluation.
\end{remark}

We assume a fixed screw-normalization convention with the property that two collinear normalized nonzero screws differ only by sign, so that $\bm{\xi}_2=\lambda\bm{\xi}_1$ implies $\lambda\in\{-1,+1\}$.

\begin{proposition}[Identifiability modulo screw orientation]
\label{prop:identifiability}
For every non-fixed finite-range one-DOF joint, the unordered endpoint pair $\{l^-\bm{\xi},l^+\bm{\xi}\}$ uniquely identifies the normalized joint parameters $(\bm{\xi},[l^-,l^+])$ up to the orientation equivalence $(\bm{\xi},[l^-,l^+]) \sim (-\bm{\xi},[-l^+,-l^-])$.
\end{proposition}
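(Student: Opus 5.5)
Suppose two normalized joints $(\bm{\xi}_1,[l_1^-,l_1^+])$ and $(\bm{\xi}_2,[l_2^-,l_2^+])$, both non-fixed and of finite range, have the same unordered endpoint pair. The plan is to show that the two parameter sets are related either by the identity or by the orientation reversal of Proposition~\ref{prop:axis_flip}. We first settle what ``non-fixed'' means for the parametrization: $l^-<l^+$ and $\bm{\xi}\neq 0$. Because $\bm{\xi}$ is a unit screw it is never zero, so non-fixed amounts to $l^-<l^+$. It follows that at least one endpoint $l^\pm\bm{\xi}$ is a nonzero vector, and the two endpoints of each pair are distinct.

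\emph{Step 1: recover the twist up to sign.} Every element of the pair $\{l^-\bm{\xi},l^+\bm{\xi}\}$ lies on the line $\R\bm{\xi}$. Since the pair has a nonzero element, that line is exactly the span of the set. Equal pairs therefore force $\R\bm{\xi}_1=\R\bm{\xi}_2$, which gives $\bm{\xi}_2=\lambda\bm{\xi}_1$ with $\lambda\neq0$. The normalization convention assumed just before the proposition then restricts this to $\lambda\in\{-1,+1\}$. Using that convention is the crux of the argument. Collinearity alone would permit any rescaling, for example a helical screw of a different magnitude.

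\emph{Step 2: recover the limits.} Fix $\bm{\xi}_1$, so the map $t\mapsto t\bm{\xi}_1$ is injective. If $\lambda=+1$, the coefficient sets agree: $\{l_1^-,l_1^+\}=\{l_2^-,l_2^+\}$. Ordering both sets with $l^-<l^+$ yields $l_1^\pm=l_2^\pm$, so the parameters coincide. If $\lambda=-1$, then $\{l_2^-\bm{\xi}_2,l_2^+\bm{\xi}_2\}=\{-l_2^-\bm{\xi}_1,-l_2^+\bm{\xi}_1\}$. This gives $\{l_1^-,l_1^+\}=\{-l_2^+,-l_2^-\}$, and ordering gives $l_2^-=-l_1^+$ and $l_2^+=-l_1^-$. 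That is exactly the reversal $(\bm{\xi},[l^-,l^+])\mapsto(-\bm{\xi},[-l^+,-l^-])$.

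\emph{Step 3: the converse.} The converse, that equivalent parameters give the same pair, is Proposition~\ref{prop:axis_flip}.

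The main obstacle is the degenerate case in which one endpoint is zero (for example $l^-=0$, which is common). The span argument of Step~1 still works because the other endpoint is nonzero. Step~2 still works because it compares coefficient sets through the injective map along $\bm{\xi}_1$, rather than assuming both endpoints are nonzero.
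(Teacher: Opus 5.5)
Your proof is correct and follows the paper's argument: a nonzero endpoint forces $\operatorname{span}(\bm{\xi}_1)=\operatorname{span}(\bm{\xi}_2)$, the normalization convention restricts the scale factor to $\pm1$, and a case split on the sign recovers the limits. The injectivity-and-ordering step, the treatment of a zero endpoint, and the converse via Proposition~\ref{prop:axis_flip} are details the paper leaves implicit.
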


\begin{proof}
Suppose two non-fixed joints have the same endpoint pair, $\{l_1^-\bm{\xi}_1,l_1^+\bm{\xi}_1\}=\{l_2^-\bm{\xi}_2,l_2^+\bm{\xi}_2\}$. Because the joints are non-fixed, the shared pair contains a nonzero endpoint, and every nonzero endpoint lies in both $\operatorname{span}(\bm{\xi}_1)$ and $\operatorname{span}(\bm{\xi}_2)$. The two spans therefore agree, so $\bm{\xi}_2=\lambda\bm{\xi}_1$ for some nonzero $\lambda$, which the normalization forces to $\pm1$. If $\bm{\xi}_2=\bm{\xi}_1$, equality of the pairs gives $[l_2^-,l_2^+]=[l_1^-,l_1^+]$. If $\bm{\xi}_2=-\bm{\xi}_1$, it forces $l_2^-=-l_1^+$ and $l_2^+=-l_1^-$, which is exactly the orientation equivalence.
\end{proof}

\begin{remark}
For a fixed joint $\{\bm{z}^-,\bm{z}^+\}=\{0,0\}$, the underlying screw direction is intentionally unidentifiable. All such parameterizations describe the same zero-motion joint.
\end{remark}

\begin{proposition}[Same motion, same representation]
\label{prop:same_motion}
Let two joints with unit screws $\bm{\xi}_1, \bm{\xi}_2$, finite limits $l_i^- < l_i^+$, and a shared anchor $g_0$ generate the placement curves $g_i(q) = \exp(q\,\hat{\bm{\xi}}_i)\,g_0$ on $[l_i^-, l_i^+]$. The curves coincide up to a coordinate change $q \mapsto \sigma q + \delta$ with $\sigma \in \{\pm 1\}$ iff the encodings agree up to the orientation reversal of Proposition~\ref{prop:axis_flip} and, when the screw is revolute, a shift of both limits by a common $2\pi k$ with $k \in \mathbb{Z}$. Consequently $E(J_1,J_2)=0$ iff the encodings agree up to orientation reversal, which for a prismatic or helical screw is iff the joints generate the same motion, and for a revolute screw iff they generate the same motion with the same winding of the coordinate.
\end{proposition}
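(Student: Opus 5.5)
The proof splits into two parts: the placement-curve characterization, and then the consequence for $E$.

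\textbf{Curve characterization, easy direction.} If the encodings agree up to orientation reversal, take $\sigma=-1$, $\delta=0$. Then $\exp(-q\,\hat{\bm{\xi}})=\exp(q(-\hat{\bm{\xi}}))$ maps $[-l^+,-l^-]$ onto the same image curve. For a revolute screw, $\exp(2\pi k\,\hat{\bm{\xi}})=I$, because the rotation part is the identity and the translational part of a pure rotation about an axis returns to the start. The shift $q\mapsto q+2\pi k$ therefore reproduces the curve.

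\textbf{Curve characterization, hard direction.} Suppose $g_1(q)=g_2(\sigma q+\delta)$ for all $q\in[l_1^-,l_1^+]$. Right-multiplying by $g_0^{-1}$ gives $\exp(q\hat{\bm{\xi}}_1)=\exp((\sigma q+\delta)\hat{\bm{\xi}}_2)$ on an interval of positive length. Differentiating in $q$ and translating back to the identity by left-multiplying with $\exp(-q\hat{\bm{\xi}}_1)$ gives $\hat{\bm{\xi}}_1=\sigma\,\hat{\bm{\xi}}_2$. One-parameter subgroups have constant body velocity, so no commutation issue arises. The unit normalization and the assumed convention then give $\bm{\xi}_1=\sigma\bm{\xi}_2$.

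Substituting back yields $\exp(q\hat{\bm{\xi}}_1)=\exp(q\hat{\bm{\xi}}_1)\exp(\delta\hat{\bm{\xi}}_2)$, hence $\exp(\delta\hat{\bm{\xi}}_2)=I$. For prismatic and helical screws the translation along the axis grows monotonically in $\delta$, which forces $\delta=0$. For revolute screws the rotation angle forces $\delta\in 2\pi\mathbb{Z}$.

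It remains to match the intervals. The reparametrization must be a bijection $[l_1^-,l_1^+]\to[l_2^-,l_2^+]$. With $\sigma=1$ this gives $l_2^\pm=l_1^\pm+\delta$. With $\sigma=-1$ it gives the reversed interval shifted by $\delta$, which is the orientation reversal combined with a $2\pi k$ shift.

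\textbf{Main obstacle.} The delicate point is what ``curves coincide'' means. If it means coincidence of image sets rather than parametrized curves, a revolute arc of length at least $2\pi$ has the same image under many limit choices. I would therefore read the statement as equality of parametrized curves up to the affine reparametrization, which is the reading used in the hard direction above.

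\textbf{Consequence for $E$.} By Theorem~\ref{thm:endpoint_metric}, $E(J_1,J_2)=0$ iff the unordered endpoint pairs coincide. Finite limits with $l^-<l^+$ make the joints non-fixed, so Proposition~\ref{prop:identifiability} says the pairs coincide iff the encodings agree up to orientation reversal. Proposition~\ref{prop:axis_flip} gives the converse.

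The $2\pi k$ shift changes the endpoint pair whenever $k\neq 0$. Combined with the curve characterization, this gives the stated dichotomy. For prismatic and helical screws, zero distance is equivalent to generating the same motion. For revolute screws, it is equivalent to the same motion with the same winding, i.e.\ $k=0$.
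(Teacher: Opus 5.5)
Your proposal is correct and follows the paper's proof essentially step for step: cancel the anchor, differentiate to get $\bm{\xi}_2=\sigma\bm{\xi}_1$, reduce to $\exp(\delta\,\hat{\bm{\xi}})=I$ and split by screw type, match the intervals, then get the consequence from identity of indiscernibles together with Propositions~\ref{prop:identifiability} and~\ref{prop:axis_flip}. The differences are cosmetic: you differentiate in body-velocity form where the paper uses the left-multiplied form, and you check explicitly that a $2\pi k$ shift with $k\neq 0$ changes the endpoint pair, which the paper leaves to Remark~\ref{rem:anchor}.
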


\begin{proof}
Suppose $\exp((\sigma q+\delta)\,\hat{\bm{\xi}}_1)\,g_0=\exp(q\,\hat{\bm{\xi}}_2)\,g_0$ for all $q\in[l_2^-,l_2^+]$, with the limit intervals matched by the coordinate change. Cancel $g_0$ and differentiate in $q$ to get $\sigma\,\hat{\bm{\xi}}_1 X(q)=\hat{\bm{\xi}}_2 X(q)$ with $X(q)=\exp(q\,\hat{\bm{\xi}}_2)$ invertible, so $\bm{\xi}_2=\sigma\bm{\xi}_1$. Substituting back leaves $\exp(\delta\,\hat{\bm{\xi}}_1)=I$. A prismatic screw translates by $\delta$ and a helical screw by $\delta$ times its nonzero pitch, so both force $\delta=0$, while a revolute screw rotates by the angle $\delta$ about its axis line, so $\delta\in2\pi\mathbb{Z}$. Hence for $\sigma=1$ the encodings agree up to a revolute $2\pi k$ limit shift, and for $\sigma=-1$ up to that shift composed with orientation reversal. Conversely, orientation reversal reparameterizes the same curve by $q\mapsto -q$, and a revolute $2\pi k$ limit shift reparameterizes it by $q\mapsto q+2\pi k$, fixing the curve pointwise by the $2\pi$-periodicity of the revolute exponential. For the consequence, orientation reversal only swaps the endpoint pair, so encodings that agree up to it have $E=0$. If instead $E=0$, the endpoint pairs are equal as unordered pairs, and Proposition~\ref{prop:identifiability} recovers the encodings up to orientation reversal. The first part identifies this with the same motion, except for the revolute $2\pi k$ shift, which $E$ does not quotient (Remark~\ref{rem:anchor}).
\end{proof}

\begin{remark}
Unit normalization makes the affine form of the coordinate change automatic rather than an assumption. If a smooth monotone reparameterization $f$ matches the two curves, the same differentiation gives $f'(q)\,\bm{\xi}_1=\bm{\xi}_2$ for all $q$, so $f'$ is a constant of absolute value one. Continuous joints have $l^\pm=\pm\infty$, every shift reparameterizes the same full orbit, and the compactified representation $\{\pm\bm{\xi}/\lVert\bm{\xi}\rVert\}$ is already shift-invariant. A fixed joint is a constant curve and is represented by $\{0,0\}$ for every screw.
\end{remark}

\begin{remark}[Re-anchoring]
\label{rem:anchor}
An encoding that stores the object in a different state, with anchor $g_0'=\exp(\delta\,\xih)\,g_0$ and limits $[l^--\delta,\,l^+-\delta]$, describes the same motion. Before evaluation it is re-anchored by shifting its limits by $\delta$, so that both coordinates read zero at the same placement, and a drawer saved half open is not charged a limit error against the same drawer saved closed. The revolute $2\pi$ shift is the one re-anchoring that no stored state pins down, since $\exp(2\pi k\,\xih)\,g_0=g_0$, and $E$ deliberately leaves it unquotiented. A URDF whose rest state lies a full turn outside its own limits is a different specification for every simulator that does not reduce revolute coordinates, and any choice of a representative from each $2\pi\mathbb{Z}$ orbit is discontinuous at its seam. If one nevertheless wants to identify encodings that differ by a full turn, one may first replace $l^\pm$ by $l^\pm-2\pi k$ with $k=\big\lfloor\tfrac{1}{2\pi}\big(\tfrac{l^-+l^+}{2}+\pi\big)\big\rfloor$, which wraps the limit midpoint into $[-\pi,\pi)$ and makes $E=0$ exactly on joints with the same motion; the price is that $[0,2\pi-\epsilon]$ and $[0,2\pi]$ are then sent a full turn apart.
\end{remark}

\subsection{Continuity and compactification}
\label{app:compact}

\begin{proposition}[Continuous limit to a fixed joint]
\label{prop:fixed_continuity}
Let $J_0=\{0,0\}$ represent a fixed joint and let $J_\epsilon=\{0,\epsilon\bm{\xi}\}$ be a finite-range one-DOF joint with normalized screw $\bm{\xi}$. Then $E(J_\epsilon,J_0)=|\epsilon|\,\|\bm{\xi}\| \to 0$ as $\epsilon\to 0$.
\end{proposition}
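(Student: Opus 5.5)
The plan is to evaluate both assignments in \eqref{eq:E} directly, since the fixed joint makes the computation collapse. Write $J_\epsilon=\{\bm{z}_\epsilon^-,\bm{z}_\epsilon^+\}=\{0,\epsilon\bm{\xi}\}$ and $J_0=\{0,0\}$. In the direct assignment we pair $0$ with $0$ and $\epsilon\bm{\xi}$ with $0$, which gives $\lVert 0-0\rVert^2+\lVert\epsilon\bm{\xi}-0\rVert^2=\epsilon^2\lVert\bm{\xi}\rVert^2$. Both endpoints of $J_0$ are the same point, so the swapped assignment is literally the same sum and gives $\epsilon^2\lVert\bm{\xi}\rVert^2$ again. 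The minimum is therefore this common value.

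Next we use absolute homogeneity of the norm induced by the fixed inner product on $V$, namely $\lVert\epsilon\bm{\xi}\rVert=|\epsilon|\,\lVert\bm{\xi}\rVert$. Taking the square root then gives $E(J_\epsilon,J_0)=|\epsilon|\,\lVert\bm{\xi}\rVert$. This holds for any sign of $\epsilon$. For $\epsilon<0$ the interval is read as $[\epsilon,0]$, but the unordered pair is unchanged, so the formula is unaffected.

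For the limit, $\bm{\xi}$ is a fixed normalized screw, so $\lVert\bm{\xi}\rVert$ is a finite constant. Under the split norm it equals $1$ for a revolute screw and $\alpha$ for a prismatic one. Under $\lVert\cdot\rVert_B$ it is a finite number determined by $(m,\bm{c},I)$. Hence $|\epsilon|\,\lVert\bm{\xi}\rVert\to0$ as $\epsilon\to0$.

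There is no real obstacle. The only point needing care is to note that the swap in the quotient definition is vacuous here, because $J_0$ has coincident endpoints. One could add that, by Theorem~\ref{thm:endpoint_metric}, this shows $J_\epsilon\to J_0$ in the metric topology. One could also note that the bound is uniform over unit screws for which $\lVert\bm{\xi}\rVert$ is bounded.
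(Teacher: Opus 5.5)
Your proof is correct and matches the paper's: both endpoints of $J_0$ are zero, so the two assignments in \eqref{eq:E} coincide, and homogeneity of the norm gives $E(J_\epsilon,J_0)=|\epsilon|\,\lVert\bm{\xi}\rVert$. One side remark is off, though it does not affect the argument since only finiteness of $\lVert\bm{\xi}\rVert$ is used: under the split norm a revolute screw $(\bm{a},\bm{o}\times\bm{a})$ has $\lVert\bm{\xi}\rVert_\alpha=(1+\alpha^2\lVert\bm{o}\times\bm{a}\rVert^2)^{1/2}$, which equals $1$ only when the axis passes through the frame origin.
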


\begin{proof}
Both endpoints of $J_0$ are zero, so either endpoint assignment yields the same value, $E(J_\epsilon,J_0)=\big(\|0-0\|^2+\|\epsilon\bm{\xi}-0\|^2\big)^{1/2}=|\epsilon|\,\|\bm{\xi}\|$.
\end{proof}

For a scale parameter $\kappa>0$, let $\phi_\kappa : V \rightarrow V$ be the radial map
\begin{equation}
\label{eq:phikappa}
\phi_\kappa(\bm{z})
=
\begin{cases}
\tanh\!\left(\dfrac{\|\bm{z}\|}{\kappa}\right)
\dfrac{\bm{z}}{\|\bm{z}\|},
&
\bm{z}\neq 0,
\\[1.2ex]
0,
&
\bm{z}=0,
\end{cases}
\end{equation}
whose image is the open unit ball $\mathbb{B}^\circ=\{\bm{x}\in V:\|\bm{x}\|<1\}$. The compactified endpoint metric is $E^{\phi}(J_1,J_2)=E\big(\{\phi_\kappa(\bm{z}_1^-),\phi_\kappa(\bm{z}_1^+)\},\{\phi_\kappa(\bm{z}_2^-),\phi_\kappa(\bm{z}_2^+)\}\big)$.

\begin{proposition}[Convergence to an unbounded joint]
\label{prop:continuous_limit}
Let $\bm{u}\in V$ with $\|\bm{u}\|=1$, and consider the symmetric finite-range joints $J_L=\{-L\bm{u},L\bm{u}\}$. Under radial compactification, $J_L \to J_\infty \equiv \{-\bm{u},\bm{u}\}$ as $L\to\infty$, with $E^{\phi}(J_L,J_\infty)=\sqrt{2}\,\big(1-\tanh(L/\kappa)\big) \to 0$.
\end{proposition}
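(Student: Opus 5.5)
This is a direct computation. We first compactify the endpoints of $J_L$ and then evaluate the two swap assignments in \eqref{eq:E}. Since $\|\bm{u}\|=1$, we have $\|\pm L\bm{u}\|=L$ for $L>0$. The radial map \eqref{eq:phikappa} then gives $\phi_\kappa(\pm L\bm{u})=\pm\tanh(L/\kappa)\,\bm{u}$. Writing $t_L=\tanh(L/\kappa)\in(0,1)$, the compactified joint is $\{-t_L\bm{u},\,t_L\bm{u}\}$.

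For the limit joint, $J_\infty=\{-\bm{u},\bm{u}\}$ is already the compactified boundary representative of the continuous joint, as in Algorithm~\ref{alg:joint} and the remark following Proposition~\ref{prop:same_motion}. Its endpoints are used as they stand, and $\phi$ is not applied to them. This is the only interpretive point to settle. Once it is fixed, the convergence $J_L\to J_\infty$ in the compactified space is the same statement as the distance going to zero.

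Next we compute both assignments. The direct assignment pairs $-t_L\bm{u}$ with $-\bm{u}$ and $t_L\bm{u}$ with $\bm{u}$. Each difference has norm $(1-t_L)\|\bm{u}\|=1-t_L$, so the squared sum is $2(1-t_L)^2$. The swapped assignment pairs $-t_L\bm{u}$ with $\bm{u}$ and $t_L\bm{u}$ with $-\bm{u}$. Each difference has norm $1+t_L$, giving $2(1+t_L)^2$. Because $t_L\ge 0$, the minimum is the direct one. Taking the square root yields $E^{\phi}(J_L,J_\infty)=\sqrt{2}\,(1-t_L)$, which is the claimed formula.

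Finally, $\tanh(L/\kappa)\to1$ as $L\to\infty$ for fixed $\kappa>0$, so the distance tends to $0$. Since $E^{\phi}$ is the metric $E$ of Theorem~\ref{thm:endpoint_metric} evaluated on compactified pairs, this convergence in $E^{\phi}$ is exactly $J_L\to J_\infty$ in the compactified endpoint space. The only real obstacle is the convention for the boundary pair. Everything else is the two-line norm computation, which relies only on $\|\bm{u}\|=1$ and the homogeneity of the inner-product norm.
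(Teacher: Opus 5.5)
Your proposal is correct and follows essentially the same route as the paper: compactify to $\pm\tanh(L/\kappa)\,\bm{u}$, observe that the crossed assignment costs $2(1+\tanh(L/\kappa))^2 \ge 2(1-\tanh(L/\kappa))^2$ so the sign-matched one attains the minimum, and read off $\sqrt{2}\,(1-\tanh(L/\kappa))\to 0$. Your explicit convention of using $J_\infty=\{-\bm{u},\bm{u}\}$ as the boundary pair without applying $\phi$ matches how the paper treats continuous joints.
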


\begin{proof}
Since $\|\bm{u}\|=1$, we have $\phi_\kappa(\pm L\bm{u})=\pm\tanh(L/\kappa)\bm{u}$, so the compactified endpoint pair is $\{-\tanh(L/\kappa)\bm{u},\,\tanh(L/\kappa)\bm{u}\}$. Of the two endpoint assignments against $J_\infty=\{-\bm{u},\bm{u}\}$, the sign-matched one attains the minimum, since the crossed assignment costs $2\big(1+\tanh(L/\kappa)\big)^2 \ge 2\big(1-\tanh(L/\kappa)\big)^2$. The matched assignment gives
\begin{equation}
\label{eq:ephi-limit}
E^{\phi}(J_L,J_\infty)^2 = \big\|{-}\tanh(L/\kappa)\bm{u}+\bm{u}\big\|^2 + \big\|\tanh(L/\kappa)\bm{u}-\bm{u}\big\|^2 = 2\big(1-\tanh(L/\kappa)\big)^2,
\end{equation}
and $\tanh(L/\kappa)\to1$ as $L\to\infty$.
\end{proof}

\begin{remark}[Completion interpretation]
The radial map is a bijection from $V$ onto $\mathbb{B}^\circ$, and the metric completion of the open unit ball under the ambient norm is the closed unit ball $\overline{\mathbb{B}}=\{\bm{x}:\|\bm{x}\|\le1\}$. The antipodal pair $\{-\bm{u},\bm{u}\}$ with $\|\bm{u}\|=1$ lies on the boundary of this completion and is the natural limit of finite-range joints whose range tends to infinity along the screw direction $\bm{u}$.
\end{remark}

\begin{proposition}[Compactified metric]
\label{prop:compact_metric}
$E^{\phi}$ is a metric on joints of all five types, continuous joints included, and it keeps the fixed joint at $J_0=\{0,0\}$.
\end{proposition}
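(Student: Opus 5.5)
The plan is to reduce Proposition~\ref{prop:compact_metric} to Theorem~\ref{thm:endpoint_metric} by pulling the quotient metric back along an injective map. Define the extended endpoint map $\Phi$ on all joints as follows.

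For finite endpoints, set $\Phi(\bm{z})=\phi_\kappa(\bm{z})$ as in \eqref{eq:phikappa}. For the infinite endpoints $\pm\infty\,\bm{\xi}$ of a continuous joint, set $\Phi(\pm\infty\,\bm{\xi})=\pm\bm{\xi}/\lVert\bm{\xi}\rVert$, following Algorithm~\ref{alg:joint}. A joint then maps to an unordered pair in $\overline{\mathbb{B}}\subset V$, and by definition $E^{\phi}(J_1,J_2)=E(\Phi J_1,\Phi J_2)$. Theorem~\ref{thm:endpoint_metric} holds for unordered pairs of arbitrary points of $V$, so it holds in particular for pairs in $\overline{\mathbb{B}}$. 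This gives non-negativity, symmetry and the triangle inequality for $E^{\phi}$ immediately, since the pullback of a metric along any map is a pseudometric.

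It remains to show identity of indiscernibles, namely $E^{\phi}(J_1,J_2)=0$ iff $J_1$ and $J_2$ are the same joint in the sense of Proposition~\ref{prop:same_motion}. Two joints are the same up to orientation reversal, and for continuous joints up to orientation of $\bm{\xi}$. By Theorem~\ref{thm:endpoint_metric}, $E^{\phi}=0$ forces the unordered image pairs to be equal, so it suffices to show that $\Phi$ is injective on endpoints. Then equal image pairs mean equal raw endpoint pairs, and Proposition~\ref{prop:identifiability} and Proposition~\ref{prop:same_motion} finish the argument.

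Injectivity splits into three cases.
\begin{itemize}
\item[(a)] On finite endpoints, $\phi_\kappa$ is a bijection $V\to\mathbb{B}^\circ$. Its inverse is $\bm{x}\mapsto\kappa\,\mathrm{artanh}(\lVert\bm{x}\rVert)\,\bm{x}/\lVert\bm{x}\rVert$, which works because $\tanh$ is strictly increasing from $[0,\infty)$ onto $[0,1)$ and the direction is preserved.
\item[(b)] Infinite endpoints land on the unit sphere $\lVert\bm{x}\rVert=1$, which is disjoint from $\mathbb{B}^\circ$. A finite endpoint and an infinite endpoint therefore never collide. In particular, a finite joint is never at distance zero from a continuous one.
\item[(c)] Among infinite endpoints, $\pm\bm{\xi}/\lVert\bm{\xi}\rVert$ determines the ray of $\bm{\xi}$. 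By the normalization convention it determines $\bm{\xi}$ up to sign, and the sign is absorbed by the swap. So two continuous joints are at distance zero iff they share their screw up to orientation, which is the intended identification from the remark following Proposition~\ref{prop:same_motion}.
\end{itemize}

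The main care point is joints with one finite and one infinite limit. These are half-infinite ranges, which the paper mostly sets aside. The same case analysis handles them: the finite endpoint is recovered through case (a) and the infinite one through case (c). Agreement of the screws, which is needed to read off the limit from the finite endpoint, follows because both endpoints are collinear with $\bm{\xi}$.

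For the claim about the fixed joint, note that $\phi_\kappa(0)=0$. Hence $\Phi J_0=\{0,0\}=J_0$. The fixed joint stays at the origin and $E^{\phi}(J,J_0)$ remains the compactified motion magnitude. The same argument shows that every fixed encoding, whatever its screw, maps to this single point, which is consistent with the remark after Proposition~\ref{prop:identifiability}.

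I expect the only real obstacle to be bookkeeping the boundary points cleanly, so that injectivity across the interior and the sphere is stated precisely. The metric axioms themselves are inherited verbatim.
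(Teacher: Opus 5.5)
Your proposal is correct and matches the paper's proof. Both pull the quotient metric of Theorem~\ref{thm:endpoint_metric} back along $\Phi$, which is injective because $\phi_\kappa$ is a bijection onto the open ball, continuous joints land on its boundary, and a boundary pair $\{\pm\bm{u}\}$ fixes the screw up to the sign the swap already absorbs, while $\phi_\kappa(0)=0$ keeps $J_0$ at the origin; your explicit inverse, the half-infinite case, and the tie-in to Propositions~\ref{prop:identifiability} and~\ref{prop:same_motion} are extra bookkeeping on the same route.
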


\begin{proof}
Write $\Phi(J)=\{\phi_\kappa(\bm{z}^-),\phi_\kappa(\bm{z}^+)\}$ for a finite joint and $\Phi(J)=\{-\bm{\xi}/\|\bm{\xi}\|,\,\bm{\xi}/\|\bm{\xi}\|\}$ for a continuous one, so that $E^{\phi}(J_1,J_2)=E(\Phi(J_1),\Phi(J_2))$. Every image is an unordered pair in $V$, on which $E$ is a metric by Theorem~\ref{thm:endpoint_metric}. The map $\Phi$ is injective on unordered endpoint pairs. On finite joints it applies the bijection $\phi_\kappa$ to each endpoint, finite joints land in the open ball and continuous joints on its boundary, and a boundary pair $\{\pm\bm{u}\}$ fixes the unit screw up to the sign that the unordered pair already absorbs. A metric pulled back along an injective map is a metric, and $\phi_\kappa(0)=0$ gives $\Phi(J_0)=J_0$.
\end{proof}

\begin{remark}[Scale equivariance and dimensionlessness]
\label{rem:scale}
Under the kinetic norm, rescaling every length by $s>0$ maps $\bm{v}\mapsto s\bm{v}$ in each endpoint twist, $\bm{c}\mapsto s\bm{c}$, and $I\mapsto s^2 I$, so $\|\bm{z}\|_B\mapsto s\|\bm{z}\|_B$ and the raw $E_B$ scales by $s$, which is the precise sense in which it is measured in meters. With $\kappa$ rescaled alongside, a compactified endpoint $\bm{u}=\phi_\kappa(\bm{z})$ transforms as $(\bm{u}_\omega,\bm{u}_v)\mapsto(\bm{u}_\omega/s,\,\bm{u}_v)$, because the rotational block is divided by the meter-valued norm. Substituting a difference of compactified endpoints into \eqref{eq:kineticnorm} then gives
\begin{equation}
\label{eq:scale-cancel}
\frac1m\Big(\frac{\Delta\bm{u}_\omega}{s}\Big)^{\!\top}\!\big(s^2 I\big)\Big(\frac{\Delta\bm{u}_\omega}{s}\Big)
+\Big\lVert \Delta\bm{u}_v+\frac{\Delta\bm{u}_\omega}{s}\times(s\,\bm{c})\Big\rVert^2
=\frac1m\,\Delta\bm{u}_\omega^{\!\top} I\,\Delta\bm{u}_\omega+\big\lVert \Delta\bm{u}_v+\Delta\bm{u}_\omega\times\bm{c}\big\rVert^2,
\end{equation}
so the $s^2$ carried by the inertia tensor is cancelled by the two inverse factors of the rotational block, and the cross term cancels likewise. Hence $E_B^{\phi}$ is invariant under the rescaling, a pure number in $[0,2]$, and the body enters it only through the relative weighting of error directions.
\end{remark}

\subsection{Interpretation of the distance}
\label{app:material}

\textbf{Material motion under the kinetic norm.} The first result gives the exact physical interpretation of the kinetic inner product. Importantly, this result concerns the \emph{linearized material motion field} induced by a twist discrepancy. It does not state that the Lie-algebra norm equals finite Euclidean displacement after exponentiation.

Let $B$ be a rigid body with mass measure $dm(\bm{x})$, total mass $m=\int_B dm(\bm{x})$, and center of mass $\bm{c}=\frac{1}{m}\int_B \bm{x}\,dm(\bm{x})$. Its moment of inertia tensor about the center of mass is $I_C=\int_B\big(\|\bm{x}-\bm{c}\|^2 I_3-(\bm{x}-\bm{c})(\bm{x}-\bm{c})^\top\big)\,dm(\bm{x})$. A twist discrepancy $\Delta \bm{z}=(\Delta\bm{\omega},\Delta\bm{v})$ induces the linearized material-motion field $\bm{w}_{\Delta \bm{z}}(\bm{x})=\Delta\bm{\omega}\times \bm{x}+\Delta\bm{v}$. For two finite placements $g_1, g_2 \in SE(3)$ of the body, the configuration discrepancy of \S\ref{sec:lift} is $d(g_1,g_2)^2 = \int_{B} \lVert g_1 \bm{x} - g_2 \bm{x}\rVert^2\, dm(\bm{x})$, which depends on $dm$ only through $(m, \bm{c}, I_C)$ \citep{kazerounian1992object,park1995distance}, and the calibration of Figure~\ref{fig:calibration} computes it on the actual $SE(3)$ displacements.

\begin{theorem}[Material-motion identity]
\label{thm:material_motion}
The mass-weighted squared $L_2$ magnitude of the material-motion field is
\begin{align}
\int_B
\|\bm{w}_{\Delta \bm{z}}(\bm{x})\|^2\,dm(\bm{x})
=
\Delta\bm{\omega}^\top I_C\Delta\bm{\omega}
+
m
\left\|
\Delta\bm{v}+\Delta\bm{\omega}\times \bm{c}
\right\|^2.
\label{eq:material_motion_identity}
\end{align}
Consequently, the RMS material-motion discrepancy is
\begin{align}
d_{\mathrm{RMS}}^{\mathrm{lin}}(\Delta \bm{z})
=
\left(
\frac{1}{m}
\int_B
\|\bm{w}_{\Delta \bm{z}}(\bm{x})\|^2\,dm(\bm{x})
\right)^{1/2}
=
\left[
\frac{1}{m}
\Delta\bm{\omega}^\top I_C\Delta\bm{\omega}
+
\left\|
\Delta\bm{v}+\Delta\bm{\omega}\times \bm{c}
\right\|^2
\right]^{1/2}.
\label{eq:rms_motion}
\end{align}
\end{theorem}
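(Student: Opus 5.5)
The plan is a direct expansion of the integrand around the center of mass, so that the cross terms vanish by the definition of $\bm{c}$. I would write every material point as $\bm{x}=\bm{c}+\bm{r}$ with $\bm{r}=\bm{x}-\bm{c}$. Then the field splits as
\[
\bm{w}_{\Delta\bm{z}}(\bm{x})=\big(\Delta\bm{v}+\Delta\bm{\omega}\times\bm{c}\big)+\Delta\bm{\omega}\times\bm{r}.
\]
The first summand is a constant vector $\bm{u}$, independent of $\bm{x}$. The second is linear in $\bm{r}$.

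Squaring gives three terms, $\|\bm{u}\|^2+2\,\bm{u}\cdot(\Delta\bm{\omega}\times\bm{r})+\|\Delta\bm{\omega}\times\bm{r}\|^2$, which I would integrate against $dm$ one at a time.

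\textbf{Constant term.} It integrates to $m\|\bm{u}\|^2$, which is exactly the second term of \eqref{eq:material_motion_identity}.

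\textbf{Cross term.} It is linear in $\bm{r}$, so its integral is $2\,\bm{u}\cdot\big(\Delta\bm{\omega}\times\int_B\bm{r}\,dm\big)$. This vanishes because $\int_B(\bm{x}-\bm{c})\,dm=m\bm{c}-m\bm{c}=0$ by the definition of the center of mass.

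\textbf{Quadratic term.} I would use the Lagrange identity
\[
\|\Delta\bm{\omega}\times\bm{r}\|^2=\|\Delta\bm{\omega}\|^2\|\bm{r}\|^2-(\Delta\bm{\omega}\cdot\bm{r})^2=\Delta\bm{\omega}^\top\big(\|\bm{r}\|^2I_3-\bm{r}\bm{r}^\top\big)\Delta\bm{\omega}.
\]
Integrating and pulling the constant $\Delta\bm{\omega}$ outside the integral yields $\Delta\bm{\omega}^\top I_C\,\Delta\bm{\omega}$, using the stated formula for $I_C$.

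Summing the three contributions gives \eqref{eq:material_motion_identity}. Dividing by $m>0$ and taking the square root gives \eqref{eq:rms_motion}. The right-hand side of \eqref{eq:rms_motion} is exactly $\lVert\Delta\bm{z}\rVert_B$ of \eqref{eq:kineticnorm} with $I=I_C$, which is the identification used in \S\ref{sec:design}.

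None of these steps is a real obstacle. The only points needing care are the following.
\begin{itemize}
\item One must check that $\bm{w}$ is square-integrable, which holds whenever $B$ has finite mass and finite second moments, the standing assumption under which $I_C$ is defined.
\item The inertia tensor must be taken about the center of mass $\bm{c}$ rather than the origin. Using the origin would reintroduce the cross term and change the translational part.
\end{itemize}

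I would also remark that the identity concerns only the linearized field, as the preceding paragraph of the paper stresses. No claim about the finite $SE(3)$ discrepancy $d(g_1,g_2)$ is needed or made.
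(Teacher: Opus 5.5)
Your proposal is correct and follows the paper's proof essentially step for step. The paper likewise writes $\bm{x}=\bm{c}+\bm{y}$, sets $\bm{u}=\Delta\bm{v}+\Delta\bm{\omega}\times\bm{c}$, kills the cross term via $\int_B \bm{y}\,dm=0$, converts the quadratic term into $\Delta\bm{\omega}^\top I_C\Delta\bm{\omega}$ with the identity $\|\bm{a}\times\bm{y}\|^2=\bm{a}^\top(\|\bm{y}\|^2 I_3-\bm{y}\bm{y}^\top)\bm{a}$, and then divides by $m$ and takes the square root, while your extra remarks on square-integrability and on taking the inertia about $\bm{c}$ are accurate but go beyond what the proof needs.
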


\begin{proof}
Write $\bm{x}=\bm{c}+\bm{y}$ with $\bm{y}=\bm{x}-\bm{c}$, so that $\int_B \bm{y}\,dm(\bm{x})=0$ by definition of the center of mass. The field becomes $\bm{w}_{\Delta \bm{z}}(\bm{x})=\bm{u}+\Delta\bm{\omega}\times \bm{y}$ with $\bm{u}=\Delta\bm{v}+\Delta\bm{\omega}\times \bm{c}$, and hence
\begin{equation}
\label{eq:l2field}
\int_B \|\bm{w}_{\Delta \bm{z}}(\bm{x})\|^2\,dm(\bm{x})
=
m\|\bm{u}\|^2
+ 2\bm{u}^\top\Big(\Delta\bm{\omega}\times \int_B \bm{y}\,dm(\bm{x})\Big)
+ \int_B \|\Delta\bm{\omega}\times \bm{y}\|^2\,dm(\bm{x}).
\end{equation}
The middle term vanishes. For the last term, the vector identity $\|\bm{a}\times \bm{y}\|^2=\bm{a}^\top\big(\|\bm{y}\|^2I_3-\bm{y}\bm{y}^\top\big)\bm{a}$ gives $\int_B \|\Delta\bm{\omega}\times \bm{y}\|^2\,dm(\bm{x})=\Delta\bm{\omega}^\top I_C\Delta\bm{\omega}$, proving \eqref{eq:material_motion_identity}. Dividing by $m$ and taking the square root gives \eqref{eq:rms_motion}.
\end{proof}

\begin{remark}
If the body measure is normalized so that $m=1$, \eqref{eq:rms_motion} directly becomes $\|\Delta \bm{z}\|_B^2=\Delta\bm{\omega}^\top I_C\Delta\bm{\omega}+\|\Delta\bm{v}+\Delta\bm{\omega}\times \bm{c}\|^2$. For an unnormalized physical mass measure, \eqref{eq:material_motion_identity} is a mass-weighted $L_2$ norm and \eqref{eq:rms_motion} the corresponding RMS quantity.
\end{remark}

\textbf{Endpoints and the full motion segment.} The second result holds for any inner product. For a finite one-DOF joint, the endpoint-twist representation determines a line segment in twist-coordinate space. After selecting the endpoint assignment that realizes $E$, let $\bm{\Delta}^- = \bm{z}_1^- - \bm{z}_2^-$ and $\bm{\Delta}^+ = \bm{z}_1^+ - \bm{z}_2^+$, and write $\bm{\Delta}(t)=(1-t)\bm{\Delta}^-+t\bm{\Delta}^+$ for $t\in[0,1]$ for the linearly parameterized discrepancy along the two motion segments. Its RMS trajectory discrepancy in twist space is $D_{\mathrm{path}}^2=\int_0^1\|\bm{\Delta}(t)\|^2\,dt$.

\begin{theorem}[Endpoint-to-path equivalence]
\label{thm:endpoint_path}
For the optimal endpoint assignment,
\begin{equation}
\label{eq:endpoint-path-bounds}
\frac{1}{\sqrt{6}}E(J_1,J_2)
\le
D_{\mathrm{path}}
\le
\frac{1}{\sqrt{2}}E(J_1,J_2).
\end{equation}
Thus the endpoint metric and the $L_2$ discrepancy of the complete Lie-algebra motion segments are equivalent up to universal constants.
\end{theorem}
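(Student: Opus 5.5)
The plan is to compute $D_{\mathrm{path}}^2$ in closed form and compare it with $E^2 = \|\bm{\Delta}^-\|^2+\|\bm{\Delta}^+\|^2$, which holds for the optimal assignment by definition of \eqref{eq:E}. Write $a=\|\bm{\Delta}^-\|^2$, $b=\|\bm{\Delta}^+\|^2$ and $c=\langle\bm{\Delta}^-,\bm{\Delta}^+\rangle$. Expanding the inner product and integrating the Bernstein-type weights over $[0,1]$ gives
\begin{equation*}
D_{\mathrm{path}}^2=\int_0^1\Big((1-t)^2a+2t(1-t)c+t^2b\Big)\,dt=\frac{a+b+c}{3}.
\end{equation*}
The two bounds then reduce to bounding $c$ in terms of $a+b=E^2$.

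\textbf{Upper bound.} Cauchy--Schwarz and AM--GM give $c\le\sqrt{ab}\le (a+b)/2$. Hence $D_{\mathrm{path}}^2\le \tfrac{1}{3}\cdot\tfrac{3}{2}(a+b)=E^2/2$, which is $D_{\mathrm{path}}\le E/\sqrt2$. Equality holds when $\bm{\Delta}^-=\bm{\Delta}^+$, a pure offset of the segment.

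\textbf{Lower bound.} The same inequalities give $c\ge-\sqrt{ab}\ge-(a+b)/2$, so $D_{\mathrm{path}}^2\ge \tfrac13\cdot\tfrac12(a+b)=E^2/6$, which is $D_{\mathrm{path}}\ge E/\sqrt6$. Equality holds when $\bm{\Delta}^-=-\bm{\Delta}^+$, where the discrepancy crosses zero at the midpoint.

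Both constants are attained, so they are sharp. The only point needing care is the phrase ``for the optimal endpoint assignment''. The closed form holds for either assignment, so we only have to check that $a+b$ is then exactly $E^2$, and this is true by the definition of the min in \eqref{eq:E}. The sign convention on $\bm{\Delta}^\pm$ does not matter: the squared norm of $\bm{\Delta}(t)$ is invariant under a global sign change, and reparameterizing $t\mapsto1-t$ swaps $a$ and $b$ while leaving $c$ unchanged. No real obstacle is expected. The step most prone to error is getting the integrals $\int(1-t)^2=\int t^2=1/3$ and $\int 2t(1-t)=1/3$ right.
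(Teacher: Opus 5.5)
Your proposal is correct and follows the paper's proof step for step. You expand $D_{\mathrm{path}}^2=\tfrac13\big(\|\bm{\Delta}^-\|^2+\|\bm{\Delta}^+\|^2+\langle\bm{\Delta}^-,\bm{\Delta}^+\rangle\big)$ and bound the cross term in absolute value by $\tfrac12E(J_1,J_2)^2$ via Cauchy--Schwarz and AM--GM. Your sharpness remark goes beyond the paper and is also right: both constants are attained by actual joints under the optimal assignment, e.g.\ $\{0,\bm{\xi}\}$ against $\{\bm{\xi},2\bm{\xi}\}$ for the upper constant and $\{-\bm{\xi},\bm{\xi}\}$ against $J_0$ for the lower one.
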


\begin{proof}
For the selected assignment, $E(J_1,J_2)^2=\|\bm{\Delta}^-\|^2+\|\bm{\Delta}^+\|^2$. Expanding $D_{\mathrm{path}}^2$ and using $\int_0^1(1-t)^2dt=\int_0^1t^2dt=\int_0^1 2t(1-t)dt=\frac13$,
\begin{align}
D_{\mathrm{path}}^2
=
\frac13
\left(
\|\bm{\Delta}^-\|^2
+
\|\bm{\Delta}^+\|^2
+
\langle\bm{\Delta}^-,\bm{\Delta}^+\rangle
\right).
\label{eq:path_exact}
\end{align}
By Cauchy--Schwarz, $|\langle\bm{\Delta}^-,\bm{\Delta}^+\rangle| \le \|\bm{\Delta}^-\|\|\bm{\Delta}^+\| \le \frac12\big(\|\bm{\Delta}^-\|^2+\|\bm{\Delta}^+\|^2\big)=\frac12 E(J_1,J_2)^2$. Substituting these bounds into \eqref{eq:path_exact} gives $\frac16 E(J_1,J_2)^2 \le D_{\mathrm{path}}^2 \le \frac12 E(J_1,J_2)^2$, and taking square roots proves the claim.
\end{proof}

\begin{remark}
Theorem~\ref{thm:endpoint_path} concerns the line segments traced by the joint representation in twist-coordinate space. It should not be interpreted as an exact equality with finite Euclidean material displacement after applying the exponential map to $SE(3)$.
\end{remark}

\subsection{The tree edit metric}
\label{app:tree}

A kinematic tree is a finite tree $T=(V,\mathcal{E},\ell)$ with unlabeled nodes (links) and each edge $e$ labeled by a joint $\ell(e)\in\mathcal{J}$, where $(\mathcal{J},E)$ is the metric space of unordered endpoint pairs of Theorem~\ref{thm:endpoint_metric} or, with continuous joints admitted, its compactified counterpart $(\mathcal{J},E^{\phi})$ of Proposition~\ref{prop:compact_metric}. The results below use only that the label distance is a metric with $J_0$ as a point, so they hold for either, and the reported $E_\alpha^{\phi,\mathrm{tree}}$ is the second case. Trees are unrooted, and all twists are expressed in a common canonical object frame, so edge labels do not change under re-rooting and the endpoint quotient already absorbs the per-joint sign. The zero-motion (fixed) joint is $J_0=\{0,0\}$.

Three edit operations transform trees, with costs
\begin{equation}
\label{eq:editcosts}
c_{\mathrm{sub}}(J,J') = E(J,J'),
\qquad
c_{\mathrm{con}}(J) = E(J,J_0),
\qquad
c_{\mathrm{exp}}(J) = E(J,J_0),
\end{equation}
where a substitution replaces the label of one edge, a contraction removes an edge and merges its two endpoints, and an expansion splits a node into two nodes joined by a fresh edge labeled $J$. For finite edit sequences $\pi$ we set
\begin{equation}
\label{eq:dtree-app}
E^{\mathrm{tree}}(T_1,T_2)
=
\inf_{\pi\,:\,T_1\rightsquigarrow T_2}
\sum_{o\in\pi} c(o).
\end{equation}
Call $T_1$ and $T_2$ \emph{representation equivalent}, written $T_1\sim T_2$, if they differ only by a tree isomorphism together with insertions or deletions of edges labeled exactly $J_0$. Contracting every $J_0$-labeled edge yields a canonical representative $\hat{T}$ of each class, in which every label satisfies $E(\ell(e),J_0)>0$.

\begin{proposition}[Consistency with the joint metric]
\label{prop:no_shortcut}
For all $J_1,J_2\in\mathcal{J}$,
\begin{equation}
\label{eq:no-shortcut}
c_{\mathrm{sub}}(J_1,J_2)
\le
c_{\mathrm{con}}(J_1)+c_{\mathrm{exp}}(J_2).
\end{equation}
Deleting a joint and inserting another is never cheaper than substituting directly.
\end{proposition}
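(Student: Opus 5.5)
The inequality \eqref{eq:no-shortcut} is a single instance of the triangle inequality for the label metric, routed through the fixed joint $J_0$. By the edit costs \eqref{eq:editcosts}, the left side is $E(J_1,J_2)$ and the right side is $E(J_1,J_0)+E(J_2,J_0)$. So the claim reads
\[
E(J_1,J_2)\le E(J_1,J_0)+E(J_0,J_2),
\]
after using symmetry to rewrite $E(J_2,J_0)$ as $E(J_0,J_2)$.

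The plan has three steps. First, I substitute the cost definitions from \eqref{eq:editcosts}. Second, I invoke symmetry of $E$ from Theorem~\ref{thm:endpoint_metric}, or from Proposition~\ref{prop:compact_metric} in the compactified case. Third, I apply the triangle inequality \eqref{eq:quotient-triangle} with middle point $J_0$. This needs $J_0$ to be an element of the label space $\mathcal{J}$.

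That membership holds in both settings the tree section allows. In the uncompactified case, $J_0=\{0,0\}$ is an unordered endpoint pair in $V$. In the compactified case, Proposition~\ref{prop:compact_metric} gives $\Phi(J_0)=J_0$ because $\phi_\kappa(0)=0$. Therefore the same one-line argument covers both $E$ and $E^{\phi}$.

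There is no real obstacle here. The only point that needs care is checking that contraction and expansion are each priced by distance to the same point $J_0$, rather than by some other baseline, so that the triangle inequality applies directly. As a side remark, the bound can be strict. A direct computation gives $E(J,J_0)=\sqrt{\|\bm{z}^-\|^2+\|\bm{z}^+\|^2}$, and when $J_1=J_2\neq J_0$ the left side is $0$ while the right side is positive. This is consistent with the proposition's reading that deleting and re-inserting is never cheaper than substituting.
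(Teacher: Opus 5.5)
Your proposal is correct and follows the paper's proof: substitute the edit costs, use symmetry of $E$ to write $c_{\mathrm{exp}}(J_2)=E(J_0,J_2)$, and apply the triangle inequality of $E$ with $J_0$ as the intermediate point. Your checks that $J_0$ belongs to the label space in both the plain and the compactified setting, and your remark that the inequality can be strict, are consistent with what the paper states.
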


\begin{proof}
This is the triangle inequality of $E$ through $J_0$:
\begin{equation}
\label{eq:E-triangle}
E(J_1,J_2)\le E(J_1,J_0)+E(J_0,J_2).
\qedhere
\end{equation}
\end{proof}

\begin{theorem}[Tree edit metric]
\label{thm:tree_metric}
$E^{\mathrm{tree}}$ is a pseudometric on kinematic trees, and a metric on the equivalence classes modulo $\sim$.
\end{theorem}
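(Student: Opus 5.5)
The proof splits into the pseudometric axioms, which are generic to edit distances with a metric on edge labels, and the separation property modulo $\sim$, which is the substantive part.

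\textbf{Pseudometric axioms.} Non-negativity is immediate because every cost in \eqref{eq:editcosts} is a value of the metric $E$. $E^{\mathrm{tree}}(T,T)=0$ follows from the empty edit sequence. Here I treat tree isomorphism as free, since nodes are unlabeled and the distance is defined on unrooted trees up to isomorphism.

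For symmetry, every edit operation has an inverse of equal cost. A substitution $J\to J'$ is inverted by $J'\to J$, with $E(J,J')=E(J',J)$. A contraction of an edge labeled $J$ is inverted by the expansion re-creating that edge, and both cost $E(J,J_0)$. Reversing a sequence therefore maps $T_1\rightsquigarrow T_2$ bijectively onto $T_2\rightsquigarrow T_1$ at the same total cost. For the triangle inequality, concatenate a sequence $T_1\rightsquigarrow T_2$ with one $T_2\rightsquigarrow T_3$ and take infima. The value is always finite, since one can contract every edge of $T_1$ to a single node and then expand to $T_3$.

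\textbf{Descent to classes.} If $T\sim T'$, then $T'$ is reached from $T$ by isomorphism plus contractions and expansions of $J_0$-labeled edges, each costing $E(J_0,J_0)=0$. Hence $E^{\mathrm{tree}}(T,T')=0$. Combined with the triangle inequality, this shows $E^{\mathrm{tree}}$ is constant on classes and defines a pseudometric on the quotient.

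\textbf{Separation (main obstacle).} It remains to show that $E^{\mathrm{tree}}(T_1,T_2)=0$ implies $T_1\sim T_2$. An infimum of zero need not be attained, so I need a positive lower bound whenever the canonical forms $\hat T_1,\hat T_2$ are non-isomorphic. Let
\[
\mu=\min_e E(\ell(e),J_0)>0,
\]
taken over all edges of both canonical forms. I would try to prove a lower bound of the following shape. For any edit sequence $\pi$, tracing edges through $\pi$ induces a partial correspondence between the edges of $\hat T_1$ and of $\hat T_2$: an edge survives with a chain of substitutions, or it is created or destroyed. The triangle inequality of $E$ gives two facts:
\begin{itemize}
\item a chain of substitutions taking $J$ to $J'$ costs at least $E(J,J')$;
\item an edge that is created, modified, and later destroyed costs at least its contraction cost, and an unmatched edge of $\hat T_i$ costs at least $E(\ell(e),J_0)\ge\mu$.
\end{itemize}
Proposition~\ref{prop:no_shortcut} ensures that delete-and-insert never undercuts substitution, so the correspondence can be taken to be a matching.

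The delicate point is that many cheap, near-$J_0$ intermediate edges could be shuffled to realize a topology change at arbitrarily small cost. To rule this out I would argue with the finitely many matchings between edge sets. If every edge of both canonical trees is matched, the edge bijection must respect the tree structure, because contractions and expansions of edges in the correspondence are needed to change adjacency. The key step is to show that when the induced matching fails to be a tree isomorphism, some edge of $\hat T_1$ or $\hat T_2$ must be contracted or newly expanded within $\pi$, costing at least $\mu$ or at least the distance from its label to $J_0$. Otherwise the total cost is at least $\min_{\sigma}\sum_e E(\ell(e),\ell(\sigma e))$ over the isomorphisms $\sigma$, which is positive unless the labels agree exactly. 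Either way the cost is bounded below by a positive constant depending only on $\hat T_1$ and $\hat T_2$, so a distance of zero forces $\hat T_1\cong\hat T_2$ with identical labels, that is, $T_1\sim T_2$. I expect the bookkeeping that shows adjacency cannot change without paying for an original edge to be the hard part.
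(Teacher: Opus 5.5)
Your pseudometric axioms and the descent to classes are correct, and your plan for separation matches the paper's route: trace each edge of $\hat T_1$ through $\pi$, bound the cost below by a matching cost using the triangle inequality of $E$, and use the finiteness of candidate matchings to get a positive constant independent of $\pi$. The gap is the step you flag as ``the hard part'', which is the entire content of the separation proof. Your justification, that ``contractions and expansions of edges in the correspondence are needed to change adjacency'', is asserted rather than argued. It does not rule out the scenario you raise yourself: arbitrarily many cheap new edges are expanded, relabeled and contracted so as to change which original edges meet at which node. As written, the proposal therefore does not show that a zero infimum forces $\hat T_1\cong\hat T_2$ with equal labels.

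The missing idea is an invariant that makes new edges irrelevant to topology. Let $N$ be the new edges (expanded, not yet contracted) of the current tree $T$, and $R$ the original edges already contracted. Then $T/N\cong\hat T_1/R$, with each surviving original edge corresponding to itself. This holds before the first operation and each operation preserves it.
\begin{itemize}
\item A substitution changes no structure.
\item Expanding a fresh edge $f$ gives $T'/(N\cup\{f\})=T/N$, since contracting $f$ undoes the split, however it redistributed the original edges.
\item Contracting a new edge $g$ gives $(T/g)/(N\setminus\{g\})=T/N$.
\item Contracting an original edge $e$ contracts both sides by $e$.
\end{itemize}
At the end, set $U_1=R$ and let $U_2$ be the new edges of $\hat T_2$. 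The traced matching $M$ is then an isomorphism $\hat T_1/U_1\cong\hat T_2/U_2$, regardless of what the new edges did along the way.

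This gives your two cases.
\begin{itemize}
\item If $U_1=U_2=\emptyset$, then $M$ is an unlabeled isomorphism $\hat T_1\cong\hat T_2$ and the cost is at least $\sum_e E(\ell(e),\ell(Me))$.
\item Otherwise, some original edge was relabeled and then contracted, or some final edge was expanded and then relabeled. By the triangle inequality either history costs at least $E(\ell(e),J_0)\ge\mu$.
\end{itemize}
With this lemma your bound $\min\{\mu,\min_\sigma\sum_e E(\ell(e),\ell(\sigma e))\}>0$ goes through. It is a slightly coarser, but sufficient, version of the paper's minimum over admissible triples $(M,U_1,U_2)$.

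One minor point: Proposition~\ref{prop:no_shortcut} is not needed here. The traced correspondence is automatically a partial bijection, because each surviving original edge has a unique descendant.
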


\begin{proof}
We verify the four properties in turn.

\textbf{Non-negativity.}
Every operation has cost $\ge0$, hence so does every edit sequence, and so does the infimum.

\textbf{Symmetry.}
The operations are invertible. Contraction and expansion are mutually inverse with the identical cost $E(J,J_0)$, and the inverse of a substitution is a substitution with $E(J',J)=E(J,J')$. Reversing an edit sequence therefore preserves its total cost, so the two infima agree.

\textbf{Triangle inequality.}
Fix $\delta>0$ and choose sequences $\pi_{12}:T_1\rightsquigarrow T_2$ and $\pi_{23}:T_2\rightsquigarrow T_3$ with
\begin{equation}
\label{eq:near-optimal-seqs}
C(\pi_{12})\le E^{\mathrm{tree}}(T_1,T_2)+\delta/2,
\qquad
C(\pi_{23})\le E^{\mathrm{tree}}(T_2,T_3)+\delta/2.
\end{equation}
Their concatenation edits $T_1$ into $T_3$ at cost $C(\pi_{12})+C(\pi_{23})$, so
\begin{equation}
\label{eq:dtree-triangle}
E^{\mathrm{tree}}(T_1,T_3)
\le
E^{\mathrm{tree}}(T_1,T_2)+E^{\mathrm{tree}}(T_2,T_3)+\delta,
\end{equation}
and $\delta\to0$ gives the claim.

\textbf{Invariance under $\sim$.}
Inserting or deleting a $J_0$-labeled edge is an expansion or contraction of cost $E(J_0,J_0)=0$, and isomorphisms are free because nodes carry no labels. Hence $T_1\sim T_1'$ implies $E^{\mathrm{tree}}(T_1,T_2)=E^{\mathrm{tree}}(T_1',T_2)$ for every $T_2$, and $E^{\mathrm{tree}}$ descends to the quotient.

\textbf{Identity of indiscernibles on the quotient.}
If $[T_1]=[T_2]$, the invariance step gives $E^{\mathrm{tree}}(T_1,T_2)=0$. For the converse, invariance lets us take $T_1=\hat{T}_1$ and $T_2=\hat{T}_2$. Along an edit sequence $\pi:\hat{T}_1\rightsquigarrow\hat{T}_2$, call an edge \emph{original} if it descends, possibly relabeled, from an edge of $\hat{T}_1$ that no contraction has removed, and \emph{new} if an expansion created it. Contracting the new edges of the current tree yields $\hat{T}_1$ with the original edges removed so far contracted, the surviving original edges corresponding. This holds before the first operation, and every operation preserves it. A substitution leaves the structure alone, an expansion or the contraction of a new edge leaves the new-edge contraction unchanged, and the contraction of an original edge contracts both sides by the same edge. At the end, let $M$ pair each surviving edge of $\hat{T}_1$ with its final edge in $\hat{T}_2$, and let $U_1$ and $U_2$ collect the unmatched edges of $\hat{T}_1$ and $\hat{T}_2$. The invariant says that $M$ is an isomorphism of the unlabeled trees $\hat{T}_1/U_1$ and $\hat{T}_2/U_2$. A matched edge moves from $\ell(e)$ to $\ell(e')$ by substitutions, an edge of $U_1$ is relabeled and then contracted, and an edge of $U_2$ is expanded and then relabeled, so the triangle inequality of $E$ gives
\begin{equation}
\label{eq:matching-bound}
C(\pi)
\ \ge\
\sum_{(e,e')\in M} E\big(\ell(e),\ell(e')\big)
+
\sum_{e\in U_1\cup U_2} E\big(\ell(e),J_0\big).
\end{equation}
Call a triple $(M,U_1,U_2)$ with this isomorphism property \emph{admissible}. There are finitely many, so the right-hand side has a minimum over them that does not depend on $\pi$. Suppose $[T_1]\ne[T_2]$. If $U_1\cup U_2$ is nonempty, the bound pays $E(\ell(e),J_0)>0$ for an unmatched edge, since canonical representatives carry no $J_0$ label. If it is empty, $M$ is an isomorphism of $\hat{T}_1$ onto $\hat{T}_2$ as unlabeled trees, and it cannot preserve every label, since a label-preserving isomorphism would give $[T_1]=[T_2]$, so some matched pair pays $E(\ell(e),\ell(e'))>0$. Every admissible triple therefore has a positive bound, their minimum is a positive constant, and $E^{\mathrm{tree}}(T_1,T_2)>0$.
\end{proof}

\begin{remark}[Fixed-boundary continuity at the tree level]
Let $T_0$ contain an exact fixed edge between two links and let $T_\epsilon$ be the same tree with that edge relabeled $J_\epsilon=\{0,\epsilon\bm{\xi}\}$. Contracting the edge of $T_\epsilon$ costs $E(J_\epsilon,J_0)=|\epsilon|\,\|\bm{\xi}\|$ and produces a tree equivalent to $T_0$, so
\begin{equation}
\label{eq:tree-fixed-continuity}
E^{\mathrm{tree}}([T_\epsilon],[T_0])
\le
|\epsilon|\,\|\bm{\xi}\|
\longrightarrow 0.
\end{equation}
The continuity of Proposition~\ref{prop:fixed_continuity} therefore survives the lift. A scheme that charges a constant $\lambda$ per unmatched joint instead jumps by $\lambda$ when a joint's range crosses zero.
\end{remark}

\textbf{Exact computation.} Contracting exact fixed edges and then solving the optimal assignment of moving joints, with $E$ as the pairwise cost and $E(J,J_0)$ as the cost of leaving a joint unassigned, drops the requirement that the matching be realizable by an edit path. This can only lower the total, so the relaxation lower-bounds $E^{\mathrm{tree}}$, and whenever its optimal assignment follows an isomorphism of the contracted trees the bound is tight, which certifies the exact value at the cost of one linear-time check. Kinematic trees are small, so the uncertified cases yield to a search over realizable matchings in milliseconds per pair. On the trees and corruptions of Figure~\ref{fig:tree} the certificate covers 91\% of predictions, and the exact and relaxed values agree on 92\% of them and on every structured sweep. The remaining gaps have median relative size 7\%, and the two scores rank predictions with Kendall $\tau=0.994$. The relaxation also keeps the score polynomial-time for arbitrary inputs, where exact unordered tree edit distance is NP-hard in the worst case.

\begin{theorem}[Fixed-skeleton weighted tree metric]
\label{thm:fixed_tree}
Fix a ground-truth object with movable joints indexed by $k = 1, \dots, K$ and ground-truth child bodies $B_1, \dots, B_K$. On candidate assignments $\bm{J} = (J_1, \dots, J_K)$ of one joint per ground-truth edge, each factor taken modulo the endpoint swap, define
\begin{equation}
\label{eq:fixed-tree}
E_B^{\mathrm{fixed\text{-}tree}}(\bm{J}, \bm{J}')^2 \;=\; \sum_{k=1}^{K} E_{B_k}(J_k, J'_k)^2 .
\end{equation}
Then $E_B^{\mathrm{fixed\text{-}tree}}$ is a metric on the product space of candidate assignments.
\end{theorem}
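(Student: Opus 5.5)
The plan is to recognize $E_B^{\mathrm{fixed\text{-}tree}}$ as the $\ell^2$-product of $K$ metric spaces and verify the axioms factorwise, invoking Theorem~\ref{thm:endpoint_metric} for each factor. For each $k$, the body $B_k$ is fixed by the ground truth before any prediction is seen, so $\lVert\cdot\rVert_{B_k}$ of \eqref{eq:kineticnorm} is a single fixed inner-product norm on $V$. Positive definiteness follows from Theorem~\ref{thm:material_motion}: $\lVert\Delta\bm{z}\rVert_{B_k}^2$ is $\frac1m\int_{B_k}\lVert\bm{w}_{\Delta\bm{z}}\rVert^2\,dm$, and this vanishes only if the affine field $\Delta\bm{\omega}\times\bm{x}+\Delta\bm{v}$ vanishes $dm$-almost everywhere. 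That forces $\Delta\bm{z}=0$ provided the body's mass is not concentrated on a line. I would state this nondegeneracy, $I_C\succ0$ or equivalently a non-collinear support, as the standing assumption on $B_k$; it is the one point where the kinetic form could fail to be an inner product. Granting it, Theorem~\ref{thm:endpoint_metric} applies verbatim, and each $(\mathcal{J},E_{B_k})$ is a metric space on unordered endpoint pairs.

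Non-negativity, symmetry, and identity of indiscernibles then follow coordinatewise. Non-negativity is immediate. Symmetry holds because each summand is symmetric. For identity, the sum of squares is zero iff every $E_{B_k}(J_k,J'_k)=0$, iff $J_k=J'_k$ as unordered pairs for all $k$, iff $\bm{J}=\bm{J}'$ in the product space, which is taken modulo the swap in each factor.

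For the triangle inequality, write $a_k=E_{B_k}(J_k,J'_k)$, $b_k=E_{B_k}(J'_k,J''_k)$ and $c_k=E_{B_k}(J_k,J''_k)$. Theorem~\ref{thm:endpoint_metric} gives $c_k\le a_k+b_k$ with all entries non-negative, so $\lVert c\rVert_2\le\lVert a+b\rVert_2\le\lVert a\rVert_2+\lVert b\rVert_2$ by monotonicity of the Euclidean norm on the non-negative orthant followed by Minkowski's inequality in $\R^K$. This is exactly the triangle inequality for $E_B^{\mathrm{fixed\text{-}tree}}$.

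The main subtlety, more conceptual than technical, is that the norms differ across $k$. This is harmless precisely because the edge indexing and correspondence are fixed, so each coordinate is always compared under its own fixed norm. I would remark that this is why the statement is restricted to a fixed skeleton: topology edits would require comparing a joint under several $B_k$, which breaks the argument.
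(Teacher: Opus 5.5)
Your proposal is correct and follows the paper's proof. Each $B_k$ is fixed by the ground truth, so Theorem~\ref{thm:endpoint_metric} makes every $E_{B_k}$ a metric; symmetry and identity then hold coordinatewise, and Minkowski's inequality on the $K$-vector of per-edge distances gives the triangle inequality. Two details you make explicit are left implicit in the paper, and both are right: the nondegeneracy condition $I_C \succ 0$ (mass not supported on a line), which is needed for $\lVert\cdot\rVert_{B_k}$ to be positive definite, and the monotonicity step $\lVert c\rVert_2\le\lVert a+b\rVert_2$.
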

\begin{proof}
Each $B_k$ is fixed by the ground truth, so $\lVert\cdot\rVert_{B_k}$ is a fixed inner-product norm and $E_{B_k}$ is a metric by Theorem~\ref{thm:endpoint_metric}. Symmetry and identity of indiscernibles hold coordinatewise. For the triangle inequality, $E_{B_k}(J_k, J''_k) \le E_{B_k}(J_k, J'_k) + E_{B_k}(J'_k, J''_k)$ for every $k$, and Minkowski's inequality on the $K$-vectors of per-edge distances gives the claim.
\end{proof}

\begin{remark}[Weighted variant with topology edits]
Theorem~\ref{thm:tree_metric} assumes the single fixed inner product of this section, and Theorem~\ref{thm:fixed_tree} covers the per-edge weighting when the ground truth fixes the skeleton and the correspondence. The remaining case, topology-editing distances whose edit costs use edge-dependent bodies, falls outside both theorems, since contracting or expanding an edge changes which body weighs which joint. We use this last variant only as a score, without claiming the metric axioms.
\end{remark}

\section{Additional ablation studies}
\label{app:ablations}
This section expands the ablations summarized in \S\ref{sec:ablation}.

\subsection{Single-parameter perturbations against the component scores}
\label{app:perturbation}
\begin{figure}[!h]
\centering
\includegraphics[width=\textwidth]{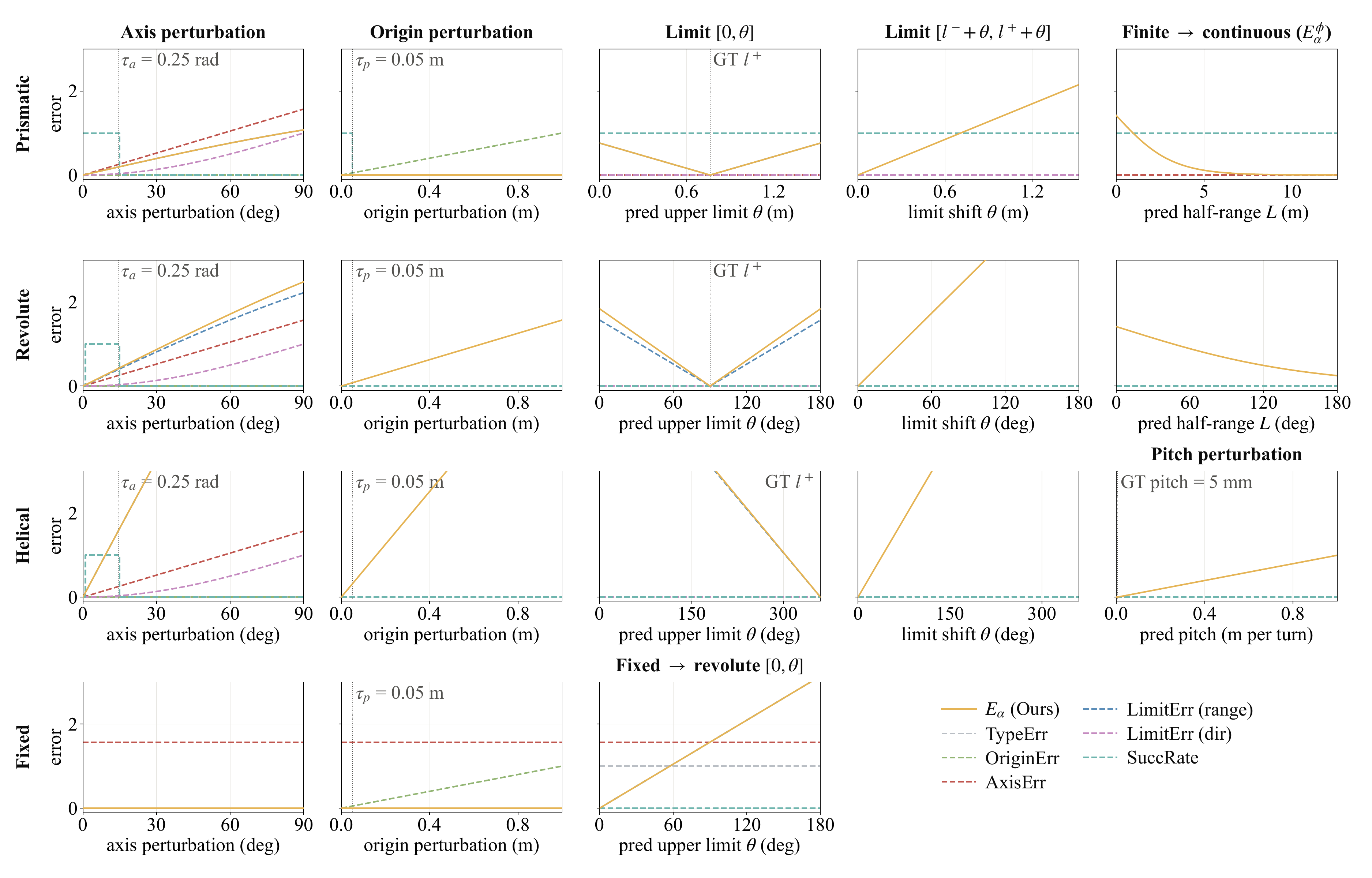}
\vspace{-20pt}
\caption{Controlled single-parameter perturbations of ground-truth library joints, one row block per joint type; the helical row composes the prismatic + continuous encoding of a screwed bulb into one screw (pitch 5\,mm per turn). Solid amber is $E_\alpha$ ($\alpha=1$; the finite $\to$ continuous column uses $E_\alpha^{\phi}$, $\kappa=\pi$), dashed curves are the component scores of \S\ref{sec:prev-eval} with the published thresholds $\tau_a = 0.25$\,rad, $\tau_p = 0.05$\,m.}
\label{fig:perturbation}
\end{figure}
Figure~\ref{fig:perturbation} sweeps one parameter of one ground-truth joint at a time and scores every step under $E_\alpha$ and under the component template. $E_\alpha$ responds smoothly and monotonically to every physical error and is exactly zero on pure gauge changes (the fixed-joint axis and origin sweeps, the prismatic origin sweep); on a prismatic axis error it follows the chord, $E_\alpha = \alpha\sqrt{(l^-)^2 + (l^+)^2}\,\cdot 2\sin(\theta/2)$. Each failure of Table~\ref{tab:failures} is visible as its own panel. The revolute origin score diverges as the axes align, so the success indicator is false even at zero error; the type error is a 0/1 cliff that $E_\alpha$ crosses continuously in the fixed $\to$ revolute column; the limit scores couple to the axis error through the motion vector $\bm{m} = \bm{a}\,(l^+ - l^-)$; and a fixed joint has a zero axis vector, for which the axis score of \S\ref{sec:prev-eval} is $0/0$, so once the denominator is regularized it evaluates to $\arccos 0 = \pi/2$ against itself and fixed joints can never be ``successful''. The pitch column has no baseline curve left to fail: the tuple taxonomy cannot express a helical joint, both sides reduce to the same revolute tuple, and every component score sits at zero for arbitrarily wrong pitch, while $E_\alpha$ grows exactly as the pitch discrepancy per turn.

\subsection{Split-norm weight}
\label{app:alpha}
\begin{figure}[!h]
\centering
\includegraphics[width=\textwidth]{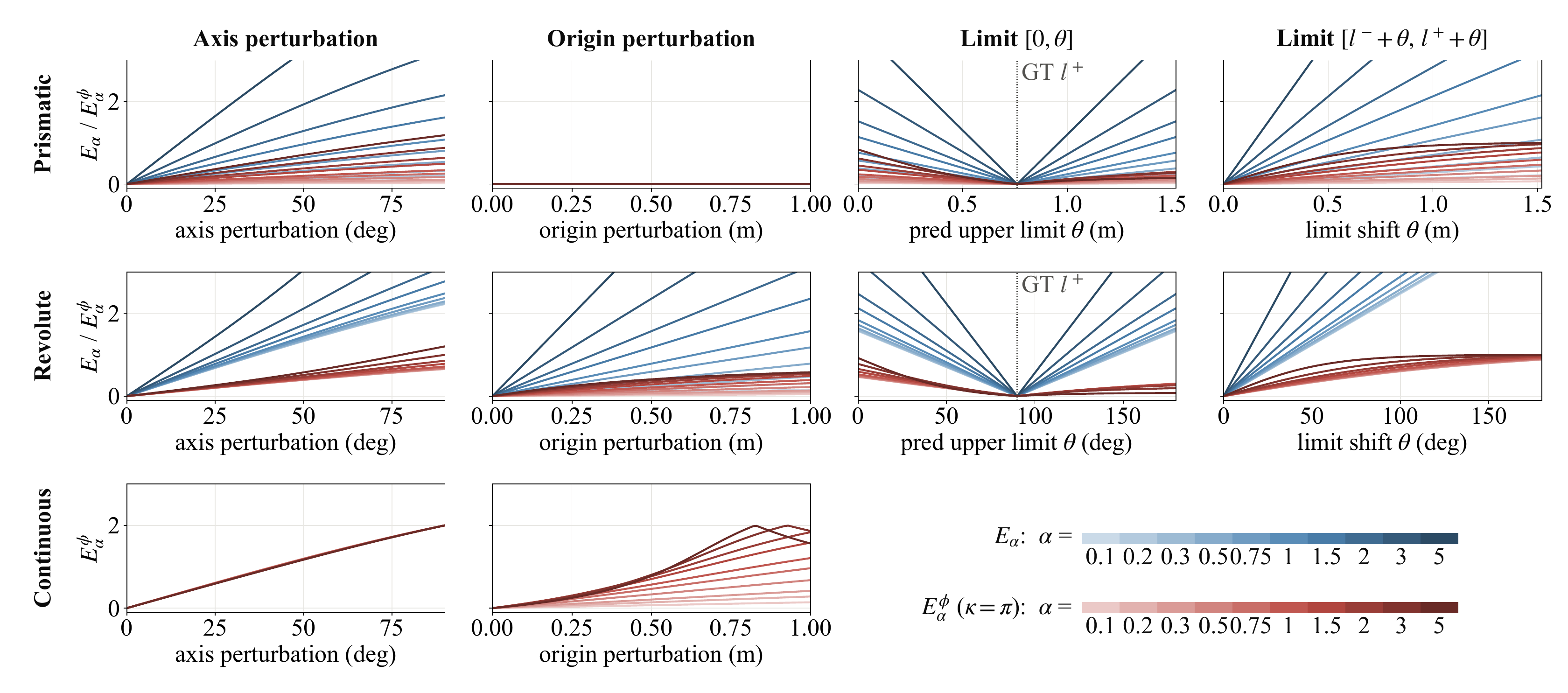}
\vspace{-20pt}
\caption{Sensitivity to the split-norm weight $\alpha \in \{0.1,\dots,5\}$, two decades around the default $\alpha=1$. Blue curves are the raw $E_\alpha$, red curves the compactified $E_\alpha^{\phi}$ ($\kappa=\pi$), lighter for smaller $\alpha$. Rows: representative prismatic, revolute, and continuous joints. Columns: axis rotation ($0..90^\circ$), origin translation ($0..1$\,m), the predicted upper limit $[0,\theta]$, and a limit shift $[l^-{+}\theta,\,l^+{+}\theta]$; the continuous row admits only the compactified score.}
\label{fig:alpha}
\end{figure}
Figure~\ref{fig:alpha} documents how the choice of $\alpha$ moves the split-norm scores. The weight rescales the translational contribution: prismatic errors scale linearly with $\alpha$, while revolute axis errors are nearly $\alpha$-independent, their endpoint difference being rotation-dominated. The prismatic origin panel is flat at zero for the whole family, so the gauge invariance of Proposition~\ref{prop:prismatic_gauge} does not depend on the choice of $\alpha$. The compactified curves are bounded and saturate, and they preserve the raw curves' ordering in $\alpha$, so the compactification does not scramble the underlying comparisons. No choice of $\alpha$ makes the split norm track physical size; that is the role of the kinetic-energy form, calibrated in Figure~\ref{fig:calibration}.

\subsection{Sweep shapes under both inner products}
\label{app:norm}
\begin{figure}[!h]
\centering
\includegraphics[width=\textwidth]{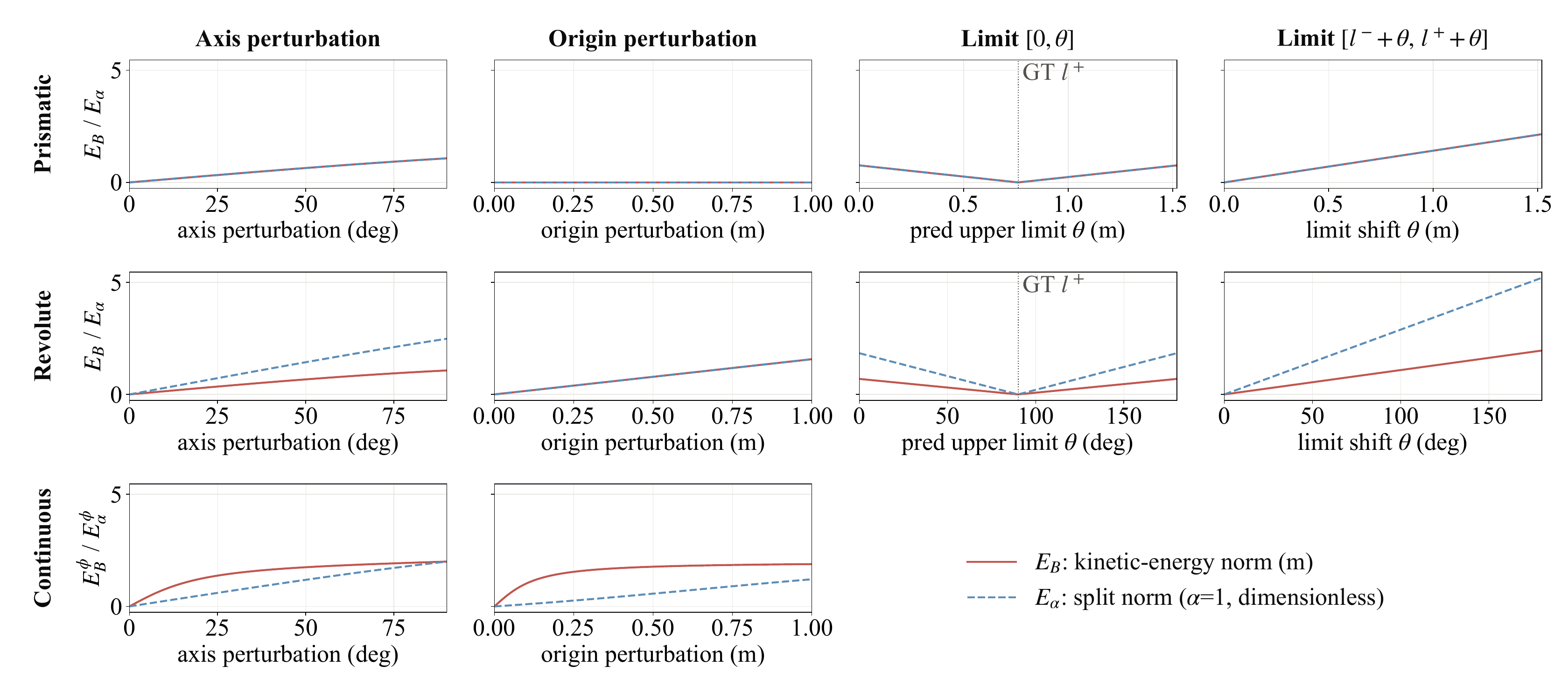}
\vspace{-20pt}
\caption{The sweeps of Figure~\ref{fig:alpha} scored under both inner products: solid brick is the kinetic-energy $E_B$ (meters), dashed blue the split-norm $E_\alpha$ ($\alpha=1$, dimensionless). The two carry different units, so only shapes may be compared, not heights. The continuous row uses the compactified score, with $\phi$ normalized by each curve's own norm.}
\label{fig:norm}
\end{figure}
Figure~\ref{fig:norm} repeats the perturbation sweeps under both inner products of \S\ref{sec:design}. Both are smooth, monotone, and gauge-invariant on every sweep, so the qualitative properties behind Table~\ref{tab:design} do not depend on the inner-product choice. On purely translational endpoint differences the two coincide exactly, since $\lVert(0,\Delta\bm{v})\rVert_B = \lVert\Delta\bm{v}\rVert = \lVert(0,\Delta\bm{v})\rVert_\alpha$ at $\alpha = 1$, which is why the prismatic panels overlap. The curves separate where rotation enters: the kinetic norm weighs a rotational error by the moving link's moments about the axis, the split norm weighs it identically for every link, and the two agree exactly on a centered link with isotropic inertia of gyration radius $1/\alpha$ (\S\ref{sec:design}), the formal sense in which a global $\alpha$ assumes every link shares one characteristic size.

\subsection{Object scale}
\label{app:scale}
\begin{figure}[!h]
\centering
\includegraphics[width=0.75\textwidth]{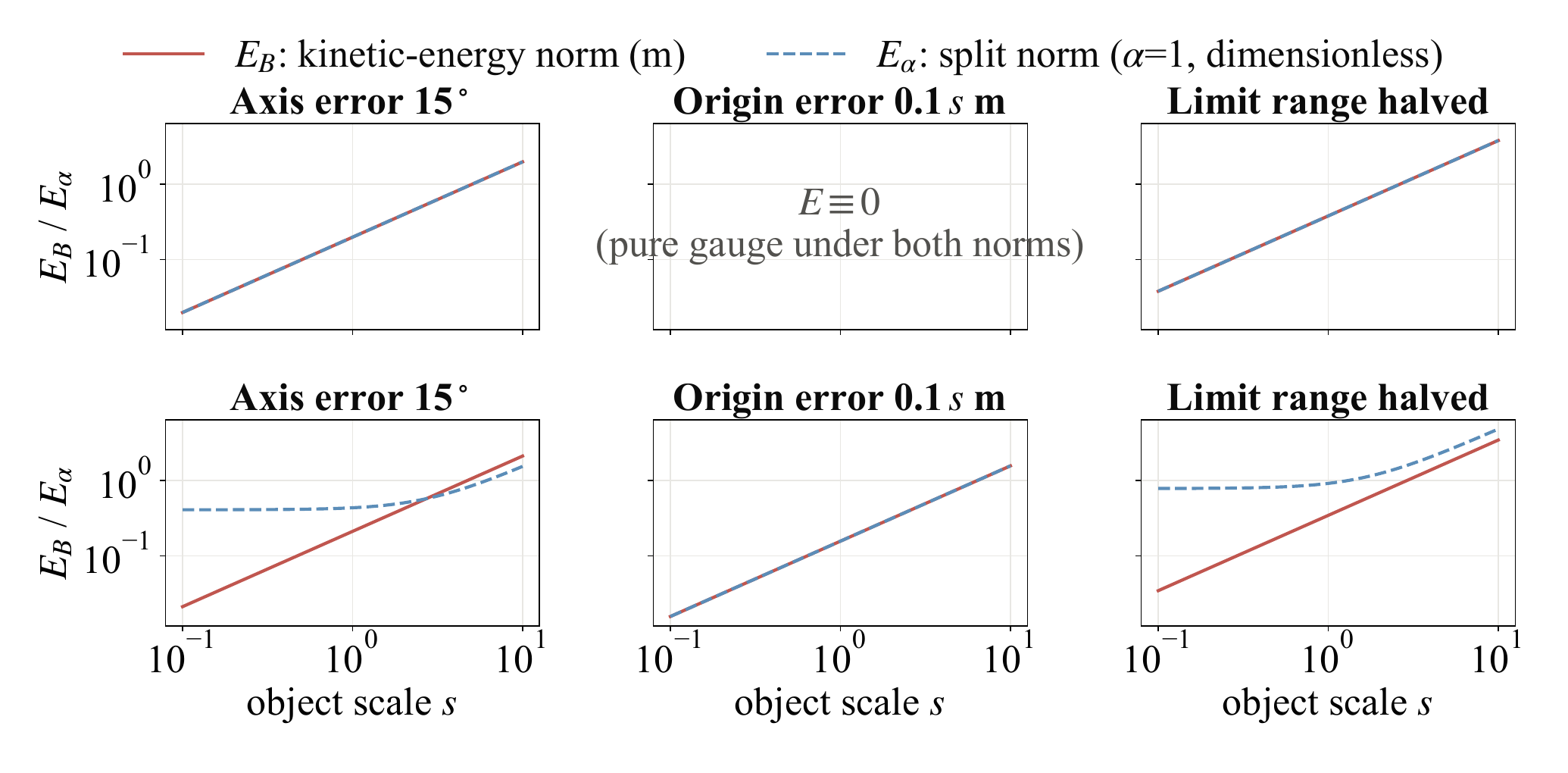}
\vspace{-20pt}
\caption{The same representation-level errors applied to the same objects uniformly scaled by $s \in [0.1, 10]$ (log--log axes): a fixed $15^\circ$ axis rotation, an origin offset of $0.1\,s$\,m, and a halved motion range. Solid brick is $E_B$, dashed blue $E_\alpha$. The prismatic origin panel carries no curves because that error is pure gauge under both norms at every scale.}
\label{fig:scale}
\end{figure}
Figure~\ref{fig:scale} isolates object size. The kinetic-energy score is exactly scale-covariant, $E_B(s) = s\,E_B(1)$, a slope-one line in every panel (uniform scaling sends $m \to s^3 m$, $\bm{c} \to s\,\bm{c}$, $I \to s^5 I$), matching its reading as RMS material displacement: the same angular error moves twice the material twice as far on a twice-as-large object. The split norm is not scale-covariant. Its rotational term is scale-free, so $E_\alpha$ flattens wherever the error has an angular component and follows $s$ only where it is purely translational. A fixed global $\alpha$ therefore under-weights errors on large objects and over-weights them on small ones relative to physical displacement, the controlled counterpart of the calibration gap in Figure~\ref{fig:calibration}.

\subsection{Compactification scale}
\label{app:kappa}
\begin{figure}[!htbp]
\centering
\includegraphics[width=\textwidth]{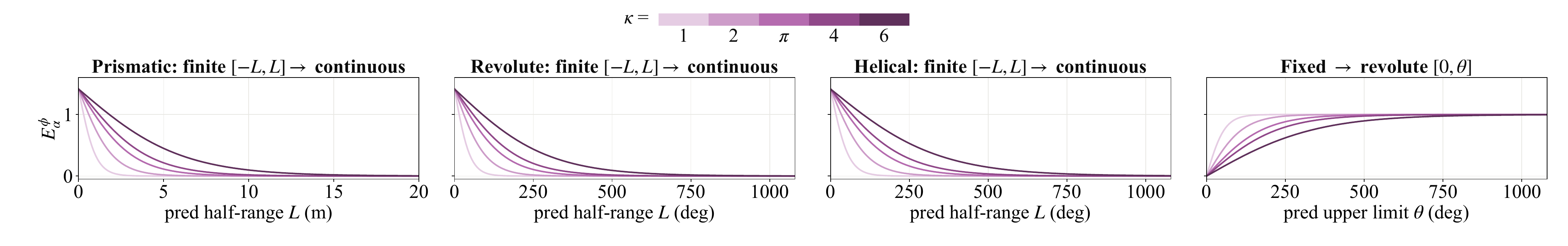}
\vspace{-20pt}
\caption{Sweep of the compactification scale $\kappa$. Curves show $E_\alpha^{\phi}$ ($\alpha=1$) for $\kappa\in\{1,2,\pi,4,6\}$, lighter for smaller $\kappa$. Left three panels: a ground-truth prismatic, revolute, or helical joint of unbounded range ($l^\pm=\pm\infty$, the continuous case of each type) against a finite prediction $[-L,L]$ as $L$ grows. Right: a welded ground-truth joint against a revolute prediction whose range $[0,\theta]$ opens from zero.}
\label{fig:kappa}
\end{figure}
Figure~\ref{fig:kappa} sweeps $\kappa$ around the default $\pi$. Against an unbounded ground truth the closed form is $E^{\phi} = \sqrt{2}\,\big(1-\tanh(L\lVert\bm{\xi}\rVert/\kappa)\big)$ (Proposition~\ref{prop:continuous_limit}). Every curve starts at $\sqrt{2}$, where the finite joint is the welded joint, and decays monotonically to zero, so an unbounded joint is literally the limit of a growing finite range, with no formula switch. The fourth panel shows the other boundary, $E^{\phi}=\tanh(\theta\lVert\bm{\xi}\rVert/\kappa)$ rising from zero, the compactified form of Proposition~\ref{prop:fixed_continuity}. Across all panels $\kappa$ acts as a pure horizontal dilation, with half decay near $L\lVert\bm{\xi}\rVert \approx 0.55\,\kappa$, and the default $\kappa=\pi$ keeps ranges up to a full turn in the responsive part of the curve.

\begin{figure}[!h]
\centering
\includegraphics[width=0.66\textwidth]{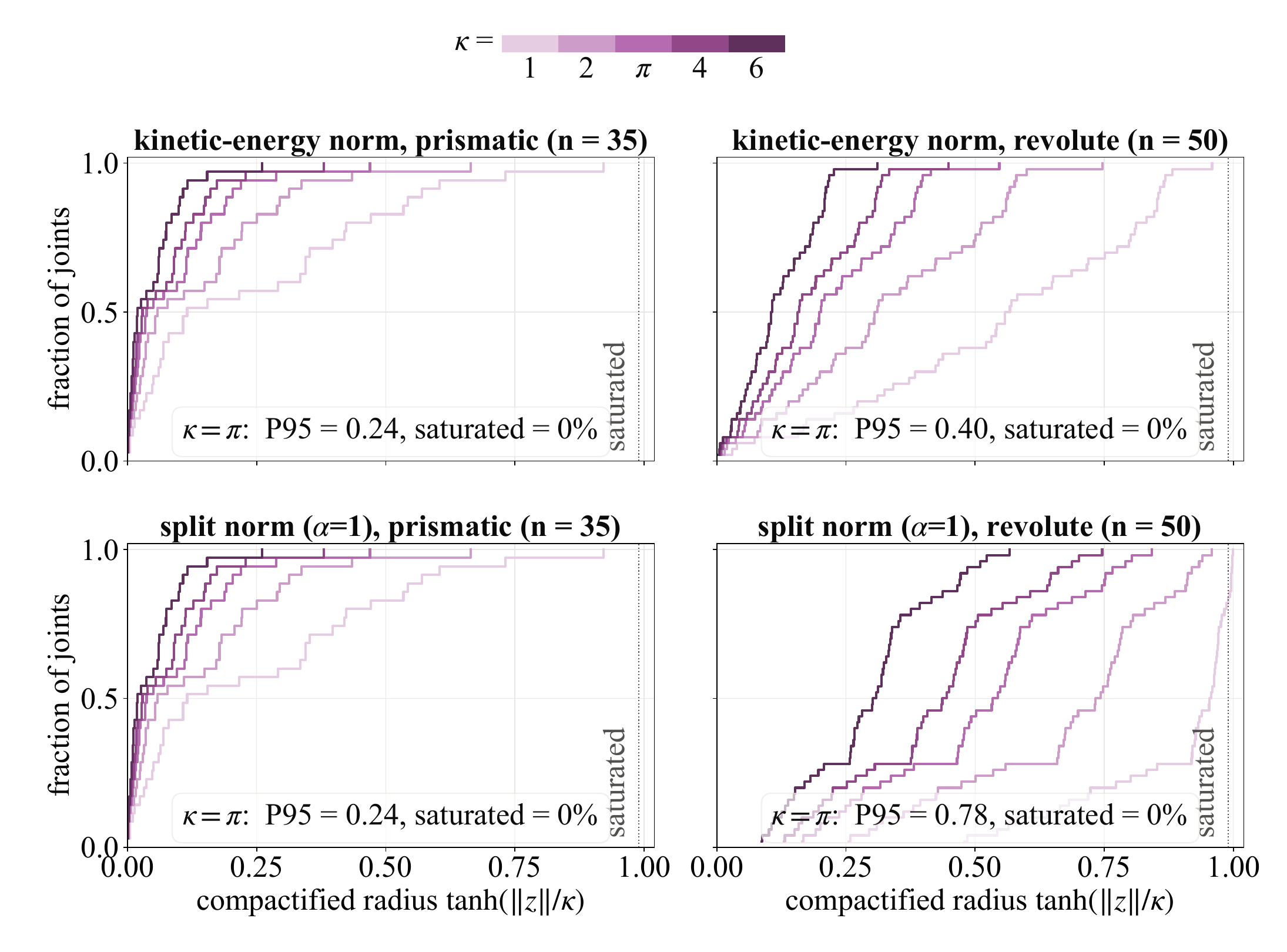}
\vspace{-15pt}
\caption{Compactified endpoint radius $\tanh(\lVert\bm{z}\rVert/\kappa)$ of the finite ground-truth joints of the evaluation set (per joint the larger of its two endpoints), as a CDF for each $\kappa \in \{1, 2, \pi, 4, 6\}$, lighter for smaller $\kappa$. Rows: the norm inside $\phi$ (kinetic-energy, split); columns: joint type. The dotted line marks radius $0.99$, where $\tanh$ saturation would begin to collapse resolution; each panel is annotated with the $\kappa=\pi$ statistics.}
\label{fig:radius}
\end{figure}

Figure~\ref{fig:radius} checks the default against the benchmark's own joints. The map $\phi$ reserves the unit sphere for continuous joints, so finite joints must stay clearly inside it, and at $\kappa = \pi$ they do: no finite ground-truth joint saturates under either norm, with 95th-percentile radii of $0.40$ (kinetic) and $0.78$ (split) on revolute joints and $0.24$ on prismatic ones, whereas $\kappa = 1$ already pushes 18\% of the split-norm revolute joints past $0.99$. The default keeps every finite joint in the responsive part of $\tanh$ without crowding the boundary reserved for continuous joints.

\subsection{Endpoint sampling}
\label{app:sampling}
\begin{figure}[!h]
\centering
\includegraphics[width=\textwidth]{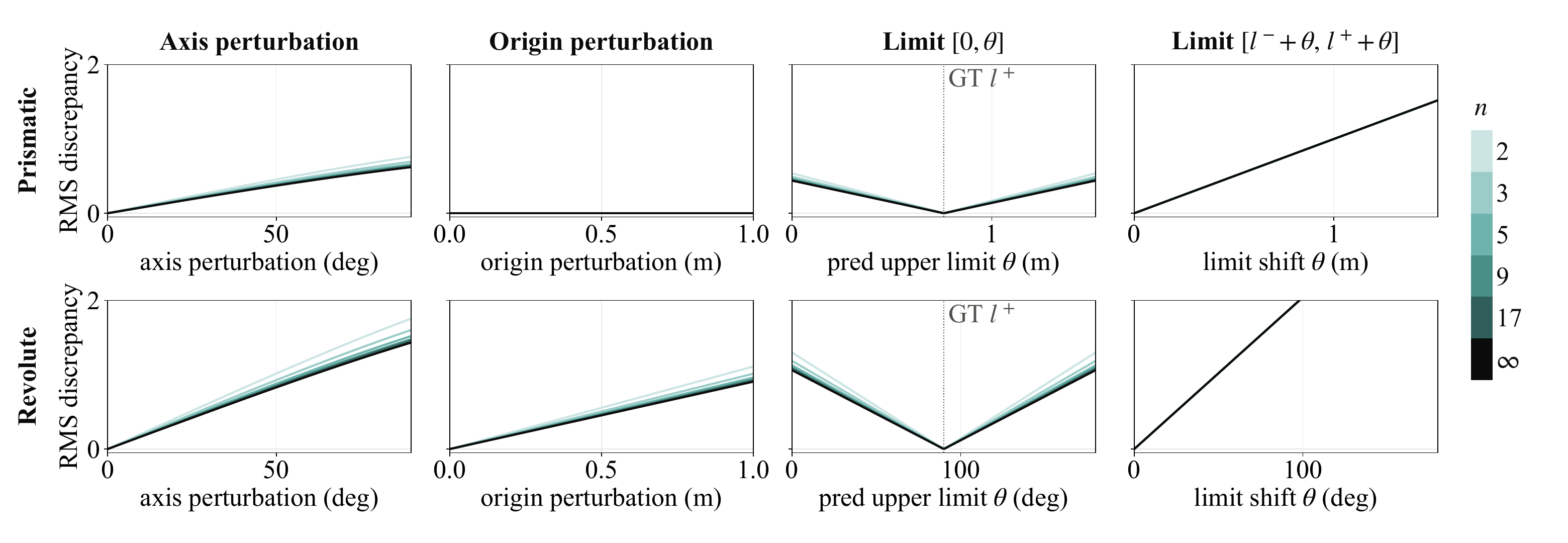}
\vspace{-20pt}
\caption{$E_n$ compares $n$ uniform samples of the linearized segments under the same $\mathbb{Z}_2$ quotient, in RMS convention so that $E_2 = E_\alpha/\sqrt{2}$ is the endpoint metric. Rows: prismatic and revolute joints. Columns: axis, origin, and limit sweeps. Darker curves use more samples; the dashed line is the closed-form $n\to\infty$ limit $D_{\mathrm{path}}$ of Theorem~\ref{thm:endpoint_path}.}
\label{fig:sampling}
\end{figure}
Figure~\ref{fig:sampling} evaluates denser sampling of the linearized motion segment against the two-endpoint representation. In every panel the curves collapse. Because $\lVert\bm{\Delta}(t)\rVert^2$ is quadratic in $t$, three points already integrate it exactly, and $n \ge 3$ is indistinguishable from the $n\to\infty$ limit $D_{\mathrm{path}}$. Pooled over all sweeps, the rank correlation with $E_2$ is at least $0.996$ for every $n$, and the ratio $D_{\mathrm{path}}/E_2$ stays inside the band $[1/\sqrt{3},\,1]$ of Theorem~\ref{thm:endpoint_path}. Denser sampling is a bounded monotone reparameterization of the endpoint metric that costs $n/2$ times more and adds no discriminative information. A continuous joint has no segment to sample, which is what the compactification handles.

\section{Additional evaluation results}
\label{app:results}
This section records the composition of the evaluation set (Appendix~\ref{app:sample}), discusses the entries of Table~\ref{tab:main} in detail (Appendix~\ref{app:detailed-comparison}), reports the evaluation counts behind them (Appendix~\ref{app:coverage}), and repeats the comparison on the objects shared by all methods (Appendix~\ref{app:intersection}).

\subsection{The evaluation set}
\label{app:sample}
The 200 objects of Table~\ref{tab:main} are chosen from \textbf{ArticulateArena-20K} independently of every method.

\textbf{Coverage.} The evaluation set spans all 15 supercategories of Table~\ref{tab:supercats} and 113 of the 628 categories, so no part of the taxonomy is left out, and within that coverage it concentrates on the supercategories that robots interact with most often. Kitchen \& Cooking, Electronics, Containers, Architectural Fixtures, and Tools contribute 28, 26, 25, 22, and 20 objects, together more than half of the set, because doors, drawers, appliances, containers, and hand tools are the articulated objects a manipulation policy meets first, while the two smallest supercategories contribute three objects each.

\textbf{Size.} The selection was made before any of the eight methods was run, and the same 200-object list is presented to all of them. Each method is scored on the objects it generates, with coverage reported in Table~\ref{tab:coverage} and controlled for on the common set in Appendix~\ref{app:intersection}. Two hundred objects support the claims Table~\ref{tab:main} makes, as the bootstrap intervals of the next paragraph show, while keeping the comparison affordable, since one row means running a published method end to end and several of the methods are generation pipelines. The library is released with verified kinematics, so larger or differently stratified evaluations can be built on it.

\textbf{What the evaluation set separates.} The claims read off Table~\ref{tab:main} are coarse, that the quotient scores induce different orderings on these outputs than the component protocol and that Articulate-Anything, fifth on $E_B$ among the methods that predict their own motion limits, has the lowest object-level $E_\alpha^{\phi,\mathrm{tree}}$. Table~\ref{tab:treeci} reports the object-level score with 95\% percentile bootstrap intervals over objects. The intervals have half-widths of $0.06$--$0.09$ on means of $0.39$--$0.59$, the interval of the leading method overlaps those of three of the other seven and is disjoint from the remaining four, and a paired bootstrap leaves the gap between the top two undecided. The order of the three leading methods is therefore left open, and the difference in orderings does not rest on any difference below the resolution of the evaluation set.

\begin{table}[htbp]
\centering
\footnotesize
\setlength{\abovecaptionskip}{0pt}
\setlength{\belowcaptionskip}{6pt}
\caption{The object-level score $E_\alpha^{\phi,\mathrm{tree}}$ with 95\% percentile bootstrap intervals over objects, on the 200-object evaluation set. The resampling unit is the object, not the joint, and each method is resampled within its own valid outputs. A paired bootstrap over the 180 objects that the two leading methods both score puts their difference at $-0.039$ with a 95\% interval of $[-0.091, 0.012]$. Articulate AnyMesh$^{\dagger}$ carries the ground-truth-imputed limits of Table~\ref{tab:main} and covers 96 objects, so its interval is wider than the others.}
\label{tab:treeci}
\setlength{\tabcolsep}{6pt}
\begin{tabular}{@{}lc@{}}
\toprule
Method & $E_\alpha^{\phi,\mathrm{tree}}$ \\
\midrule
Articraft            & 0.476\,[0.415, 0.539] \\
Articulate AnyMesh$^{\dagger}$ & 0.528\,[0.436, 0.624] \\
Articulate-Anything  & 0.392\,[0.337, 0.449] \\
ArtLLM               & 0.585\,[0.520, 0.651] \\
Ditto                & 0.552\,[0.488, 0.617] \\
Particulate          & 0.419\,[0.363, 0.476] \\
SPARK                & 0.520\,[0.459, 0.583] \\
URDFormer            & 0.566\,[0.493, 0.642] \\
\bottomrule
\end{tabular}
\end{table}

\subsection{Detailed comparison}
\label{app:detailed-comparison}
Among the methods that predict their own motion limits in Table~\ref{tab:main}, Particulate has the lowest $E_B$ (1.066\,m), while Articraft has the highest success rate (17.1\%). The two orderings differ most for Ditto, whose success rate is only 5.5\% yet whose $E_B$ (1.365\,m) is third, behind Particulate and Articraft but ahead of SPARK (1.543\,m) and Articulate-Anything (1.702\,m).

Particulate has the lowest type, axis, and limit errors and the third-highest success rate, and it also has the lowest $E_B$ among methods that predict their own limits; its origin error is larger than Articulate AnyMesh's, but for revolute joints an offset axis changes the induced motion of the ground-truth body, which $E_B$ measures together with axis direction and range, so the component view and the joint metric need not agree. The two views disagree more sharply in the other direction. Articulate-Anything has the fourth-largest type error and no component-level lead, yet it has the lowest object-level $E_\alpha^{\phi,\mathrm{tree}}$ (0.392). Under the geometry-free split norm, Particulate has the lowest joint-level $E_\alpha^{\phi}$ (0.465). Articulate AnyMesh has the lowest origin error but predicts no motion limits, so its quotient entries in Table~\ref{tab:main} impute the ground-truth range and read as geometry-only diagnostics rather than competitive scores; its imputed $E_B$ falls below Particulate's precisely because the range error is given away. These cases illustrate why component-wise scores do not induce a canonical ordering. Figure~\ref{fig:qualitative} shows three representative cases, two of which are objects on which URDFormer produced no valid output.

\begin{figure*}[htbp]
\centering
\includegraphics[width=\linewidth]{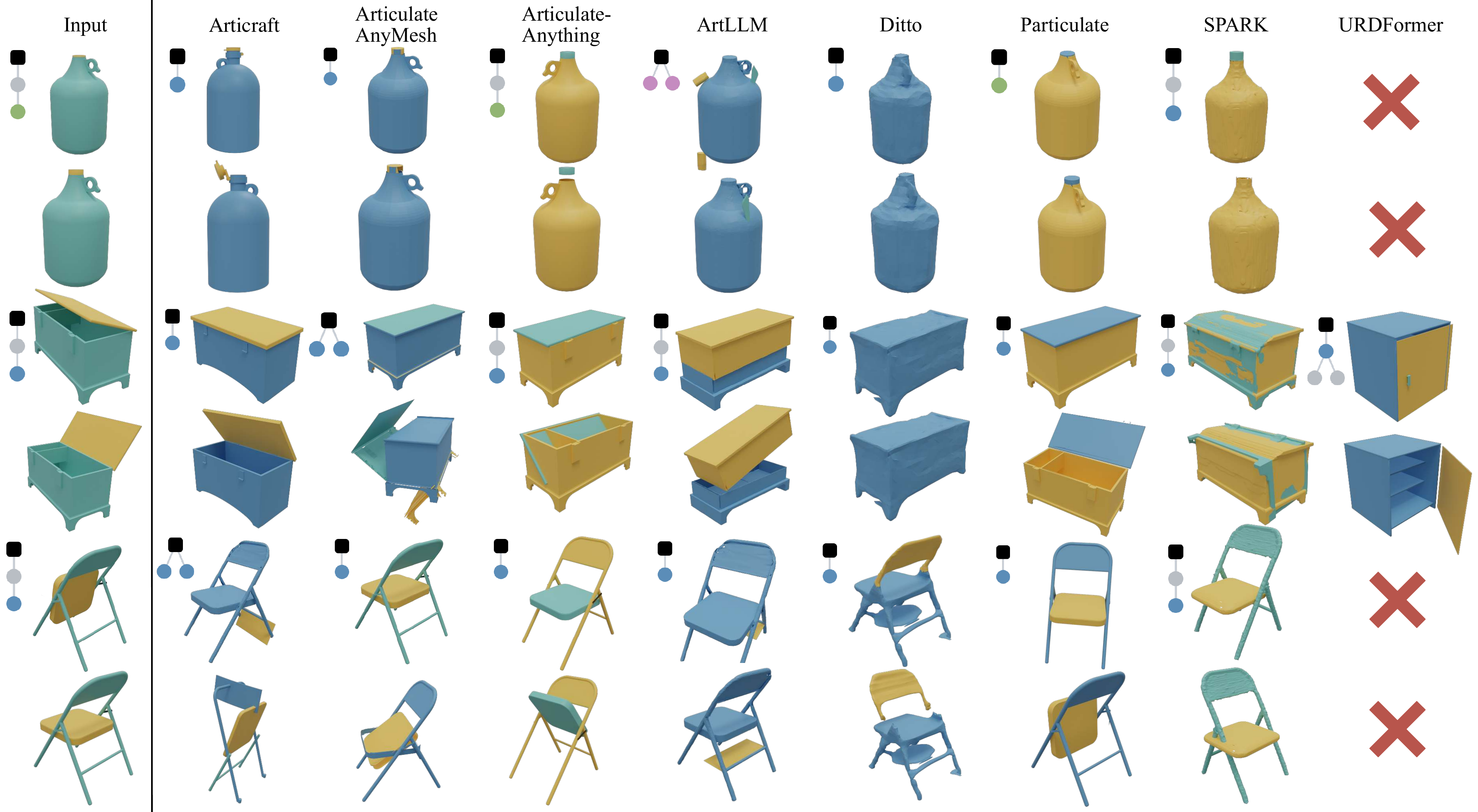}
\vspace{-20pt}
\caption{Qualitative comparison on three objects from the evaluation set of Table~\ref{tab:main}, one per row: a bottle, a box with a hinged lid, and a folding chair. The first column is the input and the remaining columns are the methods. In each panel the top render is the rest configuration and the bottom render actuates the joints, the ground-truth ones in the input column, and the kinematic tree a method returns is drawn beside its render, as in Figure~\ref{fig:type}. A red cross marks a generation failure, so URDFormer returns no valid asset for two of the three objects. The table aggregates all valid outputs, not only these three examples.}
\label{fig:qualitative}
\end{figure*}

\subsection{Evaluation coverage}
\label{app:coverage}
Table~\ref{tab:coverage} reports the evaluation counts omitted from the main table. Gen counts objects with a valid articulated output, and Pair counts the joint pairs we pair with finite predicted and ground-truth limits, which is where $E_B$ is defined; the gap between the columns is the continuous joints and the limit-free predictions that the restriction drops. For each matched ground-truth joint, $E_B$ uses the same ground-truth child-link geometry in the relevant comparison frame; predicted meshes never enter the norm. Particulate and ArtLLM use this geometry in their preprocessed frame with a geometry-only scale for canonical meters, while URDFormer uses the benchmark's camera registration.

\begin{table}[htbp]
\centering
\footnotesize
\setlength{\tabcolsep}{8pt}
\caption{Coverage of the 200-object evaluation in Table~\ref{tab:main}. Gen counts objects, and Pair counts the joint pairs with finite predicted and ground-truth limits. $^{\dagger}$Counts under the ground-truth-imputed limits of Table~\ref{tab:main}.}
\label{tab:coverage}
\begin{tabular}{@{}lrr@{}}
\toprule
Method & Gen & Pair \\
\midrule
Articraft           & 199 & 158 \\
Articulate AnyMesh$^{\dagger}$ &  96 &  79 \\
Articulate-Anything & 197 & 162 \\
ArtLLM              & 186 & 105 \\
Ditto               & 200 & 165 \\
Particulate         & 182 & 150 \\
SPARK               & 199 & 164 \\
URDFormer           & 154 & 125 \\
\bottomrule
\end{tabular}
\end{table}

\subsection{Common-set sensitivity}
\label{app:intersection}
The main table reports each method on the objects for which it produces a valid output. To check whether this coverage difference changes the quotient ranking, we repeat the comparison on the 67 objects for which all eight methods produce a valid output. Table~\ref{tab:intersection} reports means and percentile 95\% bootstrap intervals over these object IDs, with the resampling unit the object rather than the individual joint. Each score is reported where it is defined, $E_B$ on the pairs whose predicted and ground-truth limits are both finite, a count that varies by method, and the three compactified scores on all 67 objects, since they admit continuous ground-truth joints. The ordering is stable across all four scores at the ends and unresolved in the middle. Among the methods that predict their own motion limits, Articulate-Anything is in the top two under every score, joined there by Ditto under the two kinetic scores and by Particulate under the two split-norm scores. ArtLLM is last under all four, and under the three joint-level scores its interval is disjoint from every other method; between the remaining methods the spacing is below the resolution of the intervals, so we do not read a rank off it. Articulate AnyMesh$^{\dagger}$ is reported under the ground-truth-imputed limits of Table~\ref{tab:main}, so the orderings above cover only the methods that predict their own motion limits.

\begin{table}[htbp]
\centering
\scriptsize
\setlength{\tabcolsep}{3pt}
\caption{Sensitivity analysis on the 67-object intersection shared by all eight methods. Entries are means with percentile 95\% bootstrap intervals in brackets. $E_B$ is computed on the pairs with finite predicted and ground-truth limits; the other three scores are dimensionless and defined on all 67 objects. $^{\dagger}$Articulate AnyMesh predicts no motion limits, so its limits are imputed from the ground truth as in Table~\ref{tab:main} and its row reads as a geometry-only diagnostic rather than a competitive score.}
\label{tab:intersection}
\begin{tabular}{@{}lrrrr@{}}
\toprule
Method & $E_B$ (m) & $E_B^{\phi}$ & $E_\alpha^{\phi}$ & $E_\alpha^{\phi,\mathrm{tree}}$ \\
\midrule
Articraft           & 1.217\,[0.762, 1.927] & 0.564\,[0.420, 0.715] & 0.702\,[0.567, 0.846] & 0.539\,[0.423, 0.661] \\
Articulate AnyMesh$^{\dagger}$ & 1.039\,[0.548, 1.836] & 0.510\,[0.360, 0.671] & 0.590\,[0.450, 0.739] & 0.556\,[0.436, 0.682] \\
Articulate-Anything & 1.096\,[0.549, 2.044] & 0.412\,[0.309, 0.527] & 0.458\,[0.358, 0.567] & 0.303\,[0.223, 0.389] \\
ArtLLM              & 3.249\,[2.885, 3.616] & 1.046\,[0.954, 1.136] & 1.183\,[1.074, 1.283] & 0.582\,[0.468, 0.697] \\
Ditto               & 0.932\,[0.512, 1.644] & 0.460\,[0.338, 0.590] & 0.602\,[0.496, 0.717] & 0.544\,[0.428, 0.665] \\
Particulate         & 1.176\,[0.870, 1.500] & 0.515\,[0.409, 0.629] & 0.465\,[0.373, 0.556] & 0.422\,[0.335, 0.511] \\
SPARK               & 1.121\,[0.592, 2.064] & 0.461\,[0.349, 0.584] & 0.691\,[0.589, 0.798] & 0.498\,[0.393, 0.607] \\
URDFormer           & 1.409\,[0.983, 2.113] & 0.567\,[0.458, 0.685] & 0.730\,[0.636, 0.826] & 0.557\,[0.442, 0.677] \\
\bottomrule
\end{tabular}
\end{table}

\section{The ArticulateArena-20K library}
\label{app:library}
This section documents how the library is assembled, while \S\ref{sec:dataset} summarizes its composition and statistics.

\subsection{Construction}
\label{app:construction}
\textbf{Collected assets.} The library merges Articraft-10K \citep{zhou2026articraft}, Artiverse \citep{iliash2026artiverse}, Articulated-Object-Code \citep{gao2026lam}, PartNet-Mobility \citep{xiang2020sapien}, GAPartNet-AKB48, and a tail of smaller sources with a collection we generated ourselves. Figure~\ref{fig:dataset-overview}(c) breaks each supercategory down by source. The public sources arrive with different directory layouts, mesh formats, and metadata schemas. Each asset is re-packaged into a common layout with one URDF per object and its meshes resolved, and every object receives a preview rendered with a shared camera and lighting, so that downstream tooling touches a single format.

\textbf{Generated assets.} The generated collection consists of objects we produced with several generative models and, in part, by hand. Each object is authored directly as parts connected by joints, so its kinematics is specified at creation rather than annotated afterwards, and it enters the library through the same normalization and verification as every collected asset.

\textbf{Categorization.} Every object carries a category label, read from per-object metadata where the source provides one and derived from the descriptive object name for Articraft-10K by truncating at connectives and dropping generic modifiers. Labels are normalized to a shared lowercase form so that variant spellings of one class merge across sources, yielding the 628 categories, which we group into the 15 supercategories of Table~\ref{tab:supercats} (Appendix~\ref{app:supercats}). An LLM assisted this step by proposing a label for every object from its name, description, and renders, and the proposals were reviewed and corrected by the authors, so every final label passed a human check.

\textbf{Quality control.} We audit the library in a browser-based viewer that renders each object, drives every movable joint through its limits, and records per-object issue reports of missing textures, broken meshes or joints, and wrong orientation or scale. We use these reports to repair or exclude assets.

\subsection{Supercategory taxonomy}
\label{app:supercats}
The categories of \textbf{ArticulateArena-20K} are grouped into 15 supercategories. Table~\ref{tab:supercats} summarizes the mapping, and Figure~\ref{fig:longtail} shows the long-tailed category distribution. The full category list ships with the library.

\makeatletter
\setlength{\@fptop}{0pt}\setlength{\@fpsep}{12pt}\setlength{\@fpbot}{0pt plus 1fil}
\makeatother
\begin{figure}[!htbp]
\centering
\includegraphics[width=0.70\textwidth]{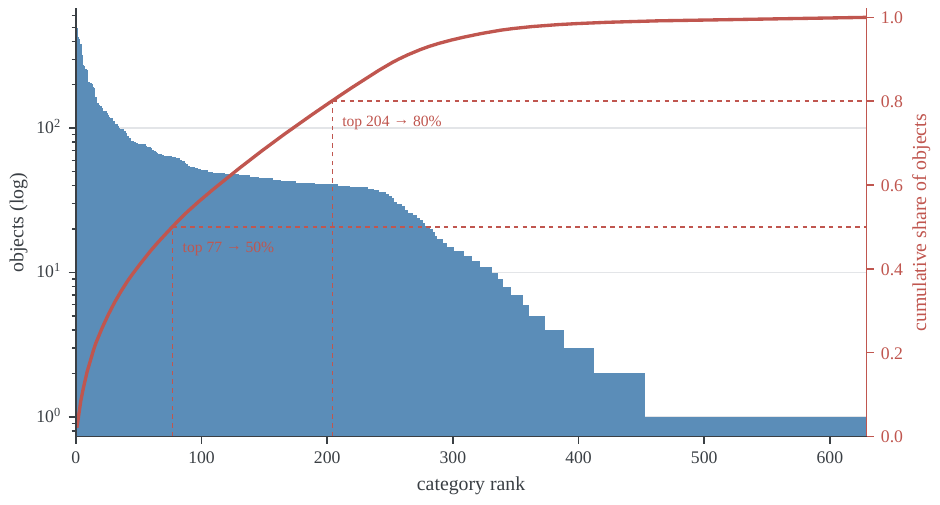}
\setlength{\abovecaptionskip}{4pt}
\caption{Category sizes of \textbf{ArticulateArena-20K} by rank (bars, log scale) and the cumulative share of objects (curve). The largest 77 categories hold half of the objects and the largest 204 hold 80\%.}
\label{fig:longtail}
\end{figure}

\begin{table}[!htb]
\centering
\footnotesize
\renewcommand{\arraystretch}{0.95}
\setlength{\abovecaptionskip}{4pt}
\setlength{\belowcaptionskip}{4pt}
\setlength{\tabcolsep}{4pt}
\caption{Supercategories of \textbf{ArticulateArena-20K}, ordered by size, with the three largest categories of each and their object counts.}
\label{tab:supercats}
\begin{tabular}{@{}lrr>{\raggedright\arraybackslash}p{0.5\textwidth}@{}}
\toprule
Supercategory & Objects & Categories & Largest categories \\
\midrule
Storage Furniture & 2{,}689 & 31 & nightstand (490), chest\_of\_drawers (430), storage\_furniture (381) \\
Mechanisms & 2{,}324 & 80 & branching\_tree (77), prismatic\_slider (63), yaw\_axis\_module (62) \\
Kitchen \& Cooking & 1{,}983 & 48 & refrigerator (270), oven (210), dishwasher (190) \\
Tools & 1{,}890 & 125 & scissors (192), pliers (106), stapler (103) \\
Electronics & 1{,}789 & 43 & laptop (415), remote\_control (138), switch (128) \\
Architectural Fixtures & 1{,}645 & 48 & faucet (254), door (207), window (150) \\
Mobility & 1{,}262 & 70 & cart (82), kick\_scooter (79), car (54) \\
Tables \& Seating & 1{,}167 & 33 & desk (321), table (132), chair (131) \\
Containers & 1{,}080 & 22 & bottle (163), bucket (143), trash\_can (117) \\
Lighting & 806 & 20 & floor\_lamp (112), task\_lamp (82), ceiling\_light\_fixture (78) \\
Infrastructure & 805 & 26 & ferris\_wheel (80), clock\_tower (51), elevator (50) \\
Home Appliances & 798 & 20 & fan (98), washing\_machine (88), ceiling\_fan (77) \\
Personal \& Household Items & 756 & 22 & eyeglasses (272), globe (118), clock (78) \\
Robotics & 517 & 15 & robotic\_arm (92), opposed\_twinslide\_gripper (47), robotic\_leg (47) \\
Recreation & 466 & 25 & fidget\_toy (54), dj\_equipment (53), playground\_swing (42) \\
\midrule
Total & 19{,}977 & 628 & \\
\bottomrule
\end{tabular}
\end{table}

\end{document}